\title[Physical Derivatives]{Physical Derivatives: Computing policy gradients by physical forward-propagation}
\author{%
 \Name{Arash Mehrjou} \Email{arash.x.mehrjou@gsk.com}\\
 \addr GlaxoSmithKline, Artifical Intelligence \& Machine Learning\\
 \addr Max Planck Institue for Intelligent Systems, T\"ubingen, Germany\\
 \addr ETH Z\"urich, Z\"urich, Switzerland
%  \addr GlaxoSmithKline, Artificial Intelligence \& Machine Learning\\
%     Max Planck Institute for Intelligent Systems\\
%     Eidgenössische Technische Hochschule Zürich (ETH)
 \AND
 \Name{Ashkan Soleymani} \Email{ashkanso@mit.edu}\\
 \addr Massachusetts Institute of Technology, Cambridge, US \\
 \addr Max Planck Institue for Intelligent Systems, T\"ubingen, Germany
%  \addr Massachusetts Institute of Technology%
  \AND
 \Name{Stefan Bauer} \Email{baue@kth.se}\\
 \addr KTH, Stockholm, Sweden\\
 \addr GlaxoSmithKline, Artifical Intelligence \& Machine Learning
%  \addr  KTH Royal Institute of Technology\\
%  GlaxoSmithKline, Artificial Intelligence \& Machine Learning\\
 \AND
  \Name{Bernhard Sch\"olkopf} \Email{bs@tuebingen.mpg.de}\\
  \addr Max Planck Institue for Intelligent Systems, T\"ubingen, Germany\\
  \addr ETH Z\"urich, Z\"urich, Switzerland
%  \addr Max Planck Institute for Intelligent Systems%
}
\begin{document}

\maketitle

\begin{abstract}%
     Model-free and model-based reinforcement learning are two ends of a spectrum. Learning a good policy without a dynamic model can be prohibitively expensive. Learning the dynamic model of a system can reduce the cost of learning the policy, but it can also introduce bias if it is not accurate. We propose a middle ground where instead of the transition model, the sensitivity of the trajectories with respect to the perturbation of the parameters is learned. This allows us to predict the local behavior of the physical system around a set of nominal policies without knowing the actual model. We assay our method on a custom-built physical robot in extensive experiments and show the feasibility of the approach in practice. We investigate potential challenges when applying our method to physical systems and propose solutions to each of them.%
\end{abstract}

\begin{keywords}%
  Task-agnostic Reinforcement Learning, Reinforcement Learning, Control Theory%
\end{keywords}

\begin{figure}[h!]
    \centering
    \begin{subfigure}[t]{1in}
        \centering
        \includegraphics[width=0.97in]{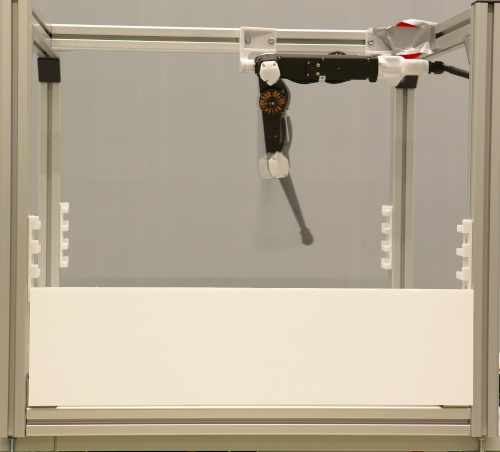}
        \caption{}\label{fig:1a}        
    \end{subfigure}
    \begin{subfigure}[t]{1in}
        \centering
        \includegraphics[width=1in]{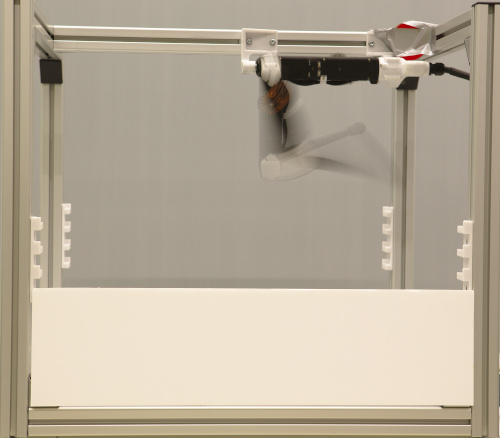}
        \caption{}\label{fig:1b}    
   \end{subfigure}
   \begin{subfigure}[t]{1in}
        \centering
        \includegraphics[width=1in]{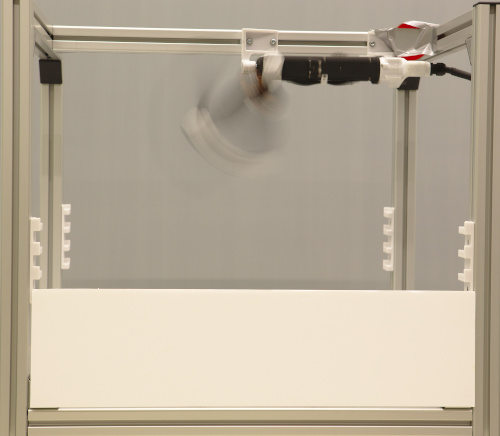}
        \caption{}\label{fig:1c}    
   \end{subfigure}
   \begin{subfigure}[t]{1in}
        \centering
        \includegraphics[width=1in]{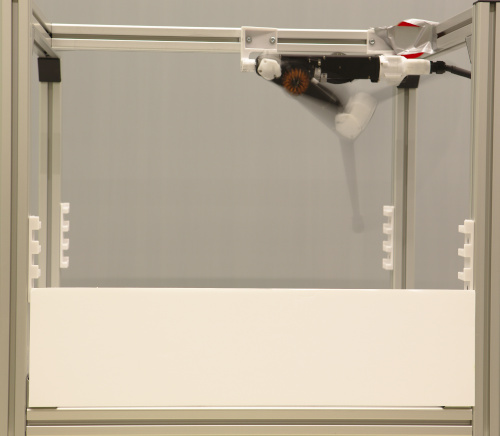}
        \caption{}\label{fig:1d}    
    \end{subfigure}
    \caption{Physical finger platform in action with different policies.}\label{fig:platform_in_motion}
\end{figure}

\section{Introduction}

Traditional reinforcement learning crucially relies on \emph{reward}~\citep{sutton2018reinforcement}. However, reward binds the agent to a certain task for which the reward represents success. Aligned with the recent surge of interest in unsupervised methods in reinforcement learning~\citep{baranes2013active, bellemare2016unifying, gregor2016variational, houthooft2016variational, gupta2018unsupervised, hausman2018learning, pong2019skew, laskin2020curl, laskin2021urlb, he2021wasserstein} and previously proposed ideas~\citep{schmidhuber1991curious, schmidhuber2010formal}, we argue that there exist properties of a dynamical system which are not tied to any particular task, yet highly useful, leveraging them can help solve other tasks more efficiently. This work focuses on the sensitivity of the produced trajectories of the system with respect to the policy so-called~\emph{Physical Derivatives}. The term~\emph{physical} comes from the fact that it uses the physics of the system rather than any idealized model. We learn a map from the directions in which policy parameters change to the directions in which every state of the trajectory changes. In general, our algorithm learns the Jacobian matrix of the system at every time step through the trajectory. The training phase consists of physically calculating directional derivatives by the finite difference after applying perturbed versions of a nominal policy (a.k.a. controller). After training, the learned directional derivatives are used to guide the controller to achieve the desired behaviour. Due to the difficulty of computing the Jacobian matrix by the finite difference in higher dimensions, we use random controllers joint with probabilistic learning methods to obtain a robust estimate of the Jacobian matrix at each instant of time along a trajectory. The generalization to unseen perturbations is possible because the trajectories produced by physical systems live on an intrinsically low-dimensional manifold and change slowly with respect to perturbations in the system~\citep{koopman1931hamiltonian}. This assumption holds as long as the system is not chaotic or close to a bifurcation condition~\citep{khalil2002nonlinear}.

\subsection{Related works}
\label{sec:related_works}

A truly intelligent agent must develop some sort of~\emph{general competence} that allows it to combine primitive skills to master a range of tasks, not only a single task associated with a specified reward function. The major part of such competence comes from unsupervised experiences. Animals use a similar competence to quickly adapt to new environments~\citep{weng2001autonomous}. and function efficiently soon after birth before being exposed to massive supervised experience~\citep{zador2019critique}. Due to its generality, such basic skills can be inherited over generations rather than being learned from scratch~\citep{weightagnostic2019}. Despite traditional RL that the learning is driven by an~\emph{extrinsic} reward signal,~\emph{intrinsically} motivated RL concerns task-agnostic learning~\citep{sontakke2021causal,sontakke2021galilai}. Similar to animals' babies~\citep{touwen1992development}, the agent may undergo a developmental period in which it acquires reusable modular skills~\citep{kaplan2003motivational,weng2001autonomous, tian2021unsupervised}, such as curiosity and confidence~\citep{schmidhuber1991curious,kompella2017continual, riedmiller2018learning, burda2018large, mirza2020physically, groth2021curiosity, huang2022unified}. Another aspect of such general competence is the ability of the agent to remain~\emph{safe} during its learning and deployment period~\citep{garcia2015comprehensive}. In physical systems, especially continuous control,~\emph{stability} is a major aspect of safety that implies states of the system converge to some invariant sets or remain within a certain bound~\citep{lyapunov1992general} which could be enforced as a prior in system identification~\citep{mehrjou2020learning} or realized as a target in control tasks~\citep{mehrjou2020neural}. Control theory often assumes the model of the system known to guarantee stability~\citep{khalil2002nonlinear}. In the absence of the model, model-based RL learns the model along with the policy. Hence, learning the transition model to predict the states in the future can be another intrinsic reward (See~\Cref{sec:detailed_literature_review} for a detailed literature review).

\subsection{Preliminaries}
We consider a closed-loop dynamical system represented by the state vector $\xb\in\Xcal\subseteq\RR^d$ and the policy function $\ub=\pi(\xb;\thetab)$ that emits the $q-$dimensional control signal $\ub\in\Ucal\subseteq\RR^q$. Let $r:\Xcal \times \Ucal\to\RR$ denote the reward function and $R:\Pi (\Theta)\to\RR$ be the cumulative reward (\emph{return}). For parametric policies, the space of feasible parameters $\Theta$ has a one-to-one correspondence to the policy space $\Pi$. The agent who takes on the policy $\pi$ from state $\xb_0$ produces the trajectory $\Tcal\in\TT$ where $\TT$ is the space of possible trajectories. The expected return becomes a function of the policy as $J(\pi_\thetab) = \EE_\Tcal\{R(\Tcal)\}$ where the expectation is taken with respect to the probability distribution $P(\Tcal | \pi_\thetab)$. Traditionally in reinforcement learning, the goal is to perturb the policy to improve the expected return: 
\begin{equation}
  \label{eq:policy_parameter_update_by_return}
  \thetab_{t+1} = \thetab_t + \alpha \left .\frac{\partial J(\pi_\thetab)}{\partial \thetab}\right |_{\thetab=\thetab_t}.
\end{equation}
The gradient $\partial J(\pi_\thetab) / \partial \thetab$ can be written as an integral
\begin{equation}
  \label{eq:expected_return_derivative}
  \frac{\partial J(\pi_\thetab)}{\partial \thetab} = \int_\TT \frac{\partial p(\Tcal|\pi_\thetab)}{\partial \thetab}R(\Tcal)\diff\Tcal
\end{equation}
which is hard compute in practice. In model-free RL, the policy is updated so that the mode of $p(\Tcal|\pi_\thetab)$ aligns with the modes of $R(\Tcal)$. In model-based RL, the dynamical model that generates the trajectory $\Tcal$ is learned and then used to estimate $\frac{\partial J(\pi_\thetab)}{\partial \thetab}$. In this work, we take a middle-ground approach to estimate an in-between quantity and show how the estimated map can guide the policy towards desired behaviour.

\subsection{What is Physical Derivative}
In this paper, we investigate the feasibility of learning a less explored unsupervised quantity, the so-called~\emph{Physical Derivative} which is computed directly from the physical system. In abstract terms, we perturb the policy and learn the effect of its perturbation on the resulting trajectory. The difference from traditional RL, whose algorithms are based on~\Cref{eq:policy_parameter_update_by_return}, is the absence of a specified reward function. Instead, we generate samples from $\partial p(\Tcal|\pi_\thetab)/\partial \thetab$ of~\Cref{eq:expected_return_derivative} that makes it possible to compute $\partial J(\pi_\thetab)/\partial \thetab$ for an arbitrary return function $R$. If the exact model of the system is known, control theory has a full set of tools to intervene on the system with stability and performance guarantees. When the system is unknown, one could identify the system as a preliminary step followed by a normal control synthesis process from control theory~\citep{ljung2001system}. Otherwise, the model and the policy can be learned together in a model-based RL~\citep{sutton1996model} or in some cases adaptive control~\citep{sastry2011adaptive}. We argue that learning physical derivatives is a middle ground. It is not model-based in the sense that it does not assume knowing the exact model of the system. For example, assume we are interested in going from the current trajectory $\Tcal(\thetab)$ to the target trajectory $\Tcal^*$. The distance  between these trajectories is reduced by perturbing the policy parameters in the direction $\minus\partial \lVert \Tcal(\thetab) - \Tcal^*\rVert / \partial \thetab$. This direction is already available since we have direct access to $\partial \Tcal(\thetab)/\partial \thetab$ as a physical derivative.

{\it \textbf{Our contributions}--- }In summary, the key contributions of the current paper are as follows:
\begin{itemize}
    \item A method to generate training pairs to learn the map from the policy perturbations to the resulting changes in the trajectories.
    \item Learning the above map as a probabilistic function and showing that it generalizes to unseen perturbations in the policy.
    \item Use the inverse of the above map to perturb the policy in the desired direction to achieve certain goals without conventional RL methods.
    \item Use a physical custom-built robotic platform to test the method and propose solutions to deal with the inherent issues of the physical system to ensure the practicality of the method (see~\Cref{fig:platform_in_motion} for images of the platform and ~\Cref{sec:physical_platform} for technical details).
    \item The supplementary materials for the paper, including full-version technical report (referred to as the Appendix in the paper), code, and the videos of the robot in action, can be found in~\url{https://sites.google.com/view/physicalderivatives/}.
\end{itemize}

\section{Applications of physical derivatives}
\label{app:applications}

Supposing we know how the states of a trajectory change as a result of a change in the policy parameters, the policy can be easily updated to push the trajectory towards a desired one. For example, assume we are interested in going from the current trajectory $\Tcal(\thetab)$ to the target trajectory $\Tcal^*$. The distance  between these trajectories can get minimized by perturbing the policy parameters in the direction $\minus\partial \lVert \Tcal(\thetab) - \Tcal^*\rVert / \partial \thetab$. This direction is already available since we have estimated $\partial \Tcal(\thetab)/\partial \thetab$ as a physical derivative. As an exemplary case, we show this application of our method in practice in~\Cref{sec:experiments}. Other applications of physical derivatives are in~\emph{robust control} and~\emph{safety}. In both cases, the physical derivative allows us to predict the behaviour of the system if the policy changes in a neighbourhood around a nominal policy. Then, it is possible to make sure that some performance or safety criteria will not be violated for the local perturbation in the policy. As a concrete example, for an autonomous driving system, there can be a calibration phase during which physical derivatives of the car are estimated by perturbing the controller parameters around different nominal policies which are likely to occur on real roads. The calibration must be done in a safe condition and before deploying the system. When deployed, the estimated physical derivatives can be used to predict the effect of a change of the policy on the behaviour of the system and neutralize the change if it would move the car towards unsafe regions of its state space. The command that changes the policy can be issued by a high-level controller (e.g., guidance system), and the safety is confirmed by a low-level mechanism through physical derivatives. This work focuses on the concept and the introduction of physical derivatives and direct applications would go significantly beyond the scope of this work. In the~\Cref{app:applications}, we provide a more detailed description of the use of physical derivatives in different applications (\emph{robust control}, \emph{safety}, and \emph{adversarial system identification}).

\section{Estimating Physical Derivatives}
\label{sec:method}
In this section, we describe our pipeline to estimate the physical derivatives and our proposed solutions to the inevitable challenges that are likely to occur while working with a real physical robot. We are interested in $\partial \Tcal / \partial \thetab$ which denotes how a small change in the parameters $\thetab$ of the controller results in a different trajectory produced by the system. We normally consider a finite period of time $[0, T]$ and the trajectory is an ordered list of states $\Tcal=[\xb_0, \xb_1, \ldots, \xb_T]$ where the subscript shows the time step. Therefore, having $\partial \Tcal / \partial \thetab$ is equivalent to having $\partial \xb_t / \partial \thetab$ for every $t\in\{1, \ldots, T\}$. Notice that the initial state $\xb_0$ is chosen by us. Hence we can see it either as a constant or as a changeable parameter in $\thetab$. We kept it fixed in our experiments.

Assume $\xb_t\in \RR^d$ and $\thetab\in \RR^m$. Hence, $\nabla_{\thetab}\xb_t = \partial \xb_t / \partial \thetab \in \RR^{d\times m}$ 
where the $t^\mathrm{th}$ row of this matrix is $\nabla_{\thetab}x_{it} = (\partial x_{it} / \partial \thetab)\tran\in \RR^m$ showing how the $i^\mathrm{th}$ dimension of the state vector changes in response to a perturbation in $\thetab$. The directional derivative of $x_{it}$ in the direction $\delta\thetab$ is defined as
\begin{equation}\label{eq:directional_derivative}
    \nabla_{\thetab}^{\delta\thetab} x_{it} = \langle\nabla_{\thetab} x_{it}, \frac{\delta \thetab}{|\delta \thetab|}\rangle.
\end{equation}
If~\eqref{eq:directional_derivative} is available for $m$ linearly independent and orthonormal directions, $\{\delta\thetab^{(1)}, \delta\thetab^{(2)}, \ldots, \delta\thetab^{(m)}\}$, the directional derivative along an arbitrary $\delta \thetab$ can be approximated by
\begin{equation}\label{eq:decomposed_directional_derivative}
    \nabla_{\thetab}^{\delta\thetab} x_{it} = \sum_{j=1}^m c_j \langle \nabla_{\thetab} x_{it}, \delta \thetab^{(j)}\rangle
\end{equation}
where $c_j = \langle\delta\thetab, \delta\thetab^{(j)}\rangle$ is the coordinates of the desired direction in the coordinate system formed by the orthonormal bases. 
% Notice that this approximation is only valid for $\delta\thetab\to 0$. Hence, the perturbation in $\thetab$ must be relatively small. 

  \begin{wrapfigure}{l}{6cm}
     \begin{tabular}{@{}cc@{}}
       \includegraphics[width=0.2\textwidth]{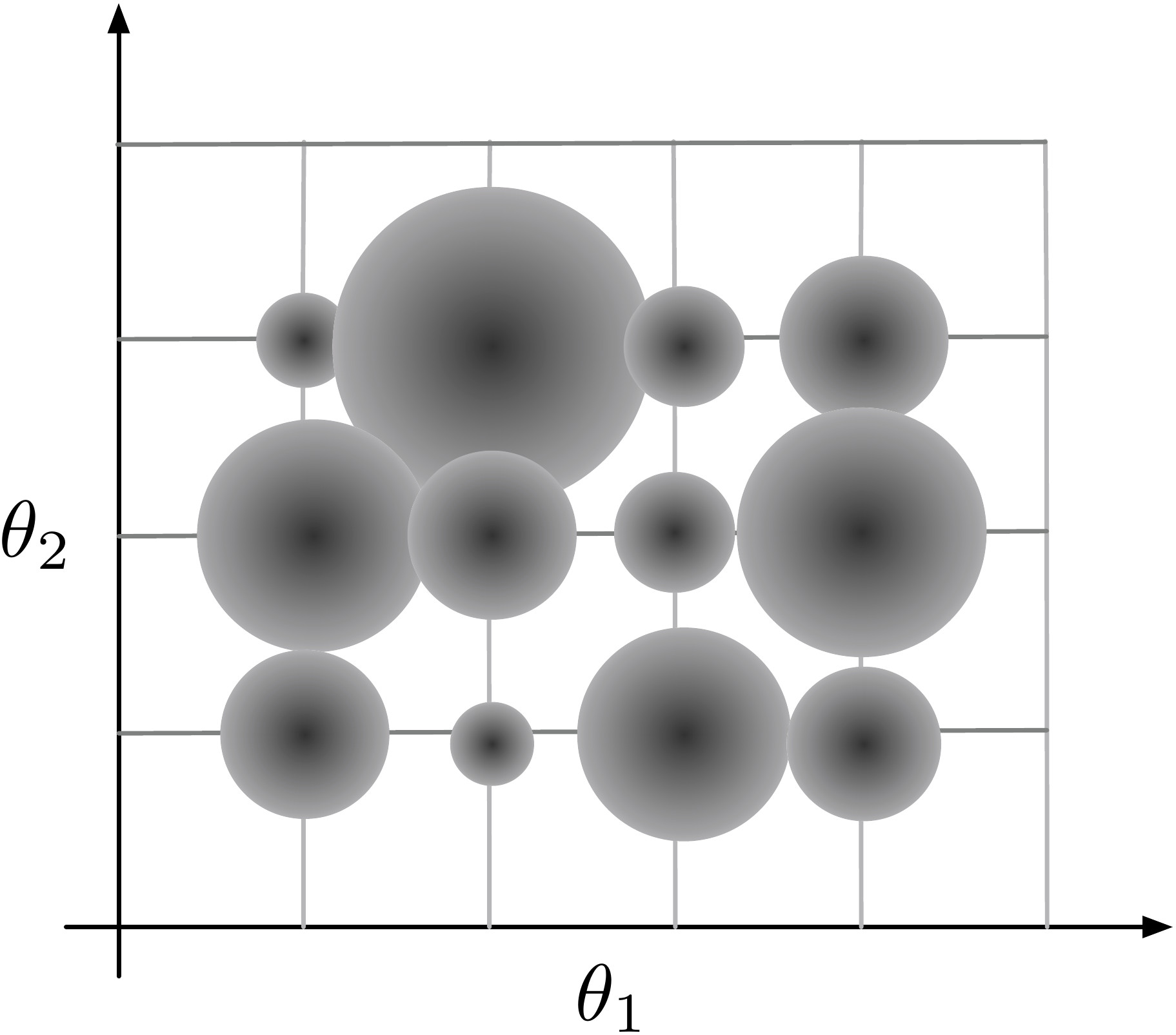}&% Dummy image replacement
  \includegraphics[width=0.2\textwidth]{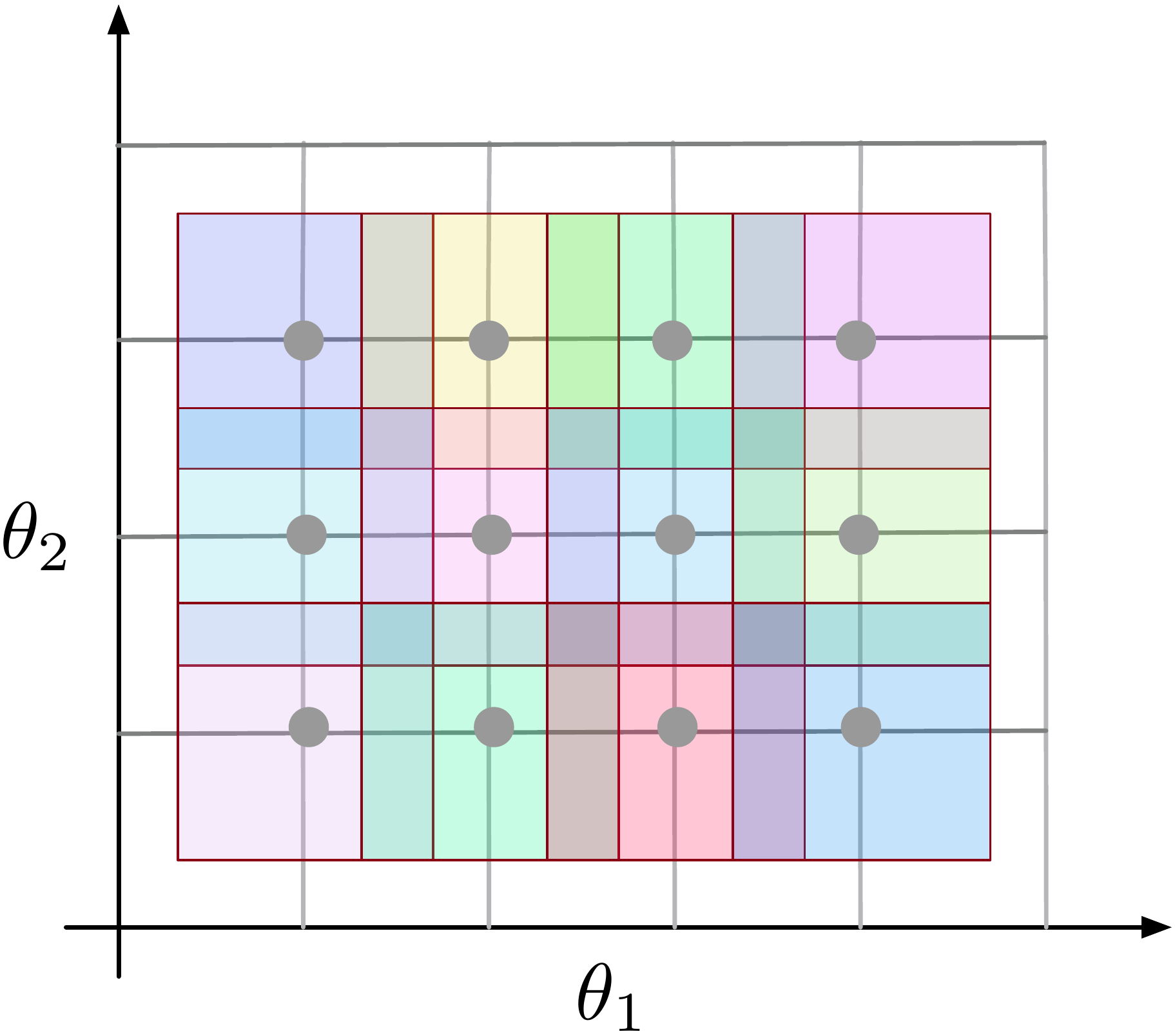}\\% Dummy image replacement
     \end{tabular}
       \caption{Gaussian (left) and uniform (right) perturbation examples.}
       \label{fig:shaking}
       %\vspace{-1.2cm}
 \end{wrapfigure}

\if(0)
\begin{figure}[t!]
    \centering
    \begin{subfigure}[t]{0.2\textwidth}
        \centering
        \includegraphics[width=\textwidth]{figs/experiments/shakings/normal_shaking.pdf}
        \caption{Gaussian perturbation\ar{emphasize the size}}\label{fig:normal_shaking}        
    \end{subfigure}
   \begin{subfigure}[t]{0.2\textwidth}
        \centering
        \includegraphics[width=\textwidth]{figs/experiments/shakings/uniform_shaking.pdf}\caption{Uniform perturbation}
        \label{fig:uniform_shaking}    
    \end{subfigure}
    \caption{ perturbation examples}\label{fig:shakings}
\end{figure}
\fi

In practice, $m$ directions $\delta \thetab^{(j)}$ can be randomly chosen or can be along some pre-defined axes of the coordinate system. To compute $\langle\nabla_{\thetab}x_{it}, \delta \thetab^{(j)}\rangle$, the nominal policy parameters $\thetab$ are perturbed by $\delta \thetab^{(j)}$ as $\thetab^{(j)}\leftarrow\thetab+\delta \thetab^{(j)}$ and the derivative is computed as
\begin{equation}
  \label{eq:finite_difference_directional_derivative}
  \langle\nabla_{\thetab}x_{it}, \delta \thetab^{(j)}\rangle = \lim_{h\to 0}\frac{x_{it}(\thetab+h \delta\thetab^{(j)}) - x_{it}(\thetab)}{h}.
\end{equation}
This quantity is often approximated by finite difference where $h$ takes a small nonzero value. By perturbing the parameters $\thetab$ along $m$ orthonormal directions $\delta\thetab^{(j)}$ and computing the approximate directional derivative by~\eqref{eq:finite_difference_directional_derivative}, $\nabla_{\thetab}^{\delta\thetab} x_{it}$ can be computed along every arbitrary direction $\delta \thetab$, meaning that we can compute $\nabla_\thetab x_{it}$ by evaluating it along any direction, which is the aim of this paper.

In the matrix form for $\xb\in\RR^d$, we can compute $\nabla^{\delta \thetab^{(j)}}_\thetab \xb=[\nabla^{\delta \thetab^{(j)}}_\thetab x_1, \nabla^{\delta \thetab^{(j)}}_\thetab x_1, \ldots, \nabla^{\delta \thetab^{(j)}}_\thetab x_d]\tran$
in a single run by computing~\eqref{eq:finite_difference_directional_derivative} for all $d$ dimensions of the states. Let's define
\begin{equation}
    \label{eq:gradients_in_all_directions}
    \Delta_\thetab \xb \triangleq [\nabla^{\delta \thetab^{(1)}}_\thetab \xb, \nabla^{\delta \thetab^{(2)}}_\thetab \xb, \ldots, \nabla^{\delta \thetab^{(m)}}_\thetab \xb ]
\end{equation}
where $\Delta_\thetab \xb\in \RR^{d \times m}$ and let $\Lambda = [\delta \thetab^{(1)}, \delta \thetab^{(2)}, \ldots, \delta \thetab^{(m)}]$. Therefore, if $\Delta_\thetab ^{\delta \thetab} \xb$ shows the directional derivative of $\xb$ along $\delta \thetab$, we can write it as:
\begin{equation}
    \label{eq:directional_derivative_matrixform}
    \nabla_\thetab ^{\delta \thetab} \xb = \Delta_\thetab \xb (\Lambda\tran \delta \thetab)
\end{equation}
which is a vectoral representation of~\Cref{eq:directional_derivative}. Even though the linear formula of~\Cref{eq:directional_derivative_matrixform} requires only $m$ directional derivatives, it has two major downsides. First, it does not give a clear way to incorporate more than $m$ training directional physical derivatives. Second, the linear approximation remains valid only for very small $\delta\thetab$. We propose to use Gaussian Process (GP) as a nonlinear probabilistic function approximator~\citep{rasmussen2003gaussian} to capture the maps $\hat{g}_t$ defined as
\begin{align}
    &\hat{g}_t:\Theta\to \Xcal \\
    &\hat{g}_t(\delta \thetab) = \delta \xb
\end{align}
where subscript $t$ shows the function that maps $\delta\thetab$ to the change of the states $\delta\xb_t$ at time step $t$. We considered distinct functions for every time step. Taking into account the commonality among the function approximators corresponding to different time steps is deferred to future research. 

The required training data to learn these maps is produced by perturbing the parameters of the controller and recording the produced trajectories. We propose two perturbation methods called \emph{Gaussian} and~\emph{Uniform} as illustrated in~\Cref{fig:shaking}. The effect of each of these sampling strategies is presented in~\Cref{sec:experiments}. 

A major blockage against estimating physical derivatives is the inherent noise of the physical systems that may be mixed with the intentional perturbations which are applied to collect data to estimate physical derivatives. In the following section, we describe this issue concretely and elaborate on our solution to deal with it.

\section{Tackling inherent noise}
\label{sec:real_challenges}
Inherent noise refers to a change in the system trajectory that is not caused by the intentional intervention on the controller. In our finger platform, we observed two different major sources of noise that are likely to occur in other physical systems too. We call them \emph{temporal} and \emph{spatial} noise for the reasons that come in the following.

\paragraph{Temporal noise.} The temporal noise represented by $\nbb$ affects trajectories by shifting them in time
\begin{equation}
  \xb_t \leftarrow \xb_{t + \nbb}\;\mathrm{for}\; t=0,1,\ldots,T.
\end{equation}
Notice that the absence of subscript $t$ in $\nbb$ shows that this noise is not time-dependent, i.e., the time shift does not change along the trajectory as time proceeds.

\paragraph{Spatial noise.} The trajectories affected by spatial noise cannot be aligned with each other by shifting forward or backward in time. We can model this noise as a state-dependent influence on the state of the system at every time step.
\begin{equation}
  \xb_t \leftarrow \xb_t + \nbb_{\xb_t}
\end{equation}

The following definition makes the distinction more concrete.
\begin{definition}
  Consider two trajectories $\Tcal^{(1)}(t)$ and $\Tcal^{(2)}(t)$ as two temporal signals. Assume $S_{t_\circ}$ is the shift-in-time operator defined as
\begin{equation}
  S_{t_\circ} \Tcal(t)= \Tcal(t+t_\circ)
\end{equation}
for an arbitrary function of time $\Tcal(t)$. We say $\Tcal^{(2)}(t)$ is temporally noisy version of $\Tcal^{(1)}(t)$ if
\begin{equation}
  \exists t_\circ\in\RR\;s.t.\;  \lVert \Tcal^{(2)} - S_{t_\circ}\Tcal^{(1)} \rVert_1\leq \epsilon
\end{equation}
where $\epsilon$ is a hyper-parameter threshold that reflects our prior confidence about the accuracy of the motors, joints, physical and electrical elements (in general construction process) of the robot. On the other hand, $\Tcal^{(2)}$ is called a spatially noisy version of $\Tcal^{(1)}$ if
\begin{equation}
    \nexists t_\circ\in\RR\;s.t.\;  \lVert \Tcal^{(2)} - S_{t_\circ}\Tcal^{(1)} \rVert_1\leq \epsilon
\end{equation}
\end{definition}

The temporal noise is coped with correlation-based delay estimation (see~\Cref{sec:temporal_noise}) and the spatial noise is dealt with voxelization (see~\Cref{sec:spatial_noise}). 

After dealing with the challenge of inherent noise, we pursue the main goal of this paper that is estimating $\partial \Tcal / \partial \thetab$ directly from the physical system. In the following, we investigate the use of the different types of controllers to emphasize the extent of applicability of the proposed method.

\section{Experiments}
\label{sec:experiments}
In this section, we show how physical derivatives can be estimated in practice through several experiments. Notice that our work is different from computing gradients around the working point of a system by finite-difference. We aim to collect samples from such gradients by perturbing a grid of nominal values of the policy parameters and then generalize to unseen perturbations by Gaussian process as a probabilistic regression method. 
The experiments are designed to show each challenge separately and the efficacy of our proposed solution to it. Due to space constraints, details of the physical platform can be found in section \ref{sec:physical_platform} in the Appendix. See\footnote{\url{https://sites.google.com/view/physicalderivatives/}} for videos of the robot while collecting data for different experiments and more backup materials.
\subsection{Linear open-loop controller}
\label{sec:linear_openloop_controller}
As a simple yet general policy, in this section, we consider an open-loop controller , which is a linear function of time. The policy $\ub_t = [u_{1t}, u_{2t}, u_{3t}]$ constitutes the applied torques to the three motors $\{m_1, m_2, m_3\}$ of the system and is assigned as
\begin{equation}
    u_{it} = w_i t + b_i\;\;\;\mathrm{for}\;\;\; i=1,2,3
\end{equation}

Notice that the torque consists of two terms. The first term $w_i t$ grows with time and the second term remains constant. The controller has $6$ parameters in total denoted by $\thetab$.
The task is to predict $\nabla_\thetab \xb_t$ for every $t$ along the trajectory. In the training phase, the training data is obtained via perturbation as described in~\Cref{sec:method}. 

\Cref{fig:quivers} shows examples of nominal trajectories + trajectories produced by the perturbed controller and the computed derivatives. The arrows are plotted as they originate from the perturbed trajectories only for easier distinction. Each arrow corresponds to the change of the states at a certain time step on the source trajectory as a result of perturbing the policy. Each figure corresponds to a pair of nominal values of $\{w, b\}$ for the linear open-loop controller. See~\Cref{fig:quivers} for examples.

\begin{figure}[t!]
    \centering
    \includegraphics[width=0.95\textwidth]{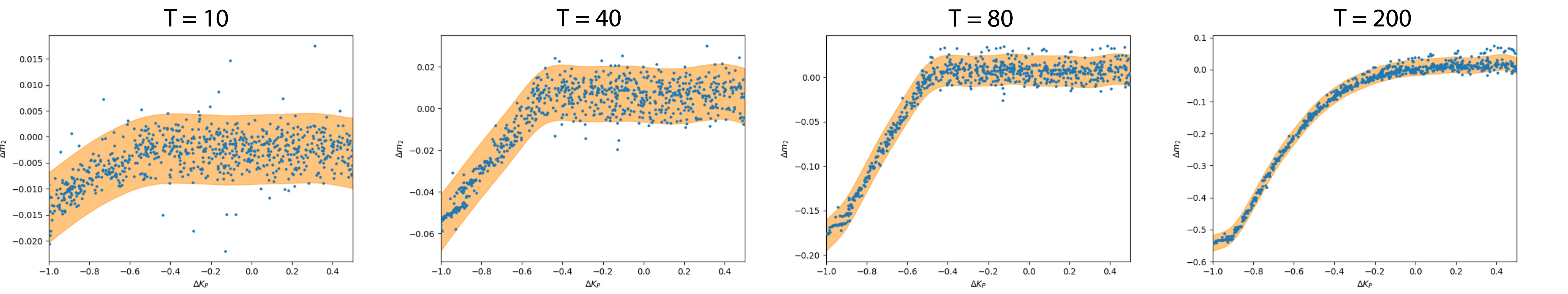}
    \caption{The time evolution of the GP approximated $\hat{g}_t$ for a PD feedback controller at some exemplary time instances}
    \label{overview_pd_normal_dim2_voxel=0}
\end{figure}

\begin{figure}[t!]
    \centering
    \includegraphics[width=0.95\textwidth]{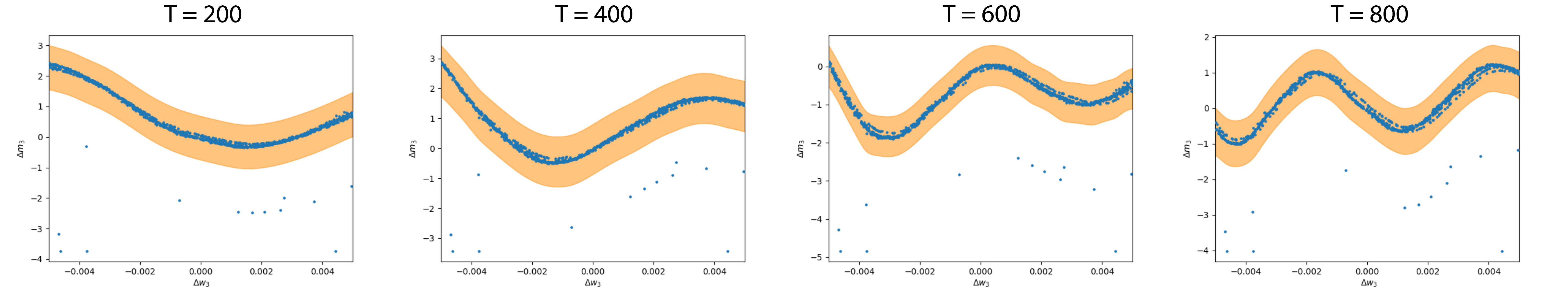}
    \caption{The time evolution of the GP approximated $\hat{g}_t$ for a nonlinear sinusoidal open-loop controller at some exemplary time instances.}
    \label{overview_sine_joint3_uniform_dim3_voxel=0}
\end{figure}

% \begin{figure}[t!]
%     \centering
%     \begin{subfigure}[b]{0.45\textwidth}
%         \centering
%         \includegraphics[width=1.0\textwidth]{}
%         \caption{The prediction e}        
%     \end{subfigure}
%     \begin{subfigure}[b]{0.45\textwidth}
%         \centering
%         \includegraphics[width=1.0\textwidth]{}
%         \caption{The interpolation error of derivatives over time}        
%     \end{subfigure}
%     \caption{The prediction errors for different voxel sizes.}\label{fig:voxel_size_prediction_erro}
% \end{figure}

\begin{table}[htb!]
    \centering
    \caption{The aggregate performance of our method to predict physical derivatives in unseen directions of perturbations to the parameters. $\langle\cdot\rangle$ shows the time average. The first column is the normalized time-averaged MSE. The second column is the time-averaged GP score (closer to 1 is better. See~\Cref{sec:gp_score} for definition). The third column is the time-averaged misalignment between derivatives. Every experiment is repeated for 10 voxel sizes and the values are chosen for the best voxel size. N: Gaussian sampling, U: uniform sampling.}
    \begin{tabular}{lccc}
    \toprule
    Task          & $\langle\mathrm{MSE}\rangle$ & $\langle\mathrm{Score}\rangle$ & $\langle\mathrm{\cos \alpha}\rangle$ \\
\midrule
    PD controller (N) &   0.00871  &    0.8991   &   0.9492 \\
    PD controller (U) &  0.0018   &   0.9841    &  0.9723\\
Sine 2 joints (U) &  0.0368   &   0.7992    &  0.9513\\
Sine 2 joints (N) &  0.0796   &   0,6696    &   0,9553 \\
    \bottomrule
    \end{tabular}
\vspace{0.3cm}

\label{tab:overall}
%\vspace{-0.8cm}
\end{table}

\begin{figure}[h!]
    \centering
    \begin{subfigure}[t]{0.23\textwidth}
        \centering
        \includegraphics[width=0.97\textwidth]{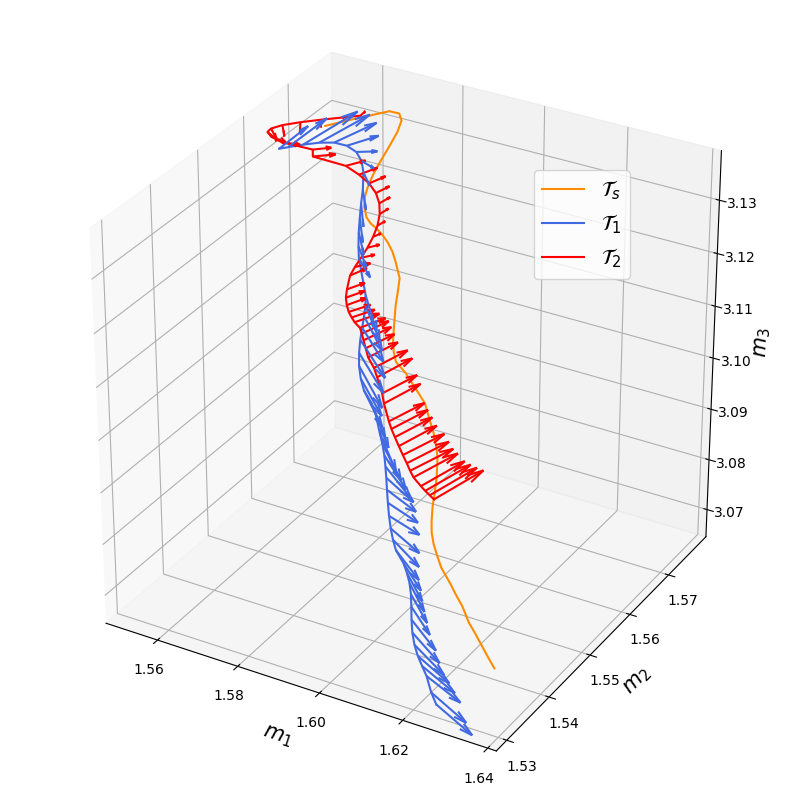}
        \caption{}        
    \end{subfigure}
    \begin{subfigure}[t]{0.23\textwidth}
        \centering
        \includegraphics[width=0.95\textwidth]{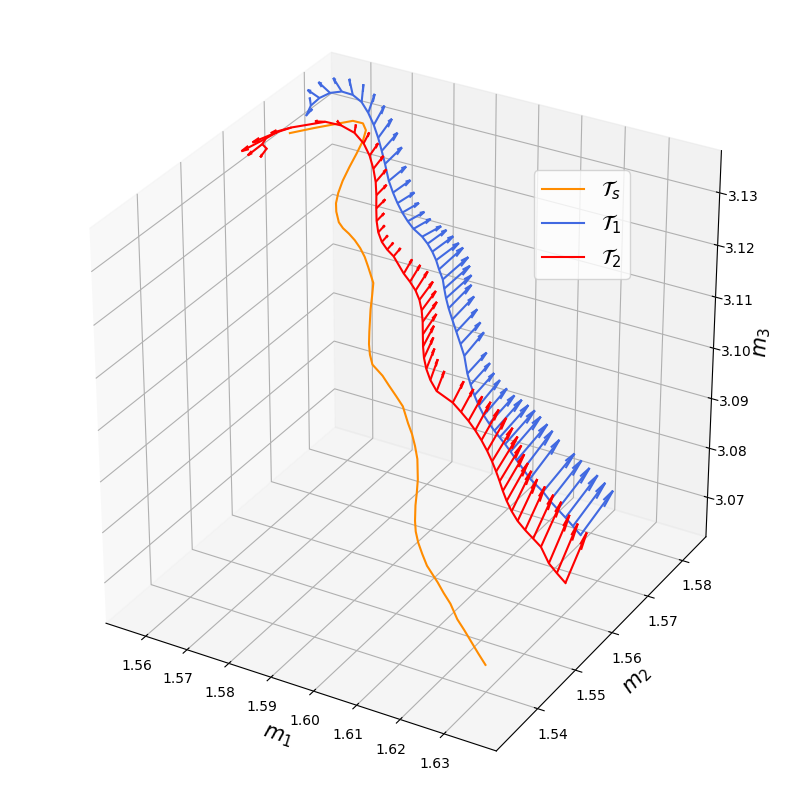}
        \caption{}    
   \end{subfigure}
   \begin{subfigure}[t]{0.23\textwidth}
        \centering
        \includegraphics[width=0.95\textwidth]{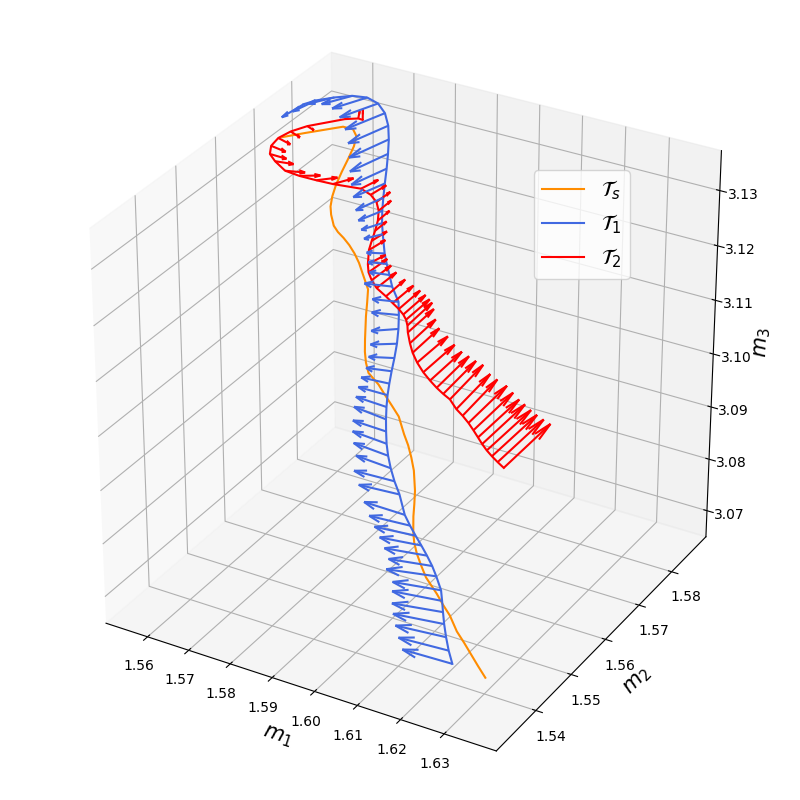}
        \caption{}    
   \end{subfigure}
   \begin{subfigure}[t]{0.23\textwidth}
        \centering
        \includegraphics[width=0.95\textwidth]{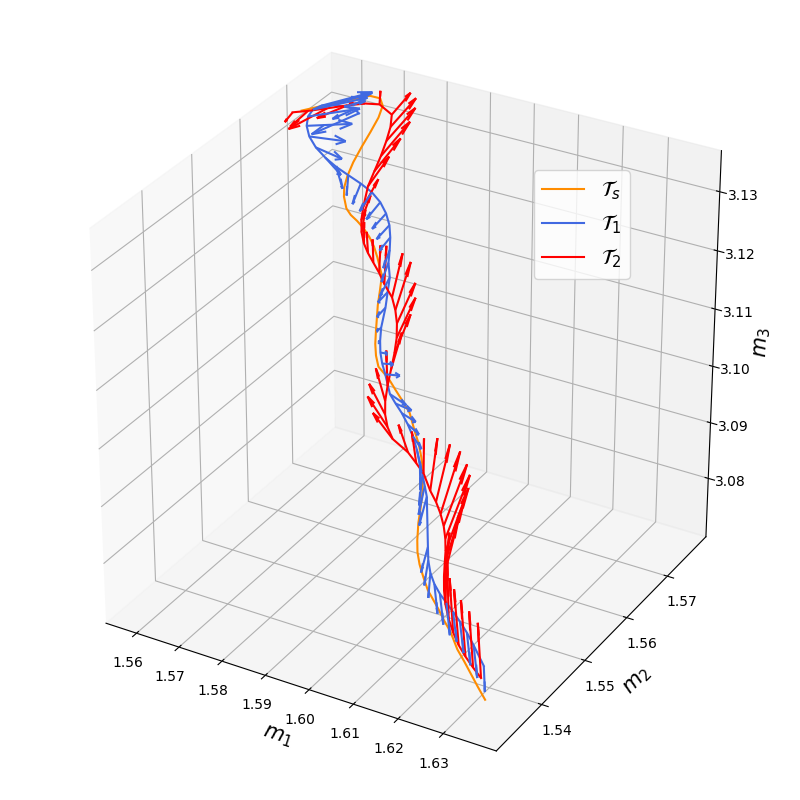}
        \caption{}    
    \end{subfigure}
    \caption{Physical gradients computed for various time steps along a source trajectory using two perturbed trajectories of linear open-loop (The orange trajectory is the source trajectory $\Tcal_s$ and others are the perturbed ones $\Tcal_1, \Tcal_2$. The quivers on the perturbed trajectories represent calculated physical derivatives). See further examples in~\Cref{fig:quivers_more}.}
    \label{fig:quivers}
\end{figure}

\subsection{Nonlinear open-loop controller}
\label{sec:nonlinear_openloop_controller}
Physical derivatives can naturally be computed for either linear or nonlinear controllers, which makes it different from taking the gradient of models through time. In model-based methods, if the model's transition dynamics is not differentiable, taking the derivative is theoretically challenging. However, our method takes advantage of the real physics of the system to compute the gradients regardless of whether the approximating model is differentiable or not. To elaborate more on this, we test our method for a simple but nonlinear policy, i.e., $u_t = \Acal\sin(\omega t)$. The sinusoidal torque is applied to either one or two motors of the system to investigate the performance of our method. We tested Gaussian and uniform perturbation for $\thetab=\{\Acal, \omega\}$ as parameters of this controller. The GP interpolation for the partial derivatives at some time instances along the trajectory can be seen in~\Cref{overview_sine_joint3_uniform_dim3_voxel=0} and more extensively in~\Cref{fig:sine_joint23_uniform_dim2_voxel=0,fig:sine_joint3_normal_dim3_voxel=0,fig:sine_joint3_uniform_dim3_voxel=0} in the Appendix. One might be interested in the direction of the predicted derivative instead of its exact size. To this end, we take several test perturbations for every time step and use $\cos(\alpha)$ as a measure of alignment between the predicted and ground-truth derivative vectors. The time evolution of the histogram of this measure along the trajectory shows a better alignment as time proceeds. This effect can be seen in~\Cref{fig:hist_sine_joint23_normal_voxel=0.01,fig:hist_sine_joint23_uniform_voxel=0.01}. This confirms our observation of initial transient noise in the system that dies out gradually by the progression of time. The overall performance of our method in predicting physical derivatives in unseen directions for two different perturbation methods is shown in~\Cref{tab:overall}.

\subsection{Feedback controller}
\label{sec:feedback_controller}
Often in practice, the policy incorporates some function of the states of the system. Some well-known examples which have been extensively used in control applications are P, PD, PI and PID controllers. Here, we consider two members of this family, i.e., P and PD controllers. The policy becomes $\ub = K_p \eb$ for P controllers and $\ub = K_p \eb + K_d \dot{\eb}$ for PD controllers. The error $\eb$ shows the difference between the current state $\xb$ and the desired state $\xb^*$. The parameters of the controller $\{K_p, K_d\}$ are scalar values that are multiplied by the error vector elementwise. This implies that the controller parameters are the same for three motors, leaving the whole platform's controller with two parameters that weigh the value and the rate of the error. We applied the uniform and Gaussian perturbation for the set of parameters $\thetab = \{K_p, K_d\}$ with different scenarios.~\Cref{fig:PD_perturbed_controller} is an illustration of the resultant trajectories for different levels of noise in the policy. The GP interpolation for the physical derivatives at some time instances along the trajectory can be seen in~\Cref{overview_sine_joint3_uniform_dim3_voxel=0} and more extensively in~\Cref{fig:pd_normal_dim1_voxel=0,fig:pd_normal_dim2_voxel=0,fig:pd_normal_dim3_voxel=0,fig:pd_uniform_dim1_voxel=0,fig:pd_uniform_dim2_voxel=0,fig:pd_uniform_dim3_voxel=0} in the Appendix. The time evolution of the histogram of misalignment between predicted and ground-truth directional derivatives (see~\Cref{fig:hist_pd_normal_voxel=0.01,fig:hist_sine_joint23_uniform_voxel=0.01} in the Appendix) once again confirms the existence of the initial transient noise, as was also observed in the~\Cref{sec:nonlinear_openloop_controller}. Similar to the sinusoidal experiment, the overall performance of our method is presented in~\Cref{tab:overall}.

\subsection{Zero-shot planning task}
\label{sec:zeroshot_planning}
Our previous experiments in sections \ref{sec:linear_openloop_controller},\ref{sec:nonlinear_openloop_controller} and \ref{sec:feedback_controller} showed that learning the physical derivative map is feasible for various types of controllers. In this section, we demonstrate an example of a constraint satisfaction task by means of the physical derivative map. In this experiment, the superscript $(s)$ corresponds to the nominal trajectory, which is called~\emph{source}. Assume the system is controlled by a PD controller to reach a target state $\xb^*$, i.e., the control torques are designed as $\ub = k_p^{(s)}(\xb - \xb^*)+k_d^{(s)} \dot{\xb}$. The controller does a decent job of reaching the target state given reasonable values for $k_p$ and $k_d$. However, such a controller does not give us a clear way to shape the trajectory that starts from $\xb_\circ$ and ends at $\xb^*$. 
Assume it is desired that the nominally controlled trajectory $\Tcal^{(s)}$ passes through an intermediate state $\xb_t^*$ at time $t$ on its way towards the target state $\xb^*$ (we can equally assume that the system must avoid some regions of the state space because of safety reasons). 
The solution with physical derivatives is as follows . Assume $k_d^{(s)}$ is fixed and only $k_p^{(s)}$ is changeable. If the physical derivatives map is available, we have access to $\hat{g}_t(k_p^* - k_p^{(s)}) = (\xb_t^*- \xb_t^{(s)}) / (k_p^* - k_p^{(s)})$. By simple algebraic rearrangement, we have
\begin{equation}
    \label{eq:constrain_satisfaction_update_equation}
    k_p^* = \frac{\xb^* - \xb_t^{(s)}}{\hat{g}_t(k_p^* - k_p^{(s)})} + k_p^{(s)}.
\end{equation}
The new parameter of the policy is supposed to push the source trajectory $\Tcal^{(s)}$ towards a target trajectory $\Tcal^*$ that passes through the desired state $\xb^*_t$ at time $t$. The result of this experiment on our physical finger platform is available in~\Cref{fig:constraint_satisfaction}.

\begin{figure}[t!]
    \centering
    \begin{subfigure}[b]{0.32\textwidth}
        \centering
        \includegraphics[width=0.9\textwidth]{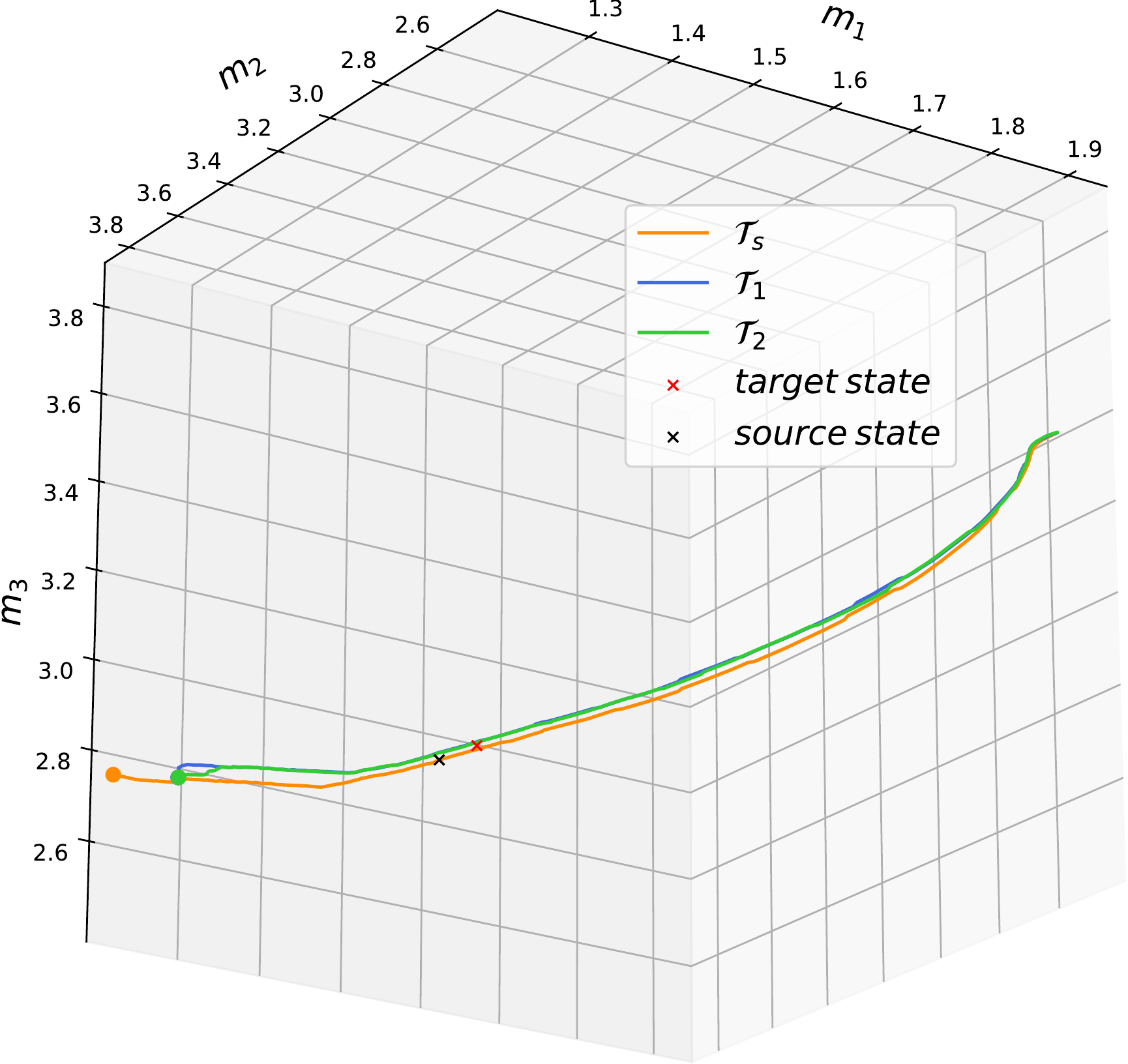}
        \caption{Short distance target}        
    \end{subfigure}
    \begin{subfigure}[b]{0.32\textwidth}
        \centering
        \includegraphics[width=0.9\textwidth]{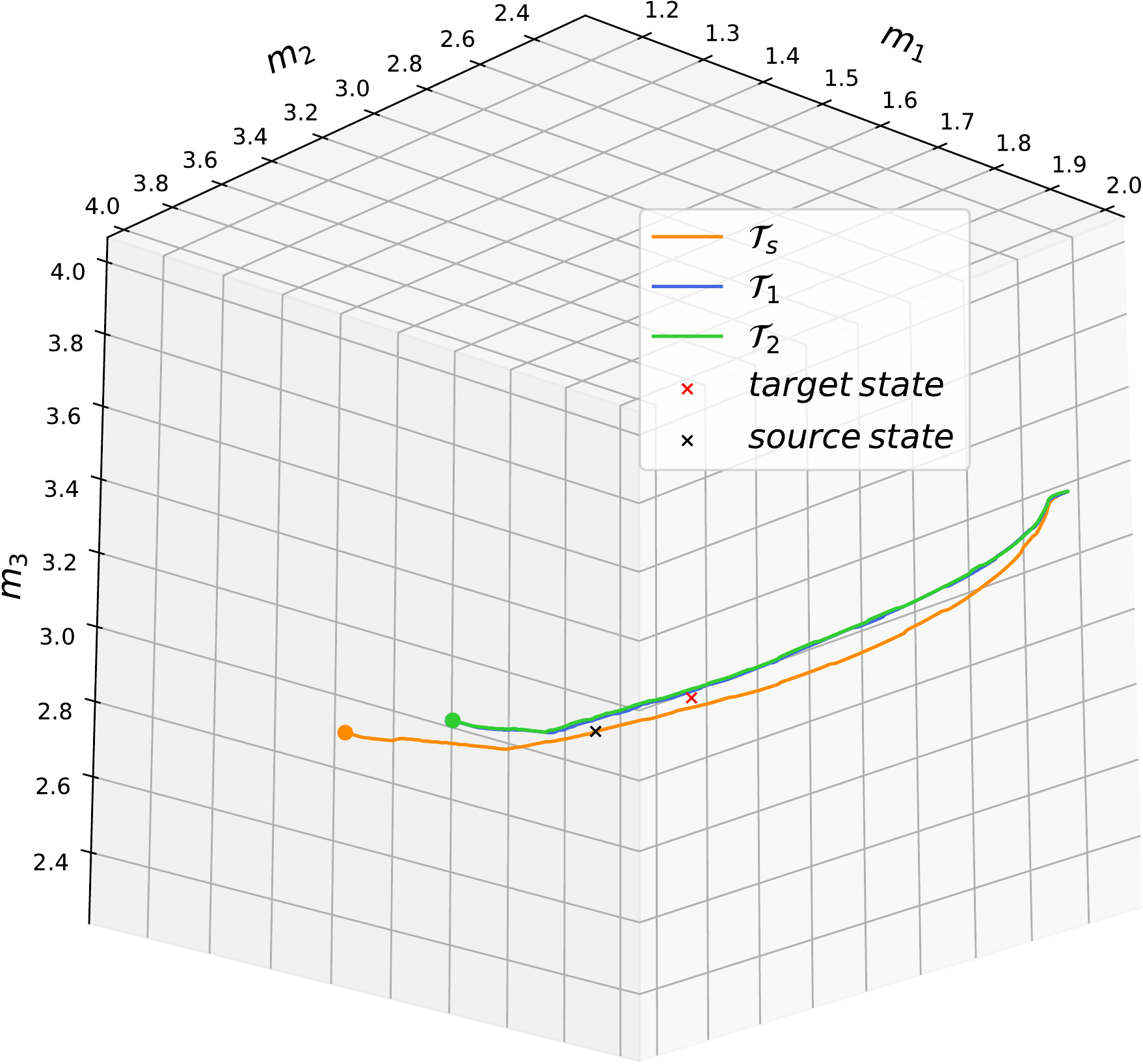}
        \caption{Medium distance target}        
    \end{subfigure}
    \begin{subfigure}[b]{0.32\textwidth}
        \centering
        \includegraphics[width=0.9\textwidth]{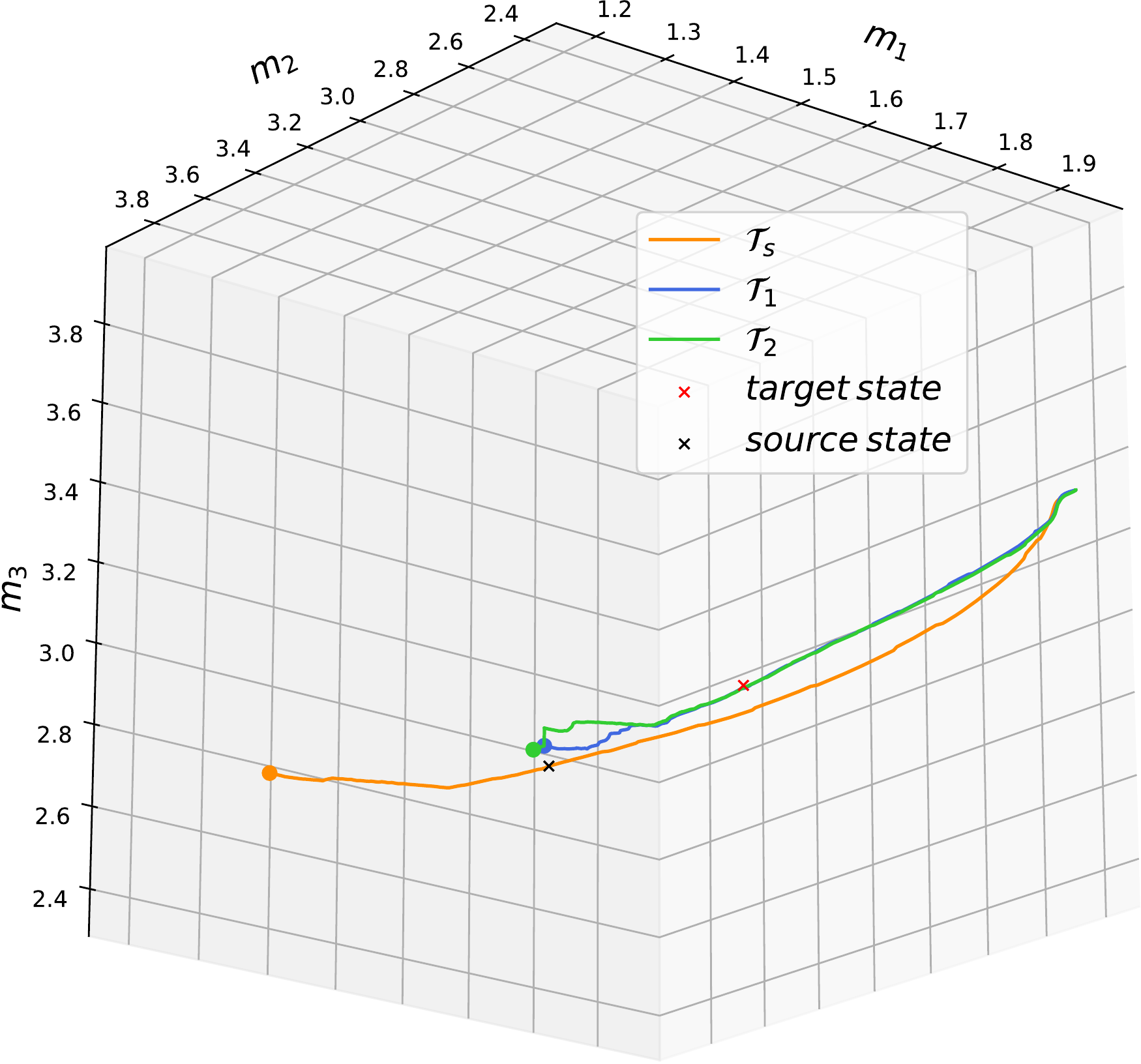}
        \caption{Long distance target}        
    \end{subfigure}
    \caption{Zero-shot planning with constraint satisfaction. The orange trajectory is the source produced by the nominal controller. The green and blue are two sampled trajectories produced by perturbing $k_p$ to $k_p^*$ by~\Cref{eq:constrain_satisfaction_update_equation}.}\label{fig:constraint_satisfaction}
\end{figure}

\section{Conclusion}
In this paper, we present a method to learn how the trajectories of a physical real-world dynamical system change with respect to a change in the policy parameters. We tested our method on a custom-built platform called finger robot that allows testing a couple of controllers with various settings to show the applicability of our method for linear, nonlinear, open-loop, and feedback controllers. By estimating the physical derivative function, we showed that our method is able to push a controlled trajectory towards a target intermediate state. We investigate the real-world challenges when doing a fine sensitive task such as estimating physical derivatives on a real robot and proposed solutions to make our algorithm robust to inherent imperfection and noise in physical systems. We focused mainly on low-level issues of physical derivatives and showed the feasibility of estimating them robustly. We expect that physical derivatives will contribute to the research areas such as safety, control with constraint satisfaction and trajectory planning, robust, or safe control. 

% Acknowledgments---Will not appear in anonymized version
% \acks{We thank a bunch of people.}

\newpage
\bibliography{l4dc2022-manuscript}

\newpage
\appendix

\section{Physical Platform}
\label{sec:physical_platform}
In this section, we introduce the physical robot on which we tested our method. The robot is called \emph{finger platform} or simply \emph{finger} throughout this paper. The range of movement for the motors are $[0, \pi]$, $[0, \pi]$, $[0, 2\pi]$ respectively. The axes of the plots throughout the paper are in radian. It consists of three articulated arms with three degrees of freedom in total (see~\Cref{fig:whole_finger}). The motors $\{m_1, m_2, m_3\}$ are depicted in the figure. This naming remains consistent throughout this paper. Each arm is moved by a separate brushless DC motor and has one degree of freedom to swing in its own plane (see~\Cref{fig:motor_1}). Each arm is equipped with an encoder that measures its angle (see~\Cref{fig:encoder}). The brushless motors are controlled by an electronic driver that receives torque values applied to each motor from a computer terminal via a CAN bus and applies the torques to the motors (see~\Cref{fig:electronics_board}). Due to the imperfections of the arms, motors, and drivers, we did not use any model for the system, including the inertial matrix of the robot or the current-torque characteristic function of the motors. The low-cost and safe nature of this robot makes it a suitable platform to test the idea of physical derivatives that requires applying many different controllers in the training phase.

\begin{figure}[h!]
    \centering
    \begin{subfigure}[t]{0.23\textwidth}
        \centering
        \includegraphics[height=0.87in]{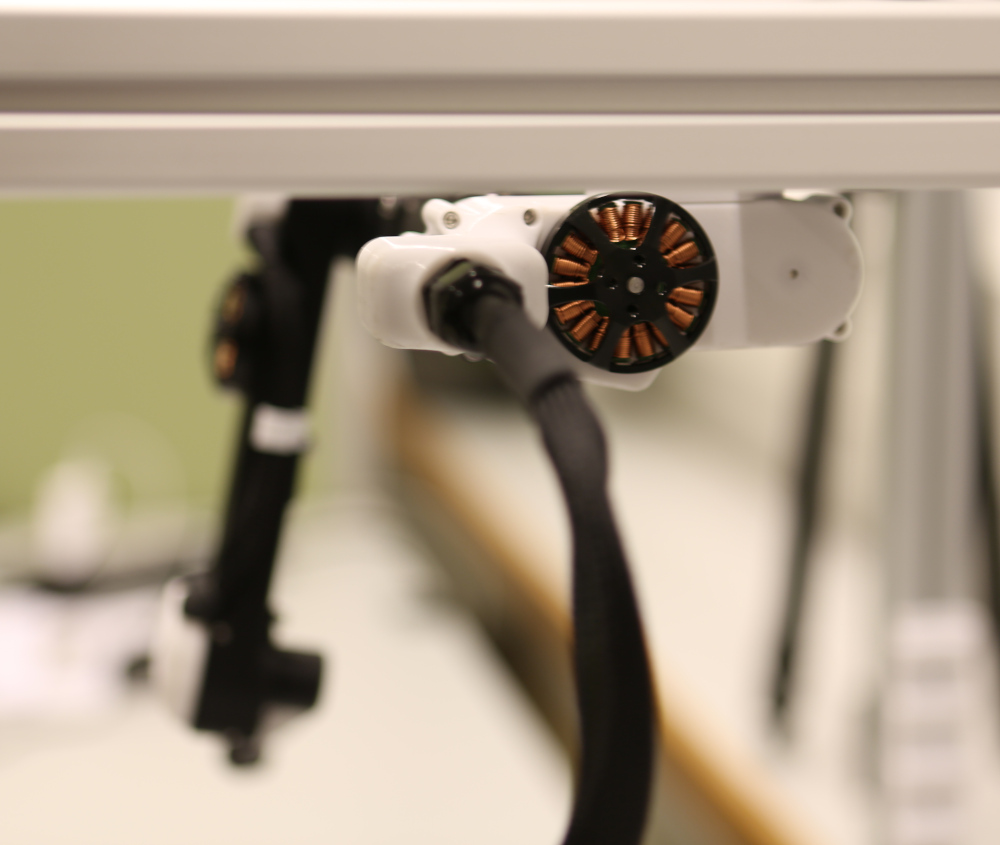}
        \caption{Motor $1$}\label{fig:motor_1}        
    \end{subfigure}
    \begin{subfigure}[t]{0.23\textwidth}
        \centering
        \includegraphics[height=0.86in]{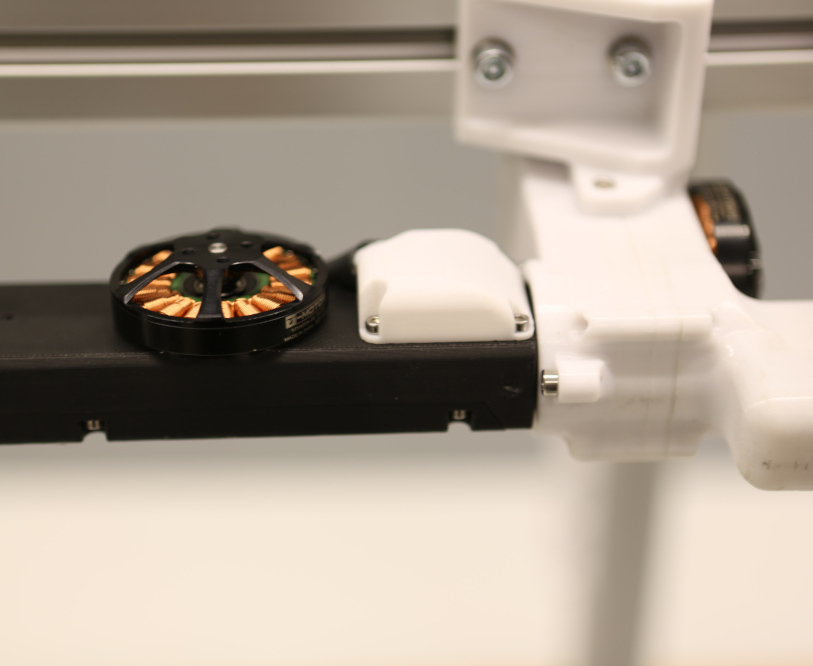}
        \caption{Encoder}\label{fig:encoder}    
   \end{subfigure}
   \begin{subfigure}[t]{0.23\textwidth}
        \centering
        \includegraphics[height=0.86in]{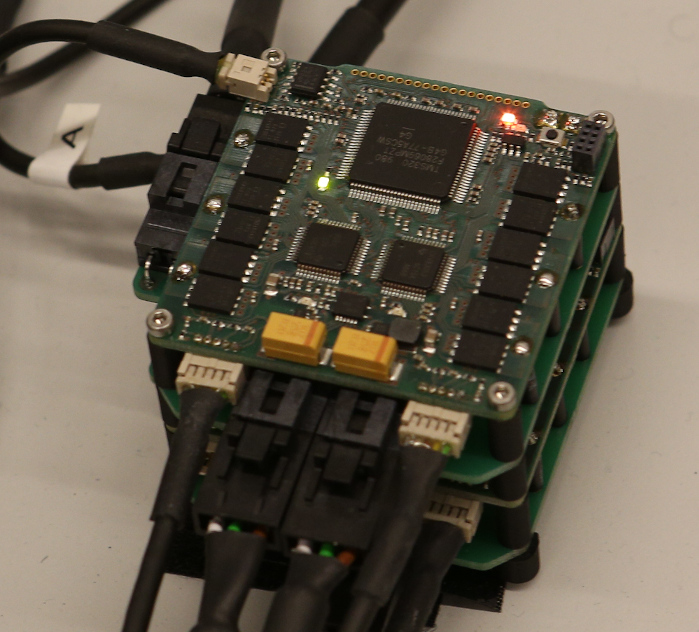}
        \caption{Driver}\label{fig:electronics_board}    
   \end{subfigure}
   \begin{subfigure}[t]{0.23\textwidth}
        \centering
        \includegraphics[height=0.87in]{figs/motor1.jpg}
        \caption{Finger}\label{fig:whole_finger}    
    \end{subfigure}
    \caption{Components of the physical finger platform}\label{fig:platform_components}
\end{figure}

\section{Additional Plots illustrating Real World Challenges (section \ref{sec:real_challenges}) }
\label{app:challenges}

\begin{figure}[h!]
    \centering
    \begin{subfigure}[t]{1in}
        \centering
        \includegraphics[width=1in]{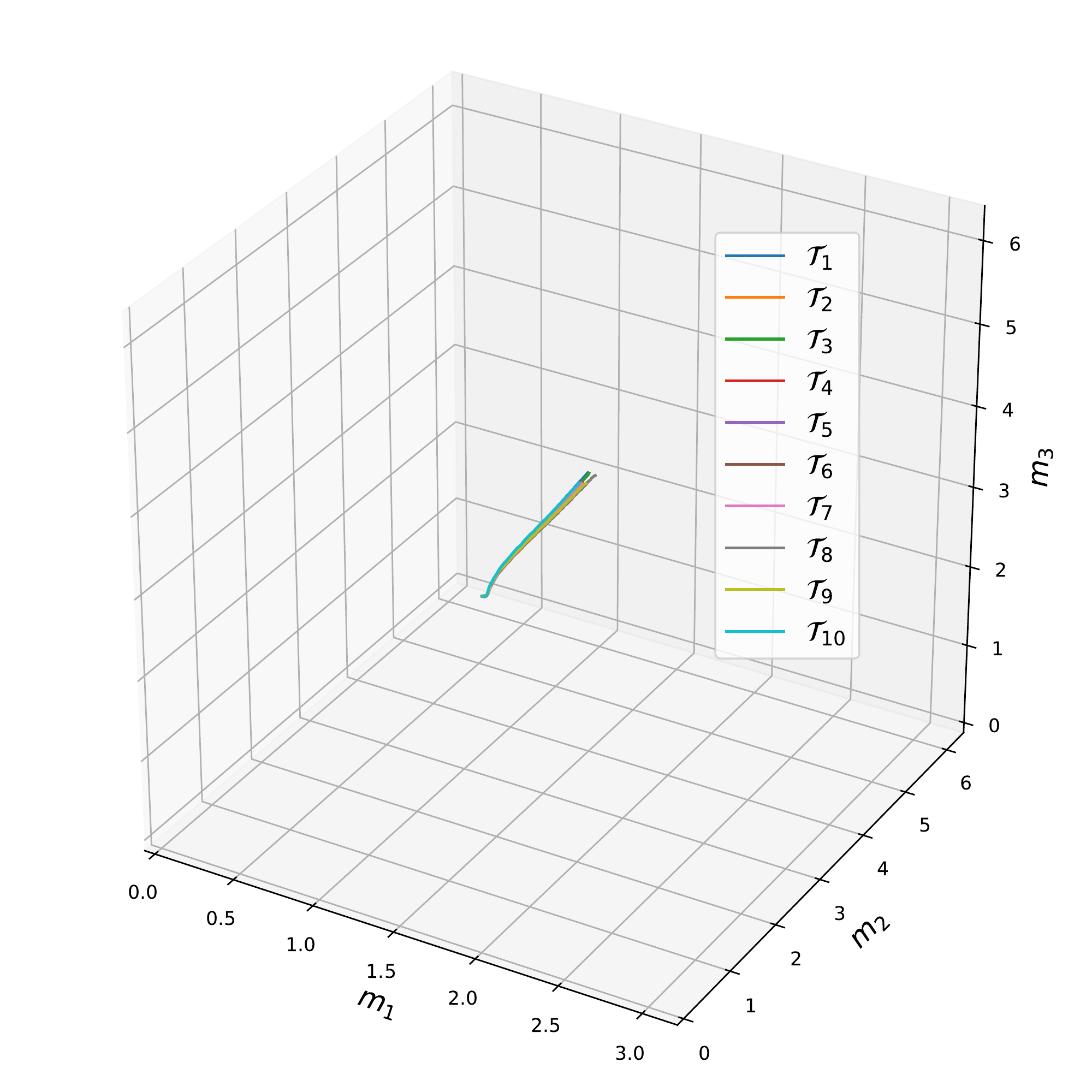}
        \caption{$t=200$}        
    \end{subfigure}
    \begin{subfigure}[t]{1in}
        \centering
        \includegraphics[width=1in]{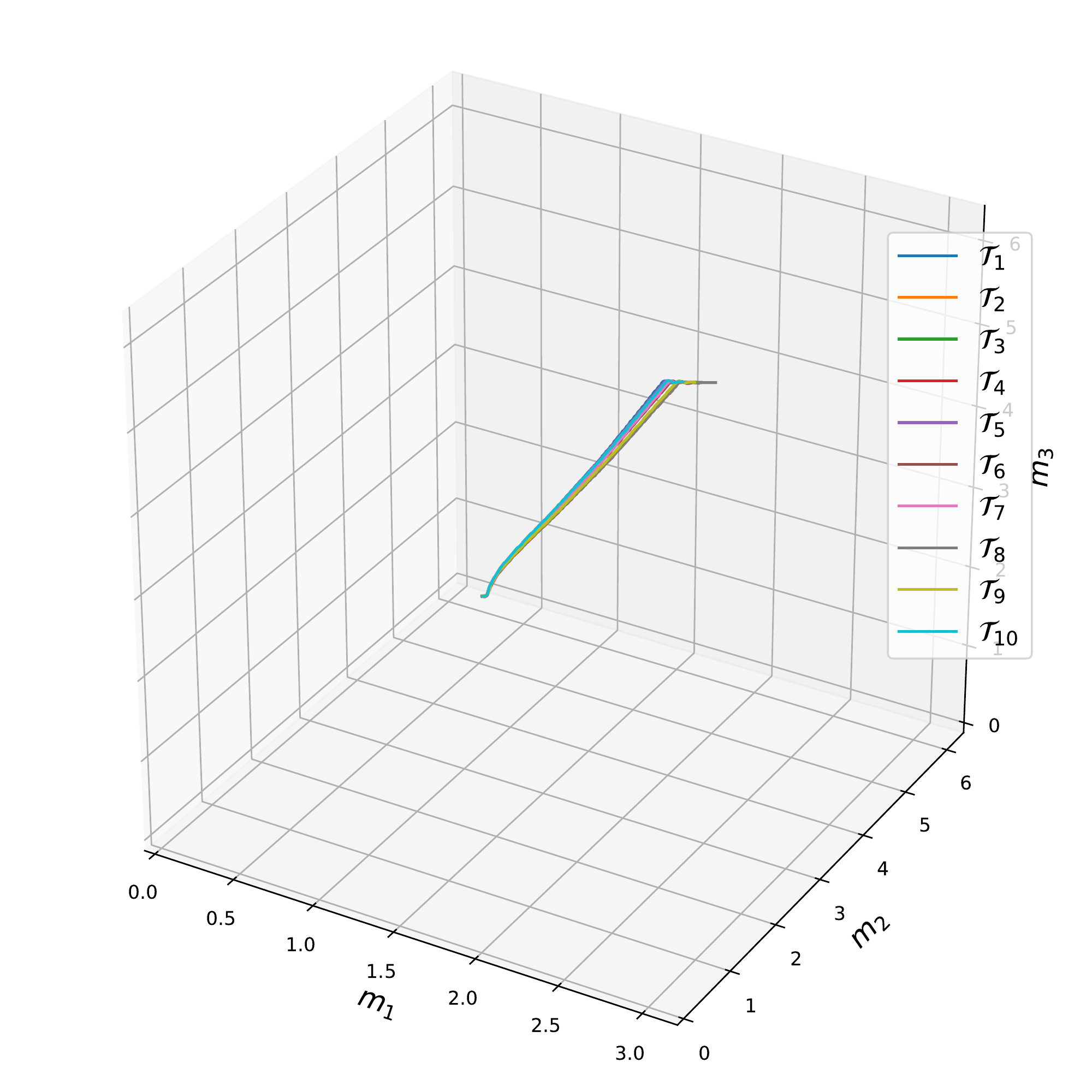}\caption{$t=400$}    
   \end{subfigure}
   \begin{subfigure}[t]{1in}
        \centering
        \includegraphics[width=1in]{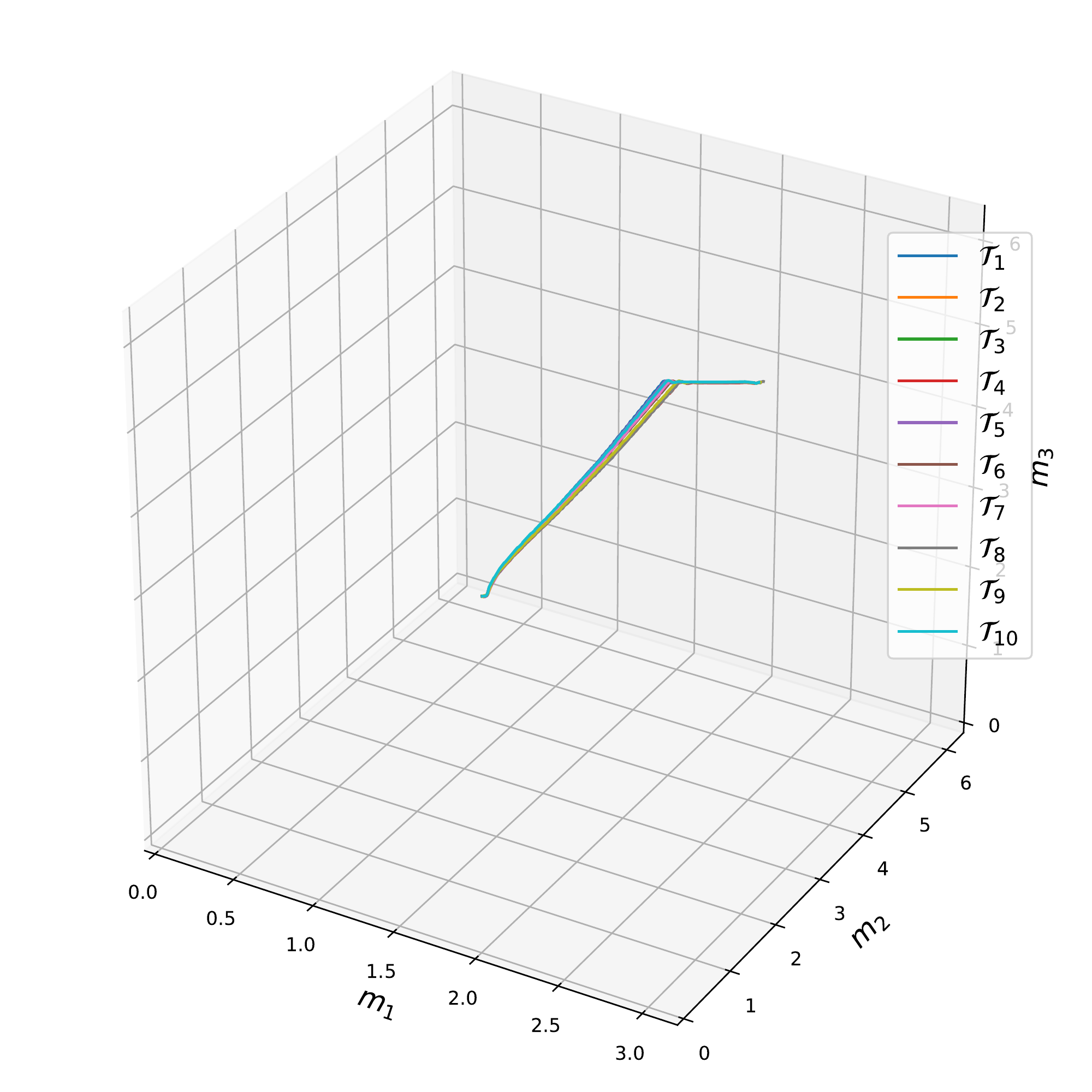}\caption{$t=600$}    
   \end{subfigure}
   \begin{subfigure}[t]{1in}
        \centering
        \includegraphics[width=1in]{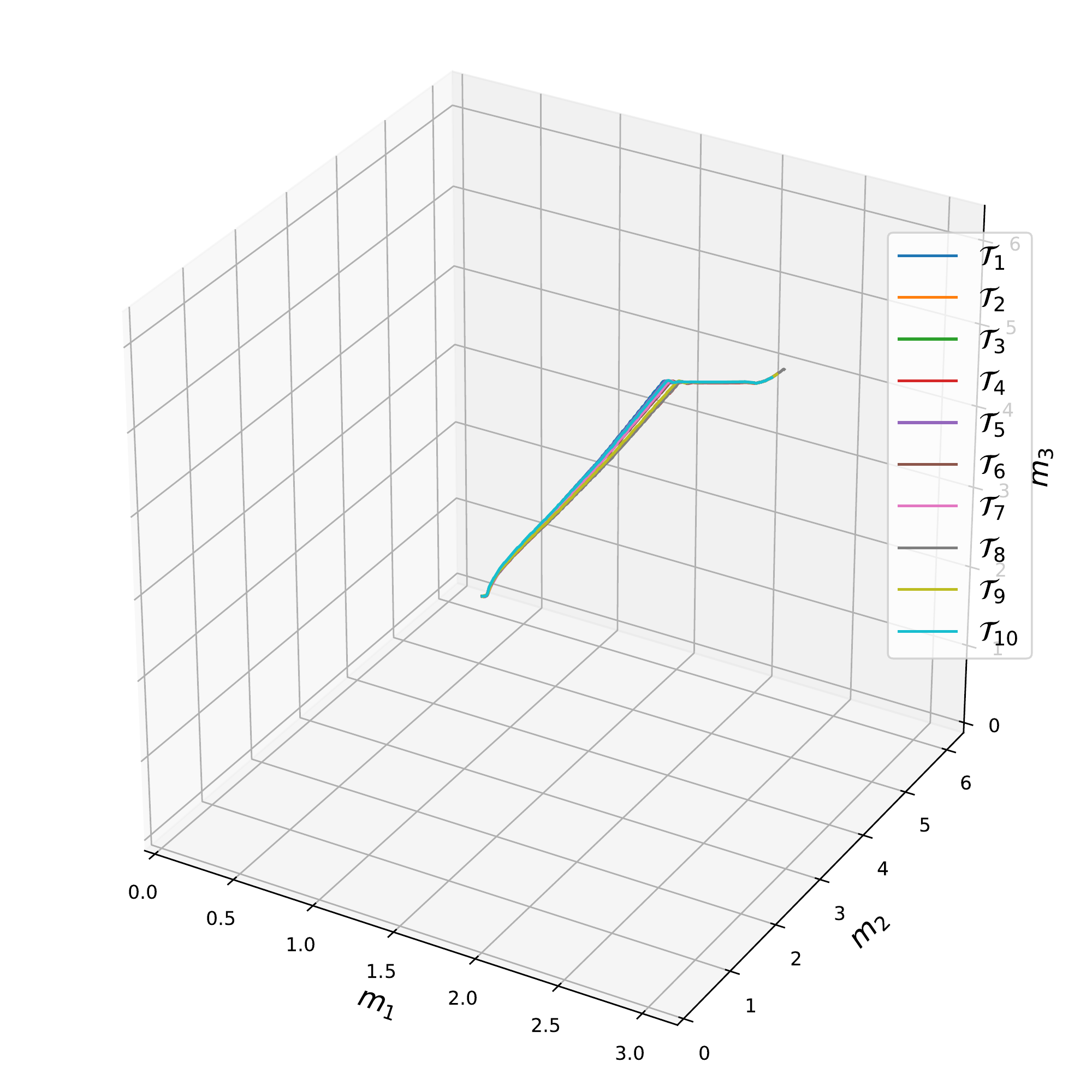}\caption{$t=800$}    
    \end{subfigure}
    \begin{subfigure}[t]{1in}
        \centering
        \includegraphics[width=1in]{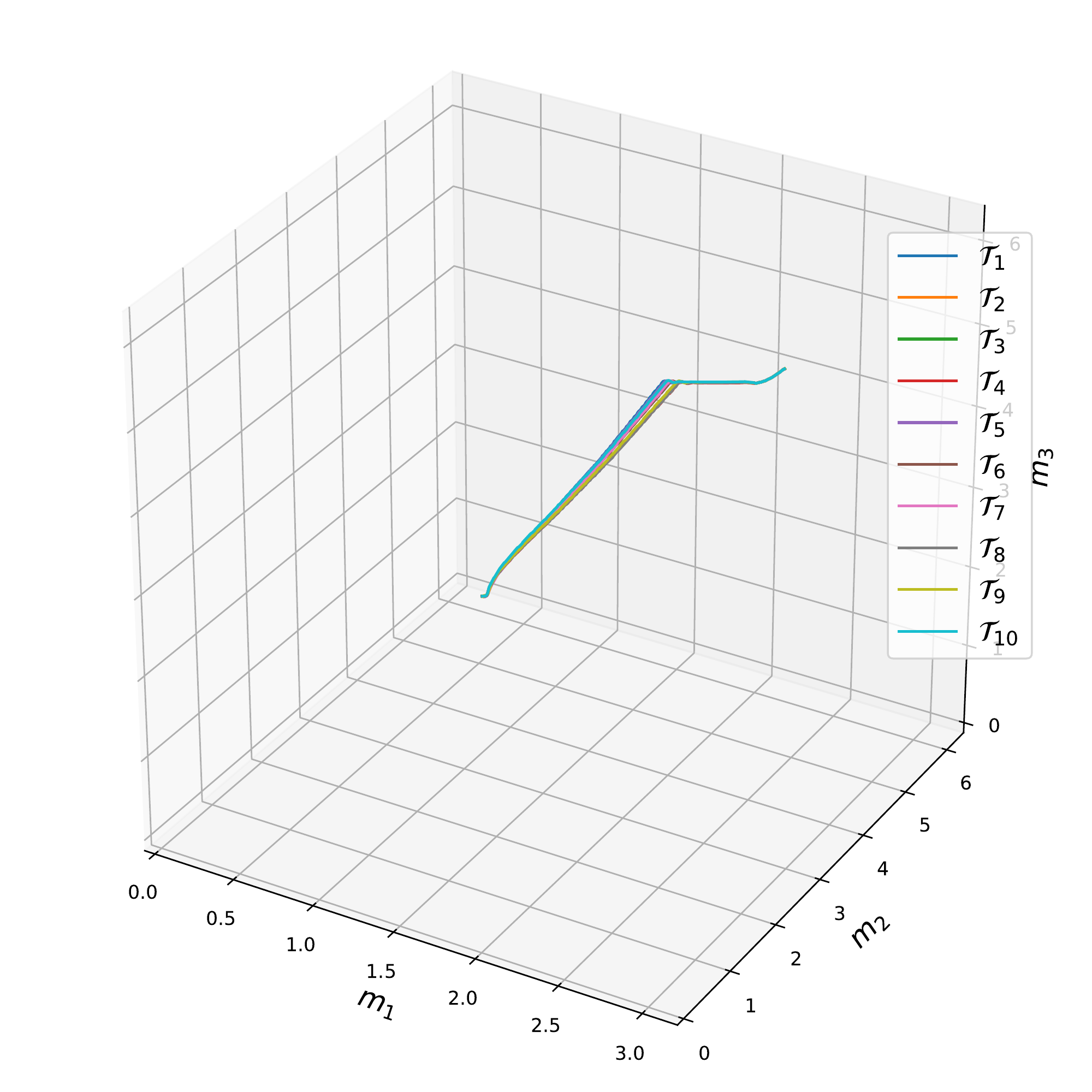}\caption{$t=1000$}    
    \end{subfigure}
    \caption{Same controller applied for multiple runs. The trajectories are produced by the linear open-loop controller similar to those used in~\Cref{sec:linear_openloop_controller}. See the plot for a different set of nominal parameters of the controller in~\Cref{fig:linear_noisy_openloop_controller_setting2} in the Appendix (Zooming is recommended).}
    \label{fig:same_controller_multiple_runs_setting1}
\end{figure}

\begin{figure}[h!]
    \centering
    \begin{subfigure}[t]{1in}
        \centering
        \includegraphics[width=1in]{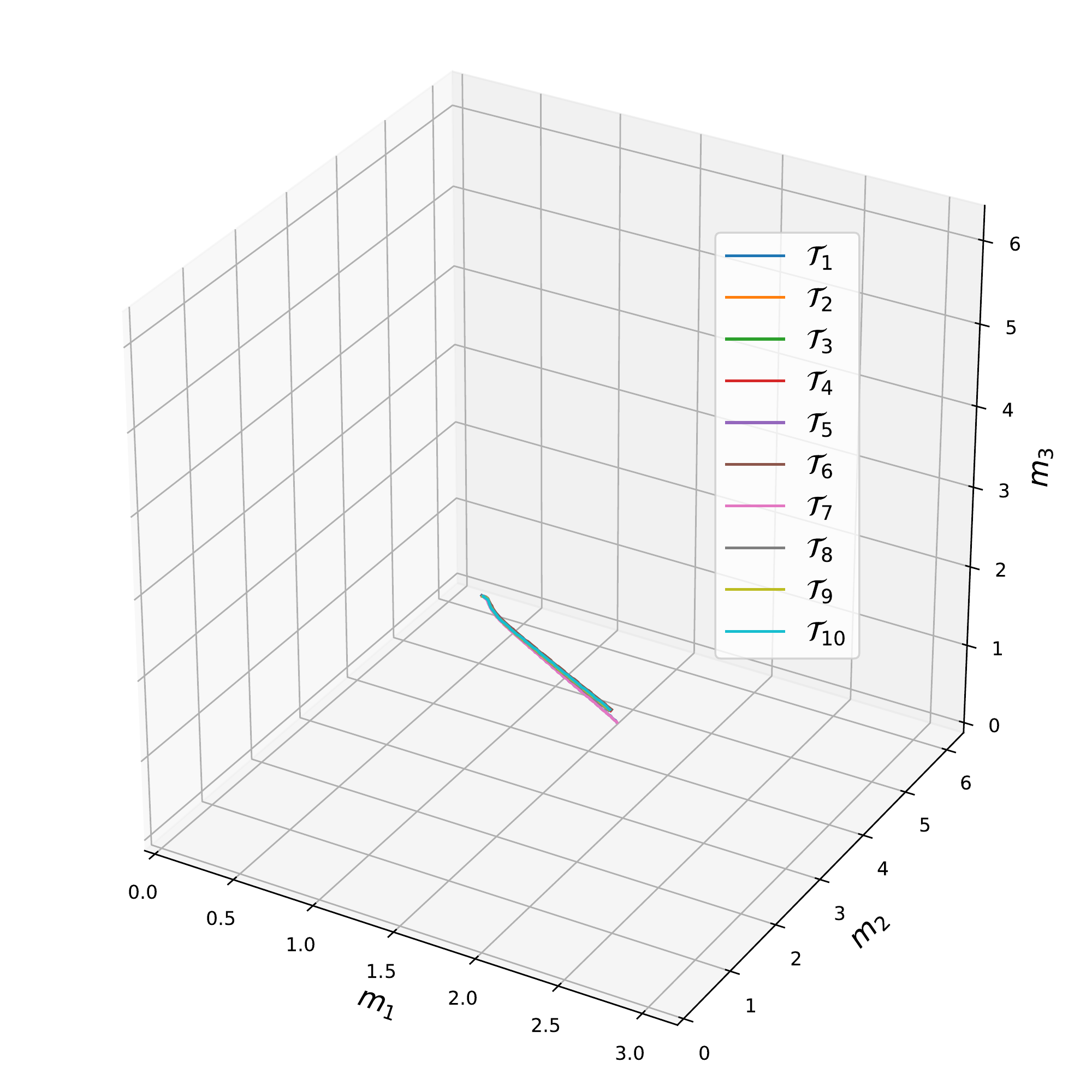}
        \caption{$t=200$}        
    \end{subfigure}
    \begin{subfigure}[t]{1in}
        \centering
        \includegraphics[width=1in]{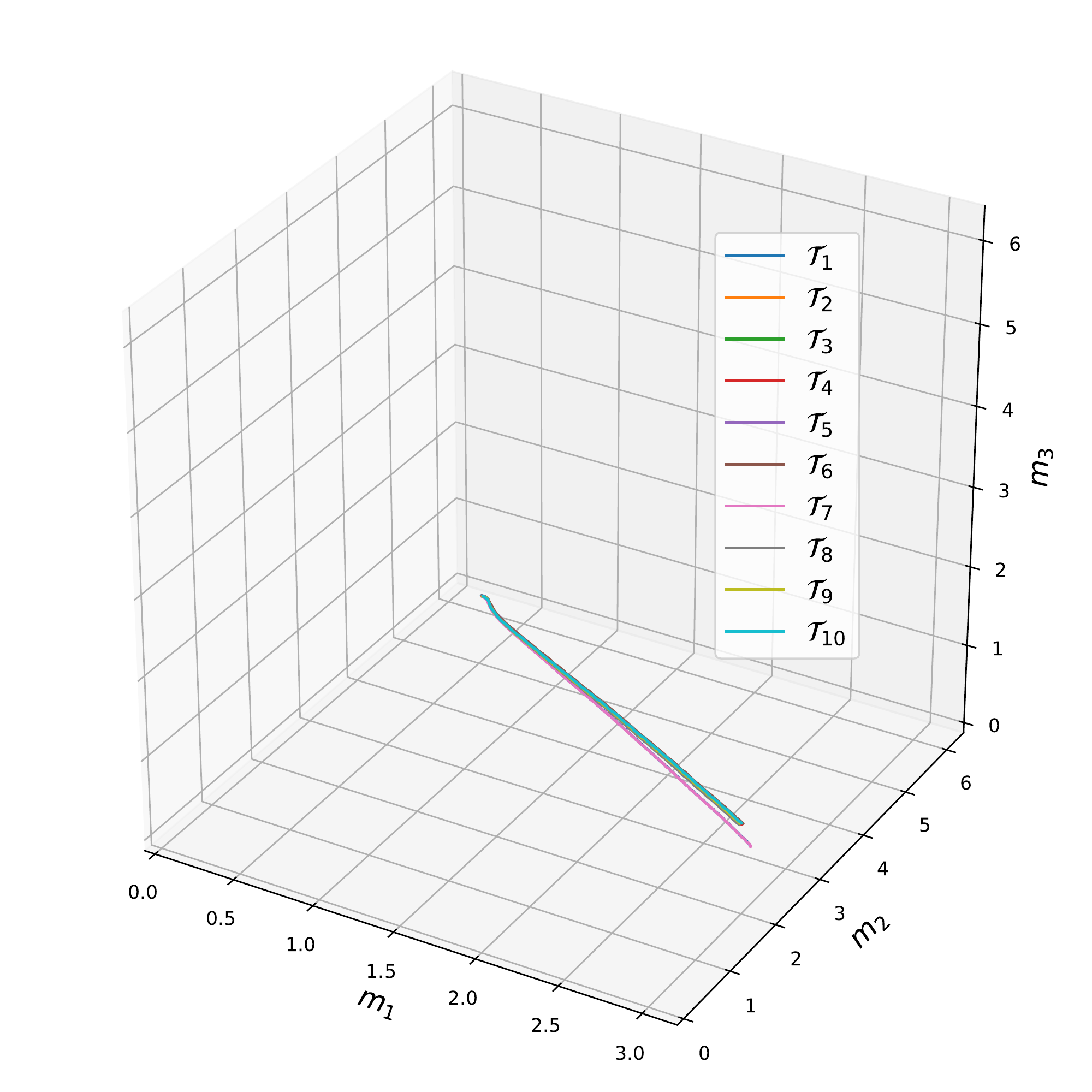}\caption{$t=400$}    
   \end{subfigure}
   \begin{subfigure}[t]{1in}
        \centering
        \includegraphics[width=1in]{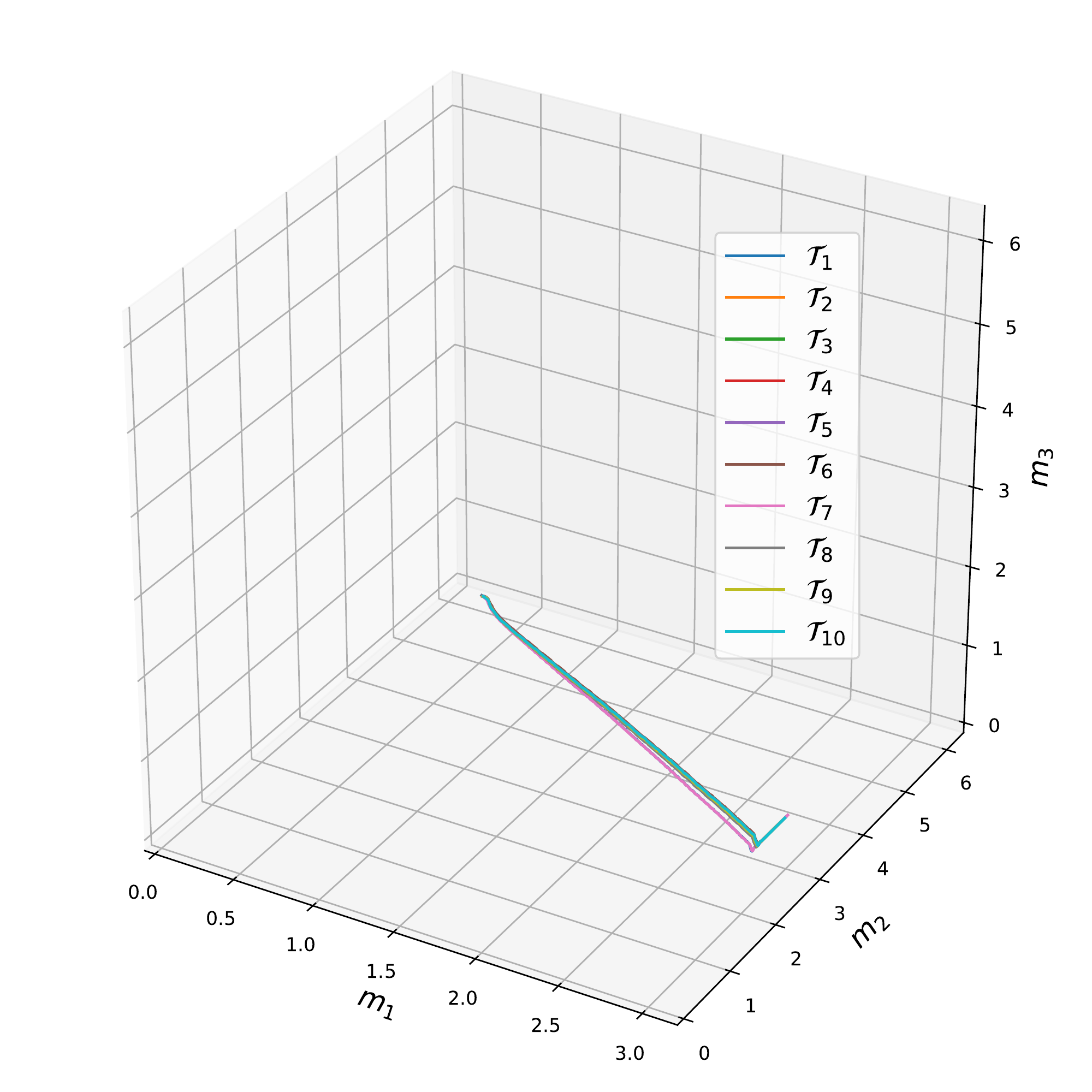}\caption{$t=600$}    
   \end{subfigure}
   \begin{subfigure}[t]{1in}
        \centering
        \includegraphics[width=1in]{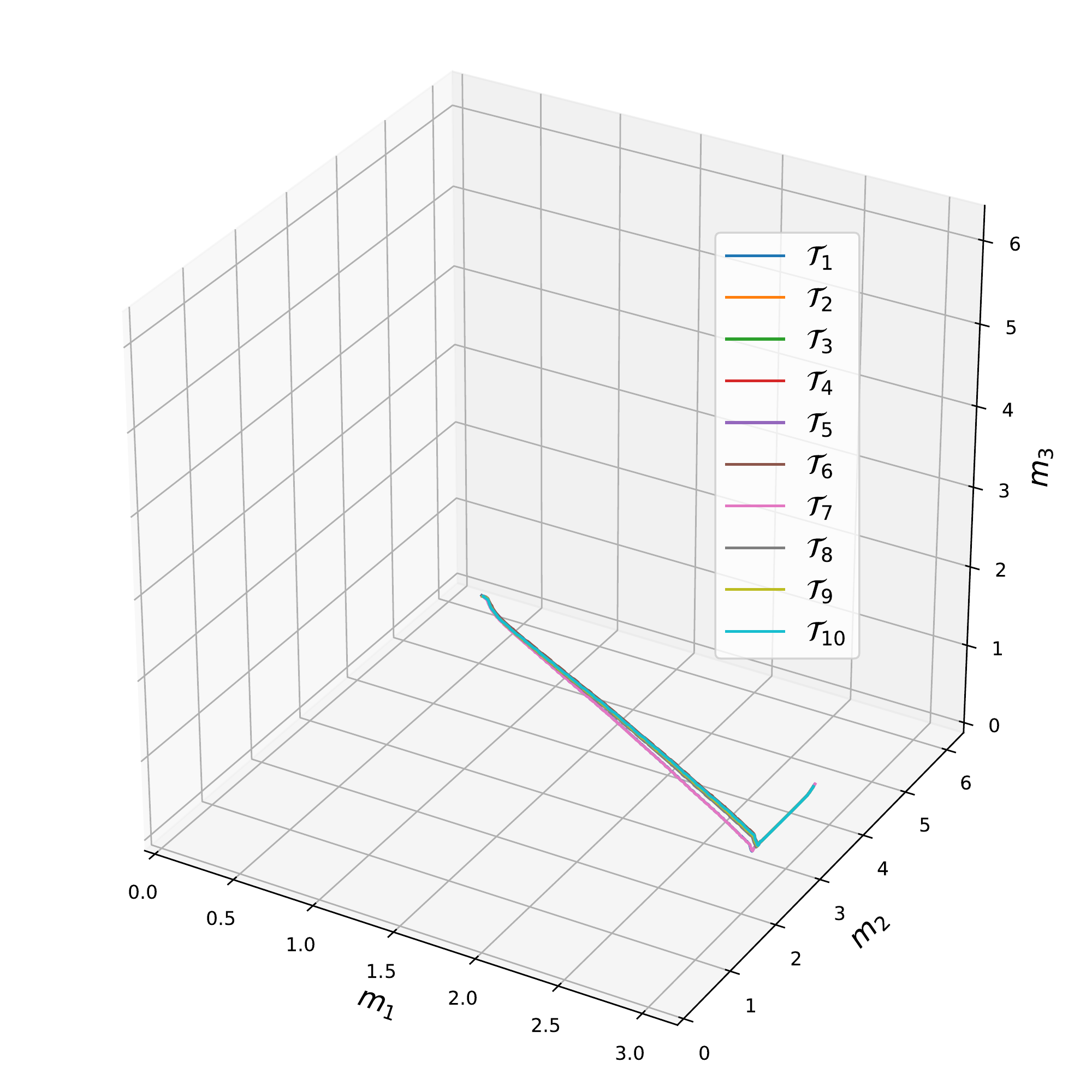}\caption{$t=800$}    
    \end{subfigure}
    \begin{subfigure}[t]{1in}
        \centering
        \includegraphics[width=1in]{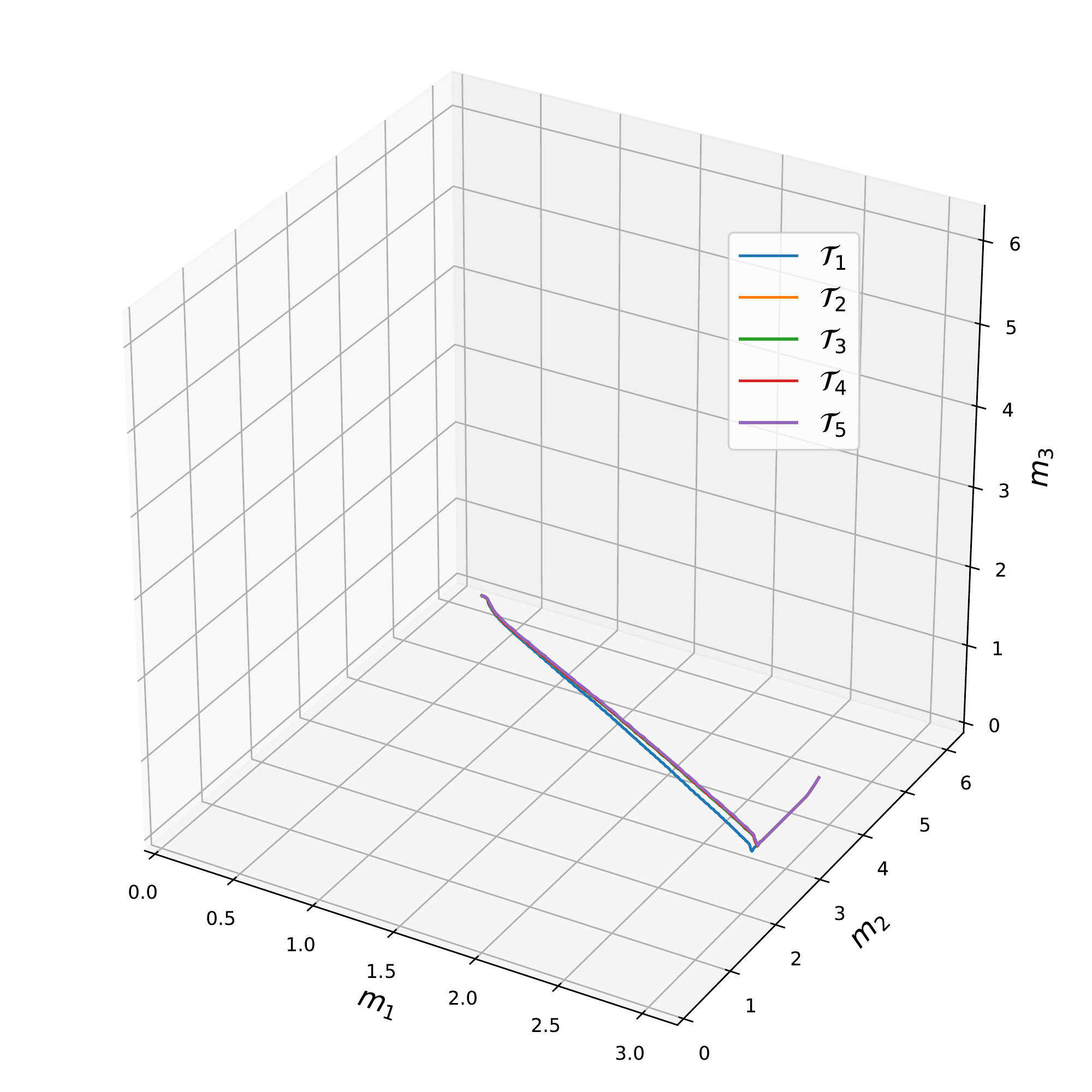}\caption{$t=1000$}    
    \end{subfigure}
    \caption{The same as~\Cref{fig:same_controller_multiple_runs_setting1} but for a different setting.}\label{fig:same_controller_multiple_runs_setting2}
\end{figure}

\begin{figure}[t!]
    \centering
    \begin{subfigure}[t]{1in}
        \centering
        \includegraphics[width=1in]{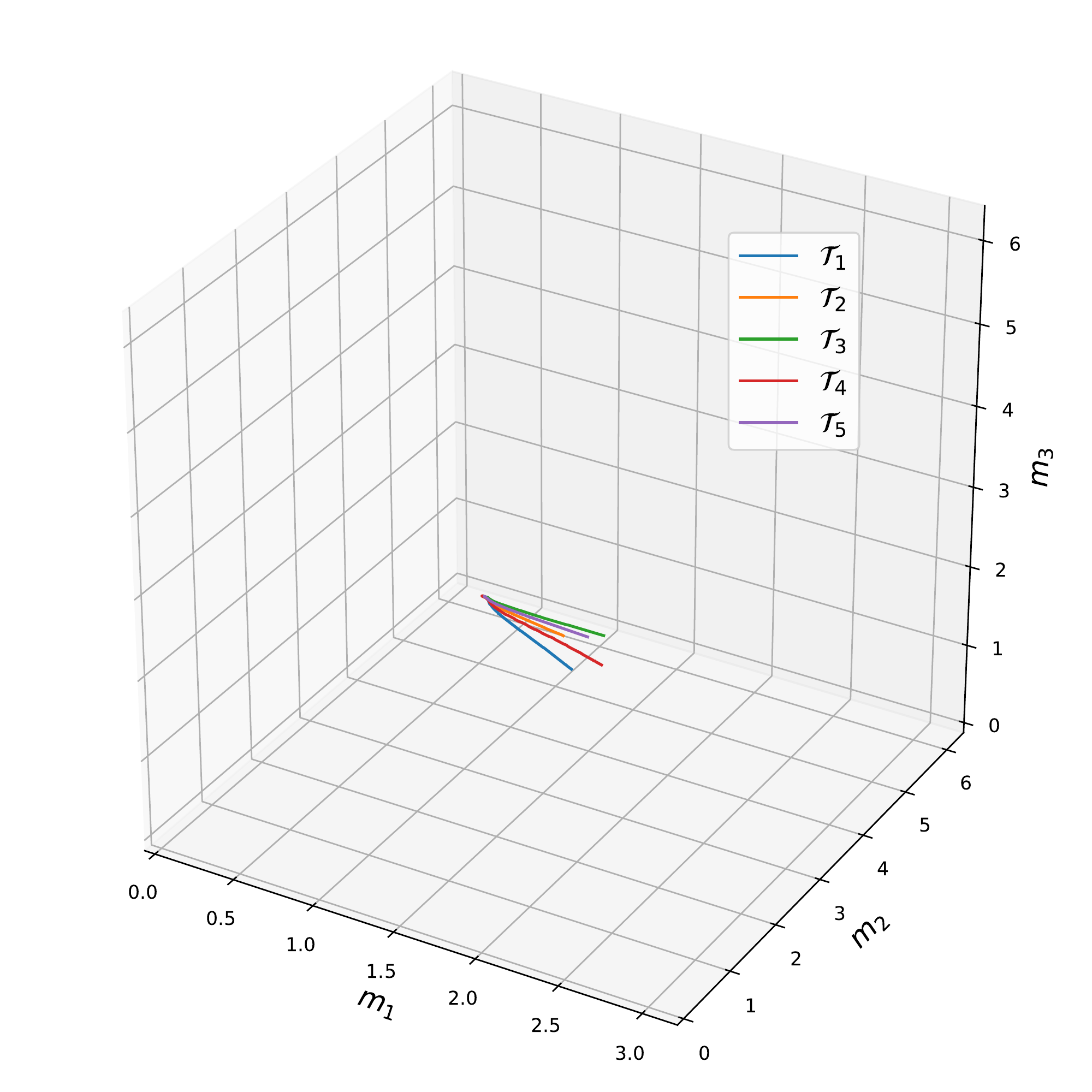}
        \caption{$t=200$}        
    \end{subfigure}
    \begin{subfigure}[t]{1in}
        \centering
        \includegraphics[width=1in]{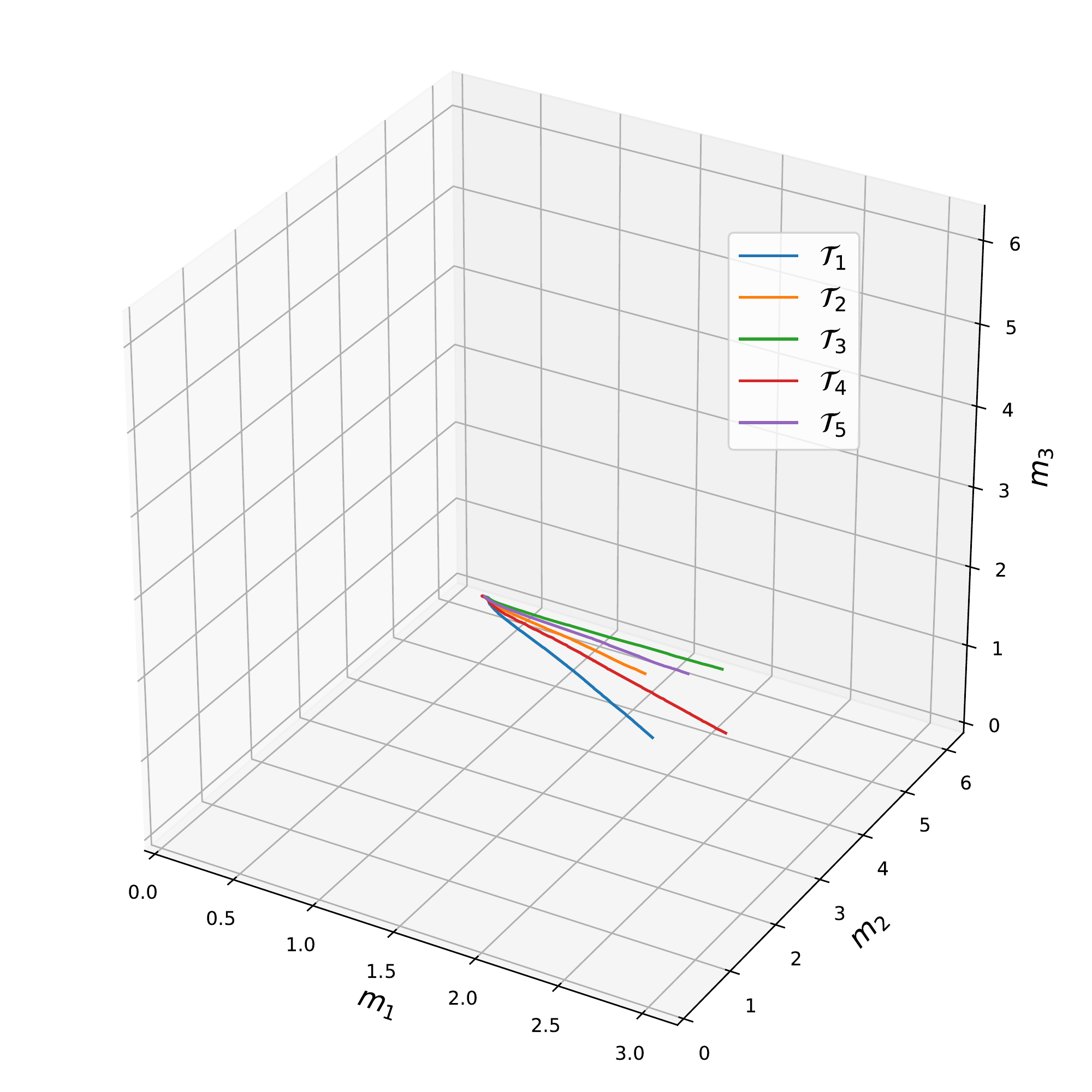}\caption{$t=400$}    
   \end{subfigure}
   \begin{subfigure}[t]{1in}
        \centering
        \includegraphics[width=1in]{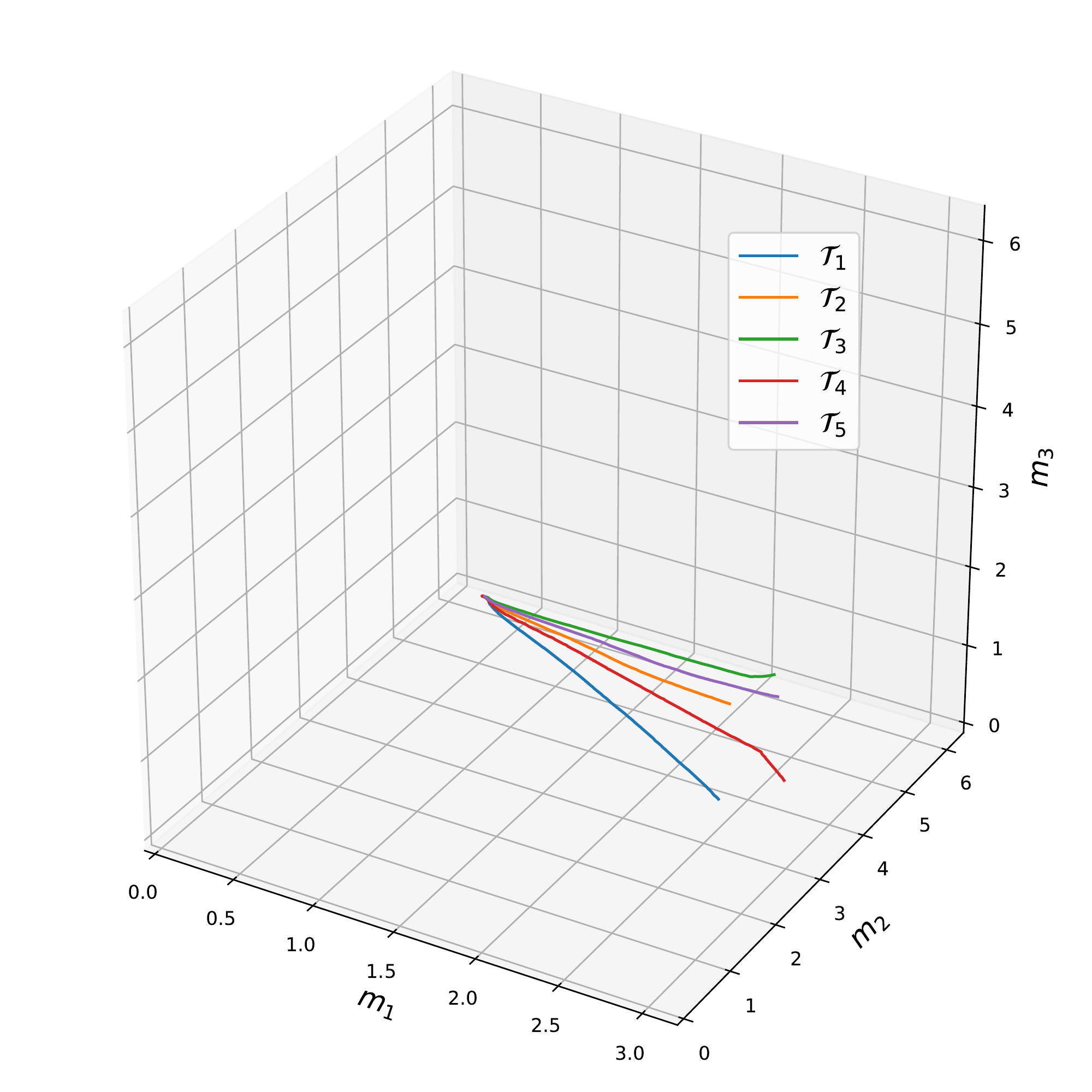}\caption{$t=600$}    
   \end{subfigure}
   \begin{subfigure}[t]{1in}
        \centering
        \includegraphics[width=1in]{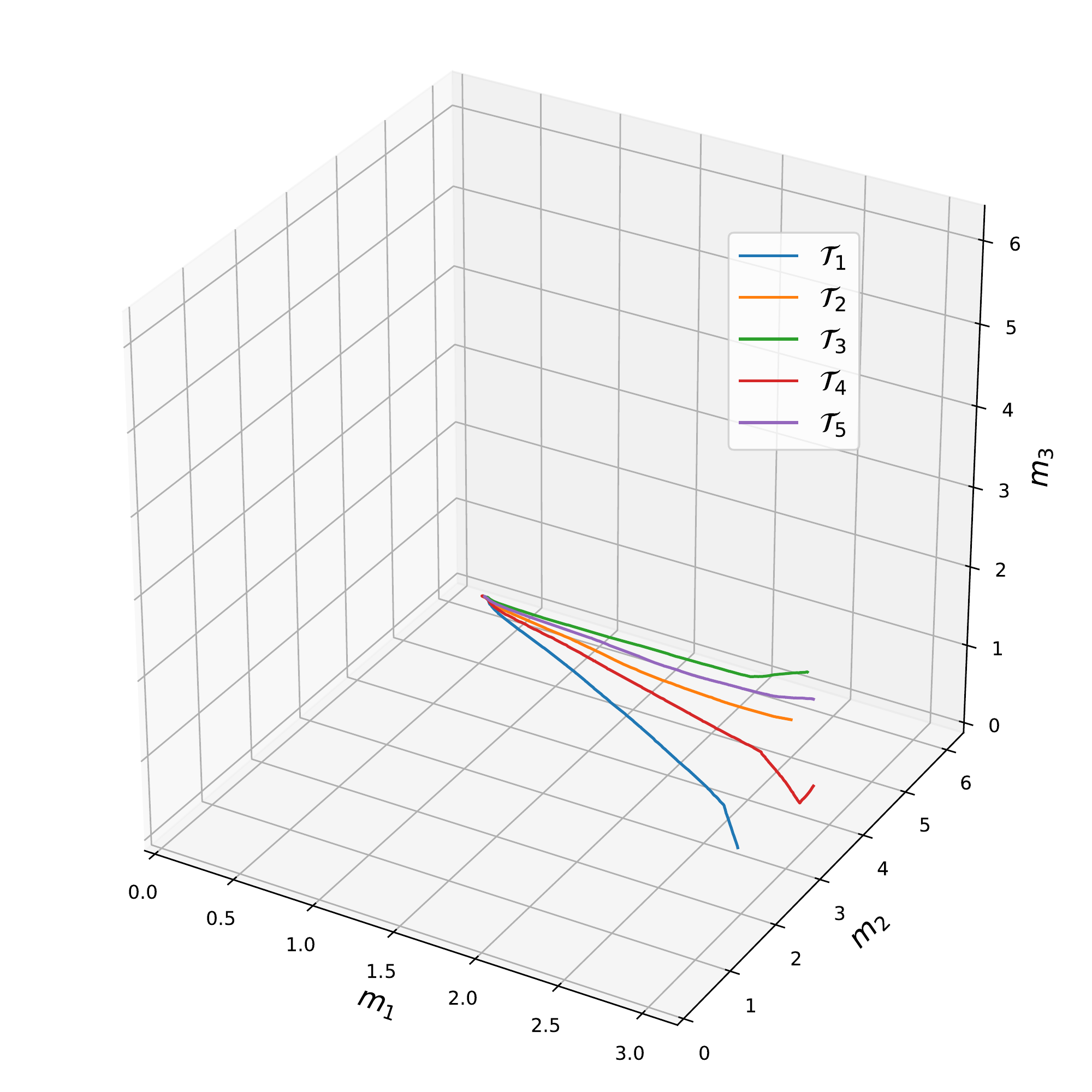}\caption{$t=800$}    
    \end{subfigure}
    \begin{subfigure}[t]{1in}
        \centering
        \includegraphics[width=1in]{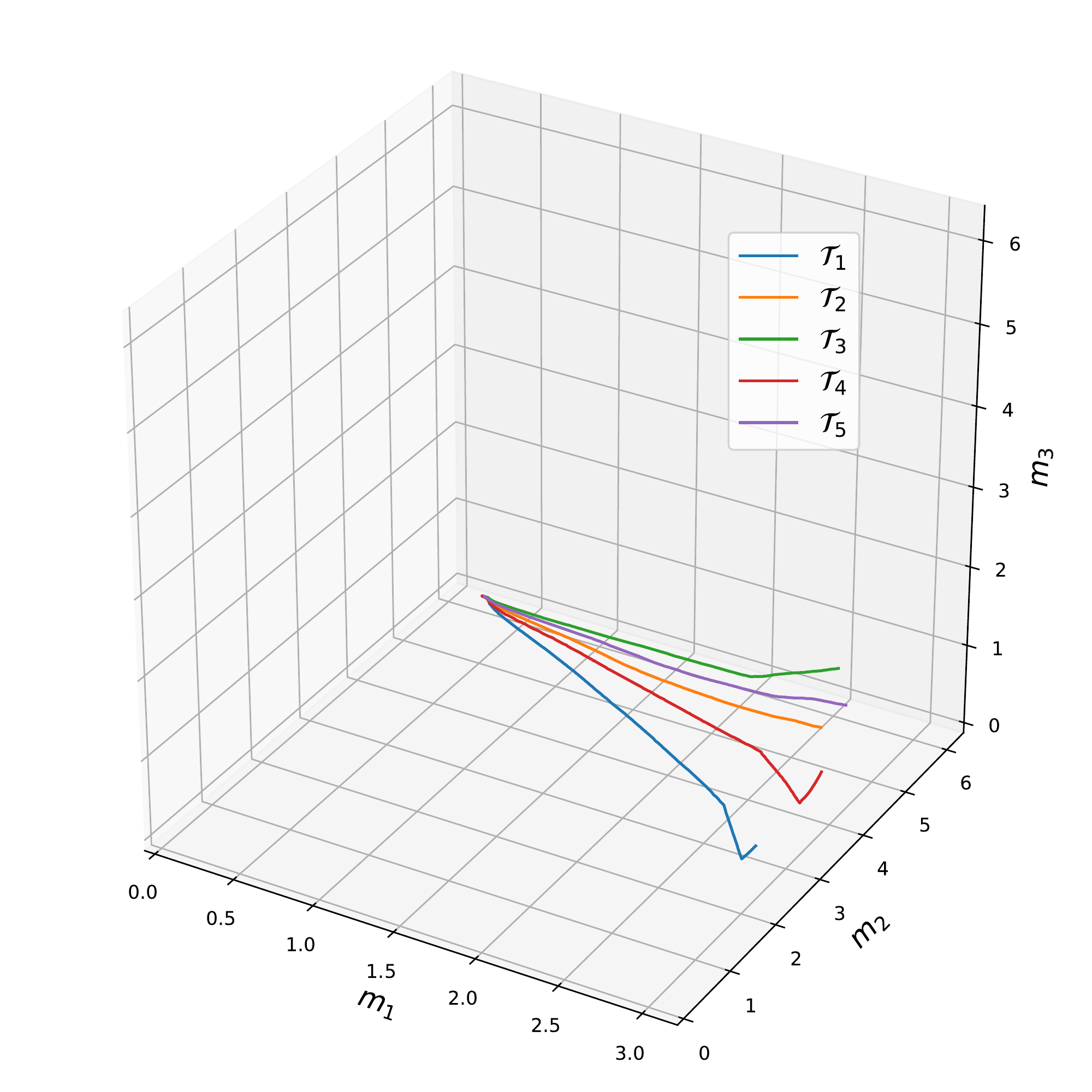}\caption{$t=1000$}    
    \end{subfigure}
    \caption{Noisy linear open-loop controller.}\label{fig:nonlinear_noisy_openloop_controller_setting1}
\end{figure}

\begin{figure}[h!]
    \centering
    \begin{subfigure}[t]{1in}
        \centering
        \includegraphics[width=1in]{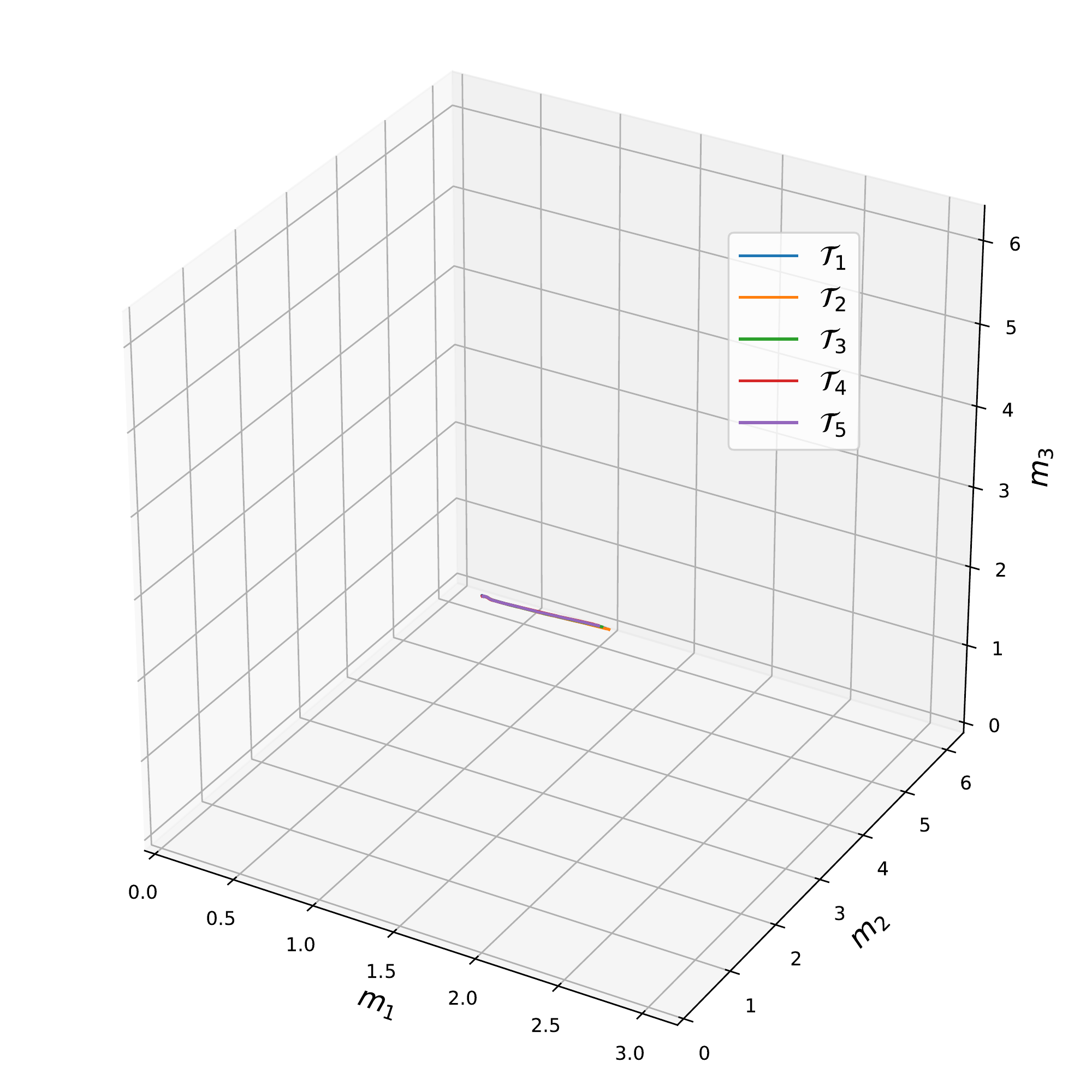}
        \caption{$t=200$}        
    \end{subfigure}
    \begin{subfigure}[t]{1in}
        \centering
        \includegraphics[width=1in]{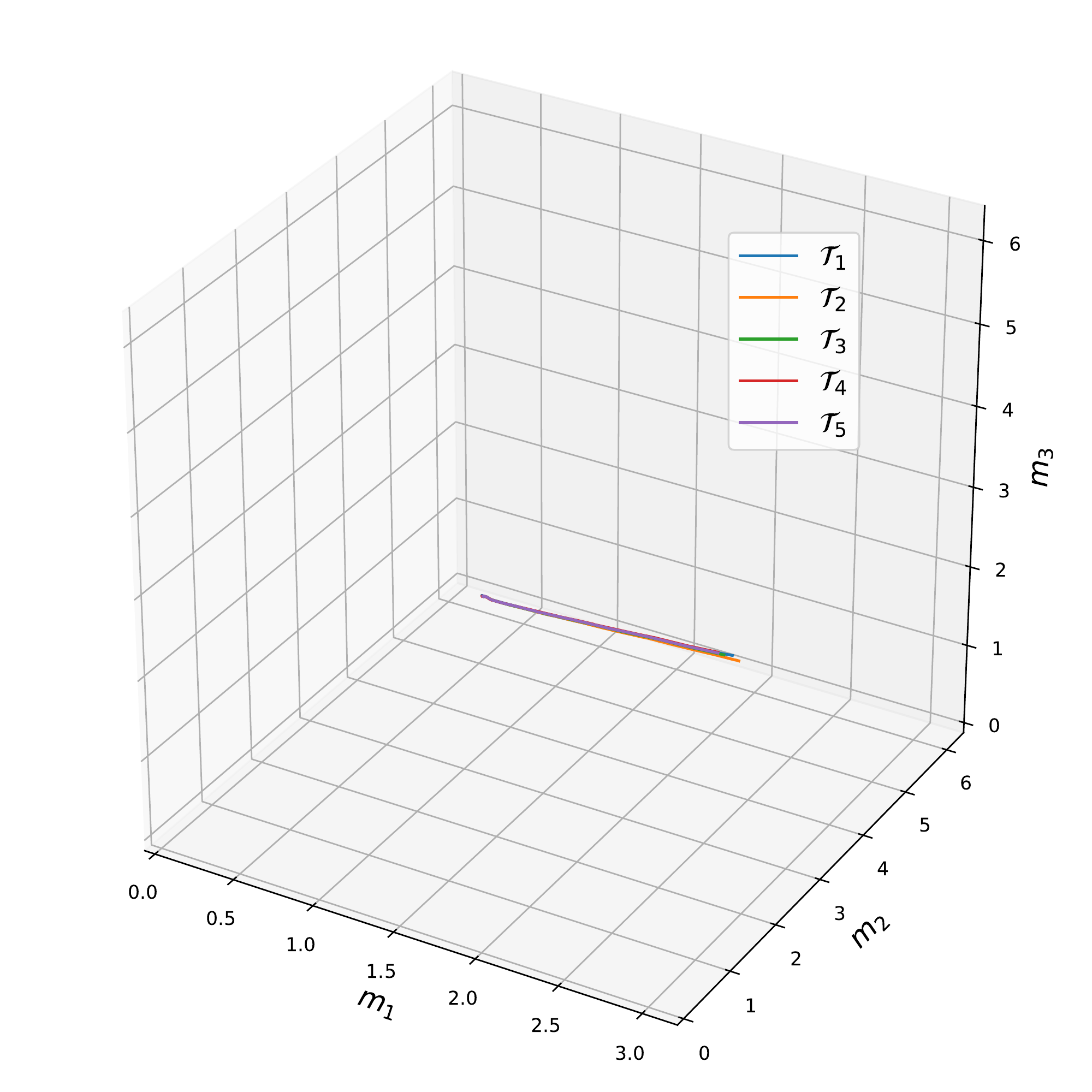}\caption{$t=400$}    
   \end{subfigure}
   \begin{subfigure}[t]{1in}
        \centering
        \includegraphics[width=1in]{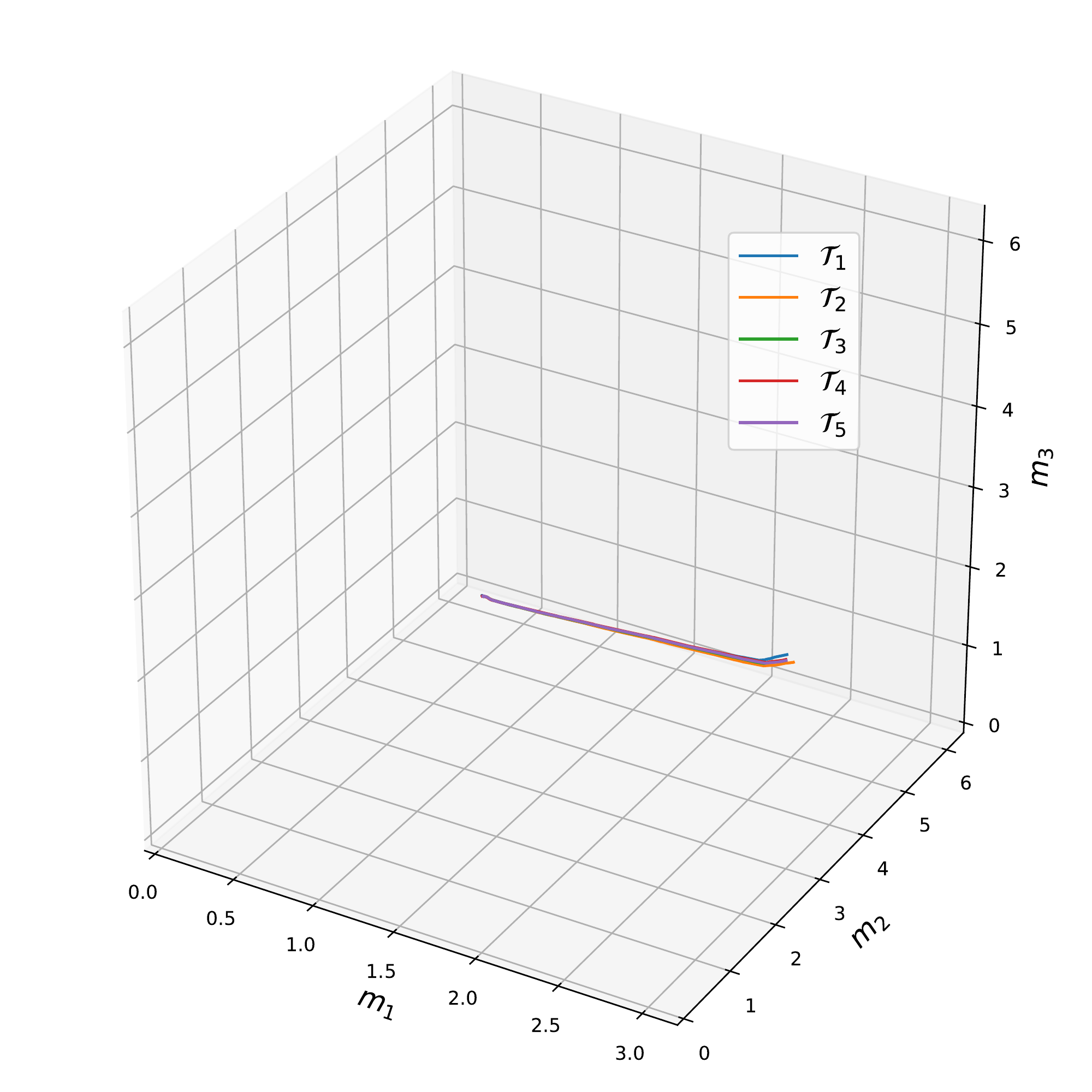}\caption{$t=600$}    
   \end{subfigure}
   \begin{subfigure}[t]{1in}
        \centering
        \includegraphics[width=1in]{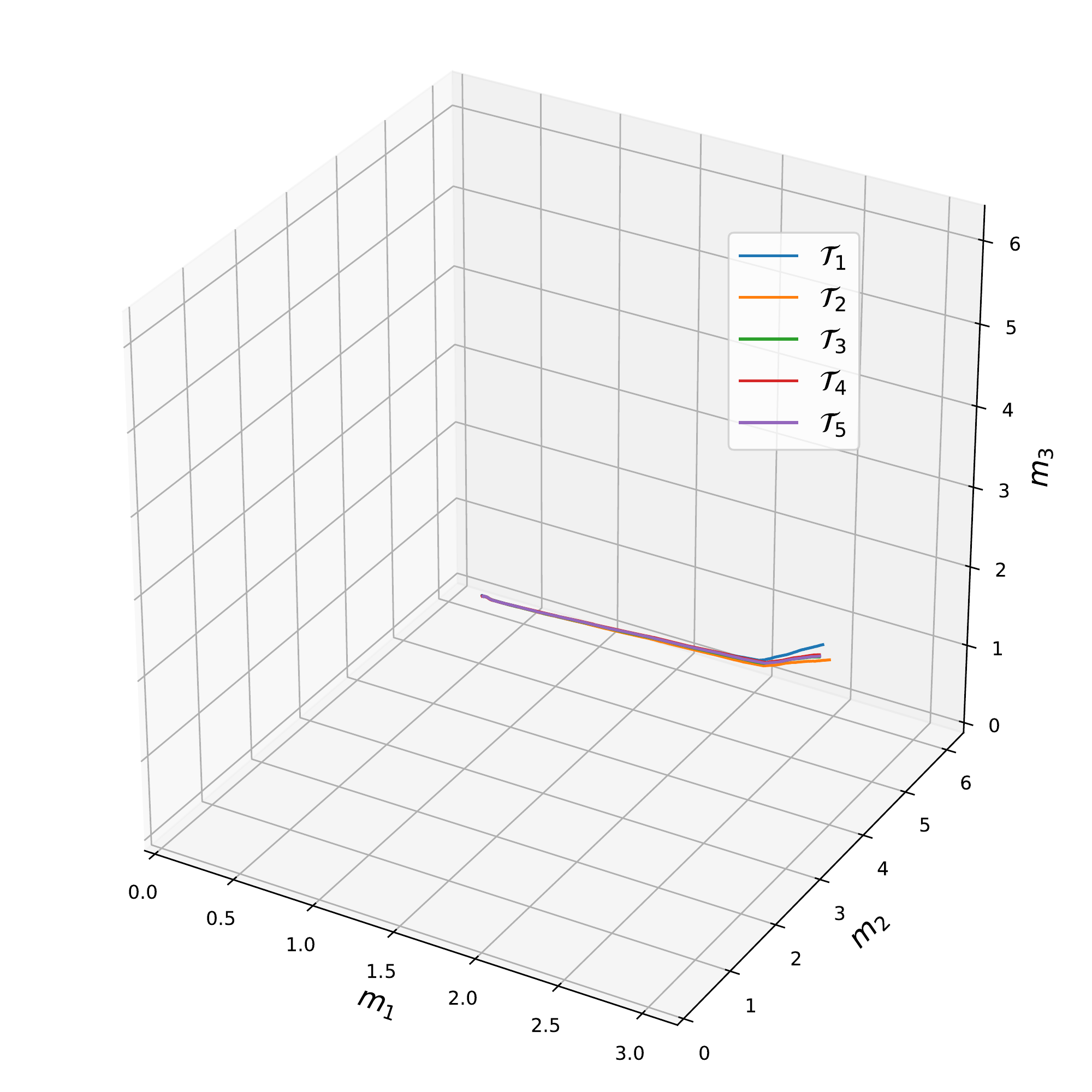}\caption{$t=800$}    
    \end{subfigure}
    \begin{subfigure}[t]{1in}
        \centering
        \includegraphics[width=1in]{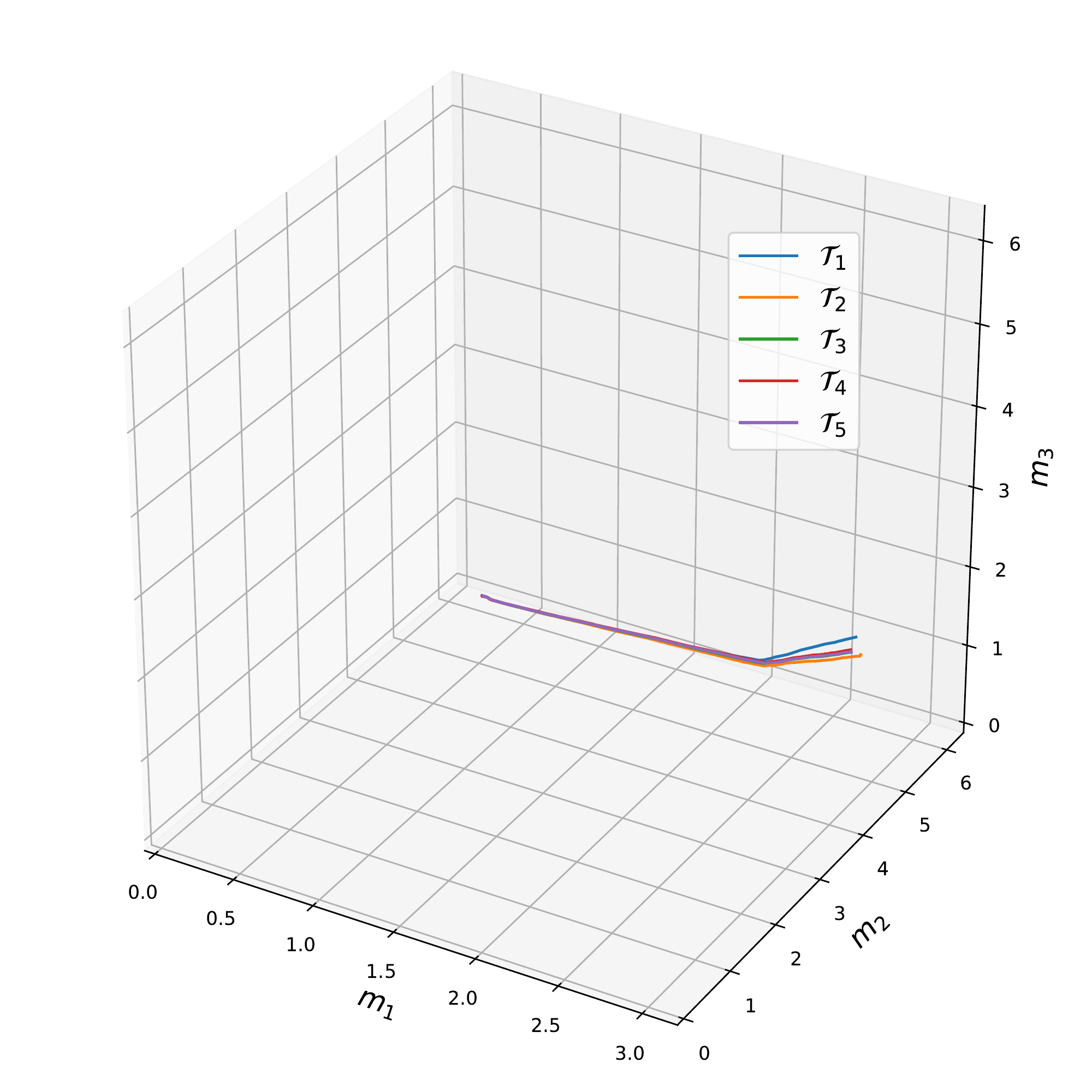}\caption{$t=1000$}    
    \end{subfigure}
    \caption{The same as~\Cref{fig:nonlinear_noisy_openloop_controller_setting1} but for a different nominal parameters of the policy.}
    \label{fig:linear_noisy_openloop_controller_setting2}
\end{figure}

\begin{figure}[h!]
    \centering
    \begin{subfigure}[b]{0.33\textwidth}
        \centering
        \includegraphics[width=0.8\textwidth]{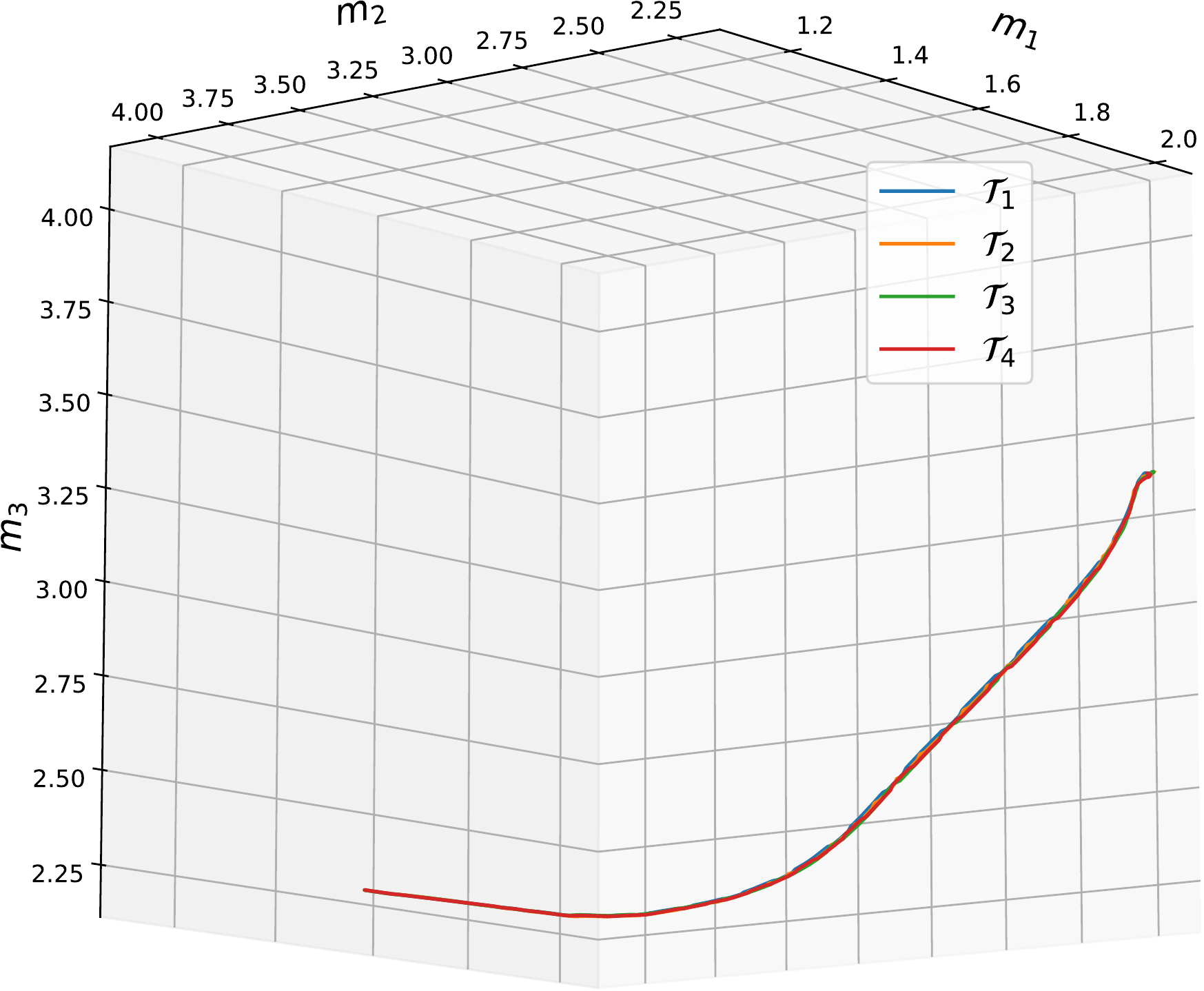}
        \caption{Low noise}        
    \end{subfigure}
    \begin{subfigure}[b]{0.3\textwidth}
        \centering
        \includegraphics[width=0.85\textwidth]{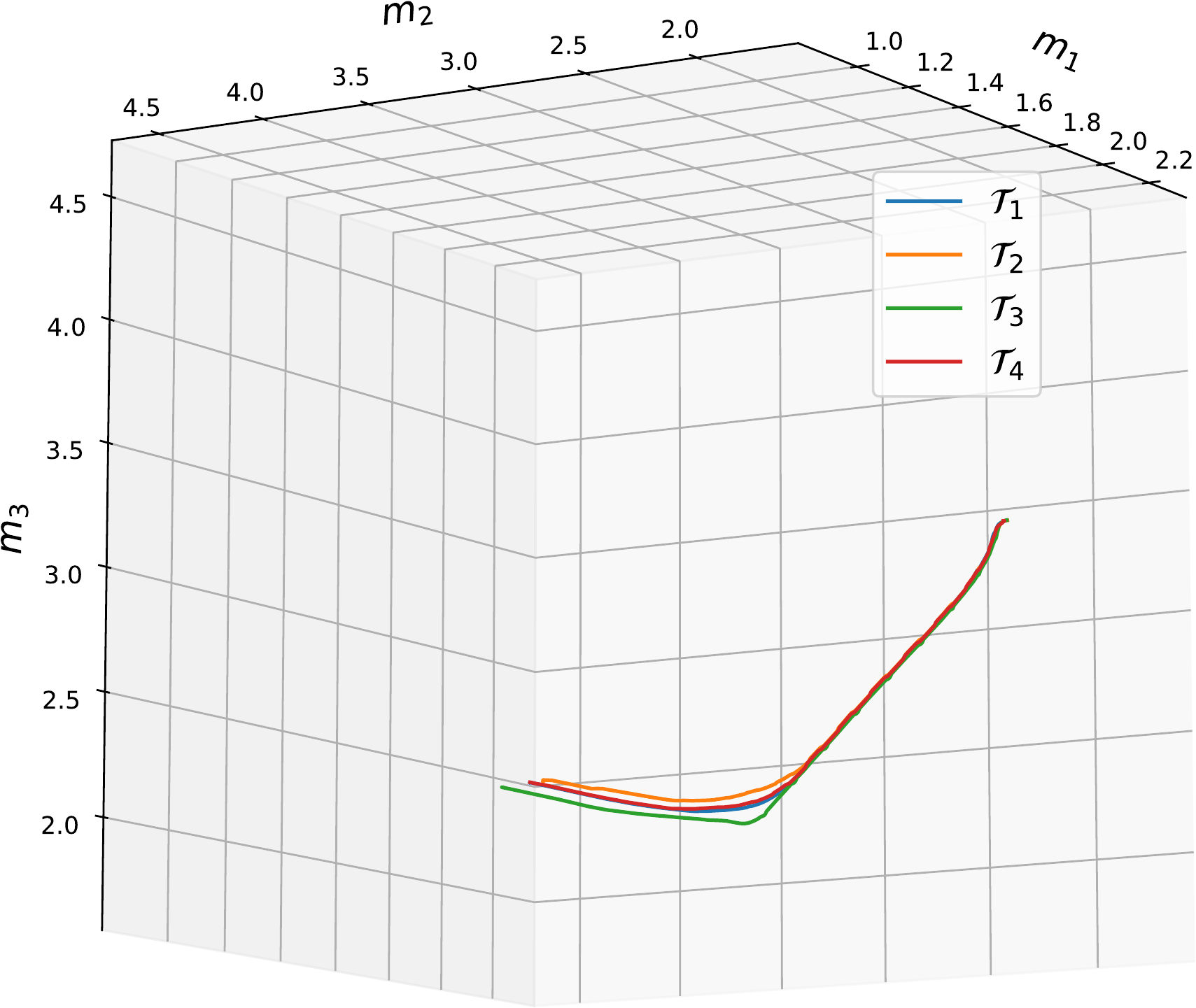}
        \caption{Medium noise}        
    \end{subfigure}
    \begin{subfigure}[b]{0.3\textwidth}
        \centering
        \includegraphics[width=0.9\textwidth]{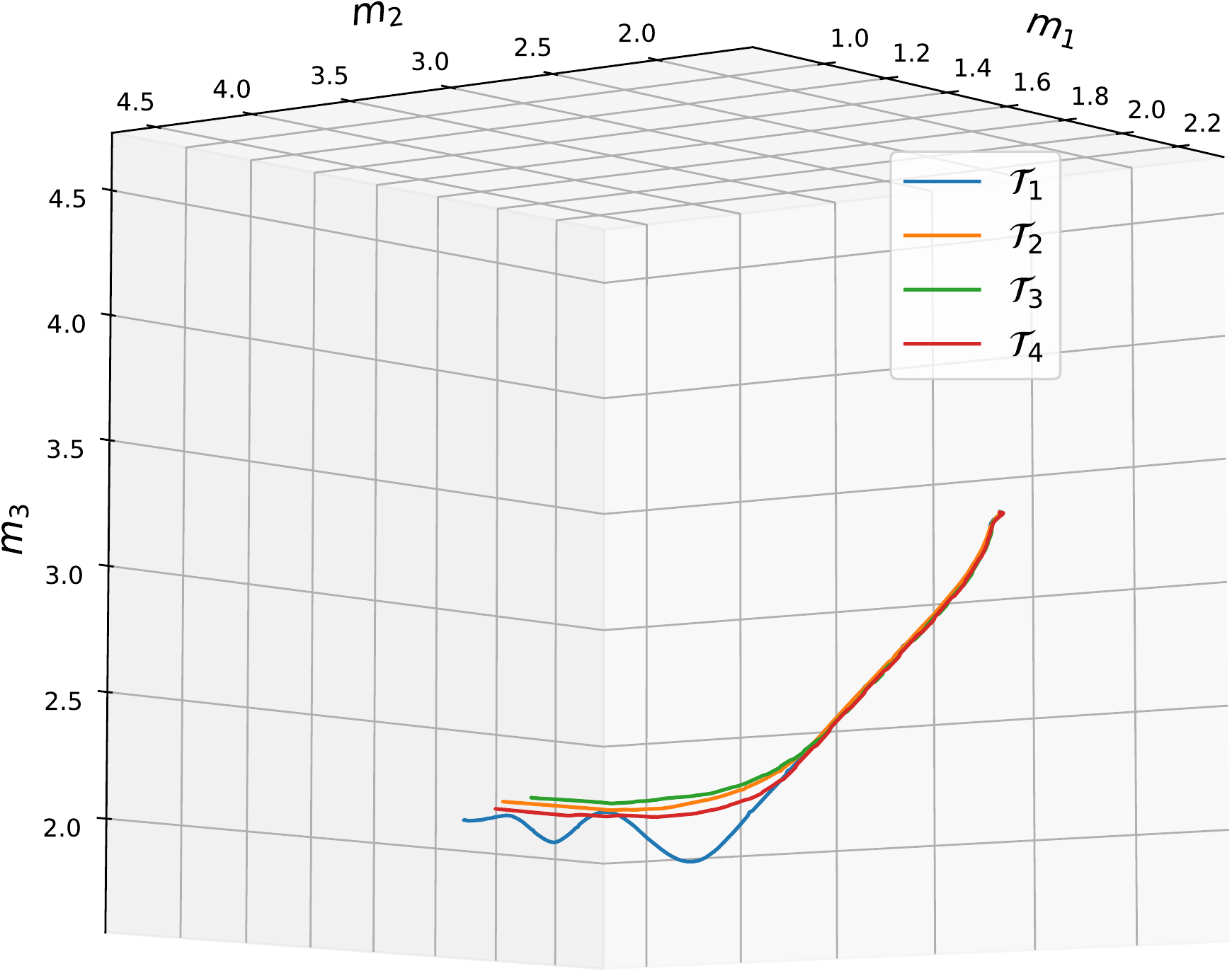}
        \caption{High noise}        
    \end{subfigure}
    \caption{PD controller with various noise intensities on $K_p$ parameter.}\label{fig:PD_perturbed_controller}
\end{figure}

\newpage

\section{Applications of physical derivatives}
\label{app:applications}

\paragraph{Robust control}
In control theory, robust control relates to the design of a controller whose performance is guaranteed for a range of systems and controllers belonging to a certain neighborhood around the nominal system~\citep{zhou1998essentials}. It is desired to have a controller that keeps the performance of the system at a certain good level even if the parameters of the controller are not fixed to the theoretical values. Assume the performance of the system is associated with some function of a trajectory $\Ecal(\Tcal)$. Changing the parameters of the controller $\thetab$ results in a change in the trajectories. This allows us to compute $\partial \Tcal / \partial \thetab$ that consequently gives us $\partial \Ecal(\Tcal) / \partial \thetab$ by the chain rule. Roughly speaking, between two sets of parameters $\thetab_1$ and $\thetab_2$, the set of parameters that gives the least $\partial \Ecal / \partial \thetab$ is preferred. This means that by perturbing the parameters of the controller and assessing the performance of the system, an estimate of the curvature of the landscape of $\Ecal(\Tcal(\thetab))$ is obtained. We prefer flatter regions of this space where a small change in $\thetab$ does not cause a drastic change in the performance metric $\Ecal$.

\paragraph{Safety}
Safety refers to the situations in which the agent may hurt itself or the environment and causes irreversible damages if it freely takes arbitrary actions~\citep{garcia2015comprehensive}. For a safety-critical system whose full physical models are hard to obtain, the physical gradients can assist in avoiding restricting the parameters of the robot to prevent unsafe behavior. The physical derivatives are learned in the Lab environment before the robot is deployed into the wild. For example, a rover whose mission is to safely explore an unknown environment often enjoys a learning loop that allows it to adapt to the new environment. Even though the learning in the new environment requires sufficient exploration, the physical derivatives can be used to give a rough simulation of the robot's next few states under a given update to its parameters. The potential harmful updates might be detected by such simulation and be avoided.

\paragraph{Adversarial system identification}
System identification concerns learning about the governing equations of the system from observed trajectories. Most systems require some input excitation to show their behavior. Normally the input is designed rich enough to elicit the important behaviors from the system~\citep{gevers2009identification}. Assuming the system would not be harmed by the input signal, this traditional approach has the downside: we never know the input signal is rich enough to excite every mode of the system. Besides, there exist scenarios when the identification must be carried out with minimal control effort. Hence, the design of the input signal must not be agnostic to the dynamics of the system. However, estimating the dynamics is the initial goal of system identification. This seeming chain problem can be solved by repetitively updating the control input and the estimated model. This results in an adversarial game between the controller and the current best estimate of the system which can be described as the following min-max problem

\begin{equation}
    \argmin_{\phib}\argmax_{\thetab, \xb_0}\Lcal= \frac{1}{T}\int_{t_0}^{t_0+T}
\lVert \int_{t_0}^t \hat{f}(\xb_m(t), \ub(\xb,t;\thetab);\phib) dt - \xb(t;\thetab)\rVert_2^2 \diff t.
\label{eq:adversarial_sysid}
\end{equation}
In this formula, $\xb_m$ denotes the states of the model parameterized by $\phib$ and the parameters of the estimated model while $\thetab$ shows the parameters of the controller. Notice that $\xb$ is the state of the physical system, which is a function of the controller parameters too. The procedure proceeds as follows: A controller with parameters $\thetab$ is applied to both models and the physical system. The produced trajectories by the model and the physical system are compared. The model parameters are updated in the direction that minimizes this distance to give a better estimate of the system. The controller parameters, on the other hand, are updated to maximize this distance. This maximization ensures that the controller drives the system towards the regions of the state space for which the current model is not yet accurate.

\section{Detailed literature review}
\label{sec:detailed_literature_review}
There has been a recent surge of interest in unsupervised methods in reinforcement learning when a task-specific reward function is not the only driving force to train the agent~\citep{baranes2013active,bellemare2016unifying, gregor2016variational,hausman2018learning, houthooft2016variational, badia2020never, sekar2020planning}. A truly intelligent agent must behave intelligently in a range of tasks, not only in a single task associated with its reward function. This requires the agent to develop some sort of \emph{general competence} that allows it to come up with solutions to new problems by combining some low-level primitive skills. This general competence is a key factor in animals to quickly and efficiently adapt to a new problem~\citep{weng2001autonomous}. By calling the traditional RL,~\emph{extrinsicially motivated RL}, the new framework is called~\emph{intrinsically motivated RL}. There have been many ideas in this line with various definitions for the terms~\emph{motivation} and~\emph{intrinsic}. Some researchers assume a developmental period in which the agent acquires some reusable modular skills that can be easily combined to tackle more sophisticated tasks~\citep{kaplan2003motivational, weng2001autonomous}. Curiosity and confidence are other unsupervised factors that can be used to drive the agent towards unexplored spaces to achieve new skills~\citep{schmidhuber1991possibility, kompella2017continual, bagaria2021skill}. Interestingly, there are observations in neuroscience that dopamine, a known substance that controls one's motivation for extrinsic rewards, is also associated with intrinsic properties of the agent, such as novelty and curiosity. A novel sensory stimulus activates the dopamine cells the same way they are activated by extrinsic reward. Children build a collection of skills accumulatively while they engage in activities without a specific goal, e.g., hitting a ball repeatedly without a long-term target such as scoring a goal. The achieved skills contribute to their stability while handling objects~\citep{touwen1992development}.

Another line of work concerns the fundamental constraints of the agent/environment and ensures those constraints are met while learning. For example, in many practical systems, learning episodes must halt if the system is likely to undergo an irreversible change. For example, the training episodes of a fragile robot must ensure the robot does not fall or will not be broken in any circumstance while acting under a certain policy. The general name~\emph{safe RL} embodies ideas to tackle such issues in current interactive learning algorithms~\citep{garcia2015comprehensive, srinivasan2020learning}. One major aspect of safety is~\emph{stability} that loosely means that states of the system converge to some invariant sets or remain within a certain bound~\citep{lyapunov1992general}. Control theory enjoys a physical model of the system to guarantee stability~\citep{khalil2002nonlinear}. When the physical model is not known in advance, the model is either learned along with the policy (model-based RL) or will be implicitly distilled in the value function (model-free RL)~\citep{sutton2018reinforcement}. Stability can be categorized as an intrinsic motivation for the agent. No matter what task the agent aims to solve, it must remain stable all the time. Learning the transition model, which is the major concern of model-based RL, can also be seen as intrinsic motivation. The agent learns to predict the future step given the current state. The advantage of learning a model\textemdash even inaccurately\textemdash is twofold: the agent would know where to go and where not to go. It knows which regions of the state space are unsafe to explore and must be avoided. It also knows which regions are unexplored and might be informative to improve the model. This brings us to another view to intrinsic reward that encourages~\emph{diversity}.

Our work is also relevant to sensitivity analysis and its use in training the parameters of dynamical models. After~\citeauthor{chen2018neural}'s NeuralODE on training neural networks by sensitivity analysis of the network parameters, the method was successfully applied to various tasks such as learning dynamics~\citep{rudy2019deep}, optimal control~\citep{han2018mean}, and generative models~\citep{grathwohl2018ffjord}. Our method can be seen as a mode-free sensitivity analysis in real-world systems. In NeuralODE, the gradient with respect to the parameters requires solving ODEs for both states and adjoint states that require a transition model. Since we work directly on the physical system, we don't need to calculate the integrals forward in time. The system itself acts as a physical ODE solver.

The importance of learning from unlabelled experiences is a known fact in animals. Many animals function efficiently soon after birth before being exposed to a massive labeled experience. Part of it might be due to unsupervised learning, but the major part of the story can be a genetic heritage after years of evolution that~\citeauthor{zador2019critique} called \emph{genomic bottleneck}. The same idea turned out to be valid in statistical learning, where an automatically discovered neural network architecture performs surprisingly well with a shared random weight~\citep{weightagnostic2019}. The embedded inductive bias in the neural network architectures could be analogous to the wiring of the brain of animal babies, which transfers from generation to generation by genes.

\section{Solution to inherent noise}

\subsection{Solution to temporal noise}
\label{sec:temporal_noise}
Fortunately, this type of noise is not state-dependent by definition. If we find out how much a trajectory is shifted in time with respect to another trajectory, we can simply shift the trajectory for those many time steps and compensate for the delay. Hence, the problem becomes detecting the lagged trajectories with respect to a reference trajectory and also estimating the amount of the required time shift to compensate for the delay. We can either use physical landmarks in the trajectories to align them or use the correlation between them as a measure of alignment. The latter gave better results. Hence, we postpone the description of the former to the~\Cref{sec:detecting_zero_crossing}.
% and estimating their phase difference with the other nominal trajectories. We propose two solutions to this issue.

\paragraph{Correlation-based delay estimation}

\sloppy
In this method, we use the correlation between zero-meaned trajectories $\Tcal^{(i)}$ and $\Tcal^{(j)}$ to check if one is the lagged version of the other one. The delay $\tau$ is found by
\begin{equation}\label{eq:synchronization}
  \tau^* = \argmax_\tau \sum_{t=0}^{T-\tau} \langle S_\tau \xb^{(i)}_t, \xb^{(j)}_t\rangle
\end{equation}
where $S_\tau$ is a shift-operator by $\tau\in \ZZ$ time steps. In practice, we take one trajectory of $\{\Tcal^{(1)}, \Tcal^{(2)}, \ldots, \Tcal^{(M)}\}$, e.g. $\Tcal^{(r)}$ as the reference and synchronize other trajectories with respect to it using~\Cref{eq:synchronization}. The trajectories must be initially normalized to avoid trivial solutions where every trajectory is pushed towards the larger parts of the reference trajectory. For illustrative purposes, the plots of~\Cref{fig:time_shift_effect} show a sample of the lagged trajectory from the finger platform and its correction by the above method.

\begin{figure}[h!]
    \centering
    \begin{subfigure}[t]{0.4\textwidth}
        \centering
        \includegraphics[width=\textwidth]{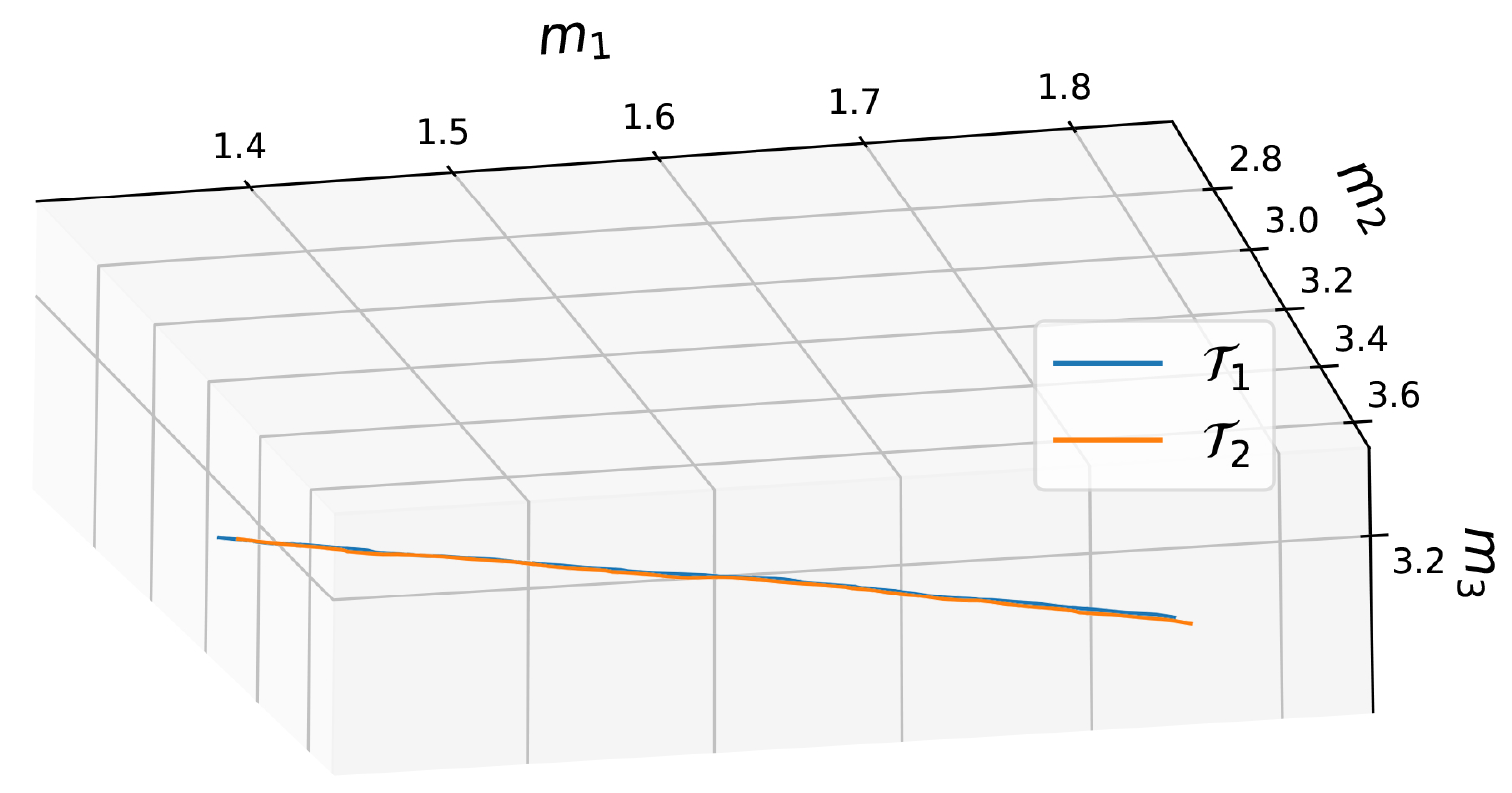}
        \caption{Trajectories affected by temporal noise}        
    \end{subfigure}
   \begin{subfigure}[t]{0.4\textwidth}
        \centering
        \includegraphics[width=\textwidth]{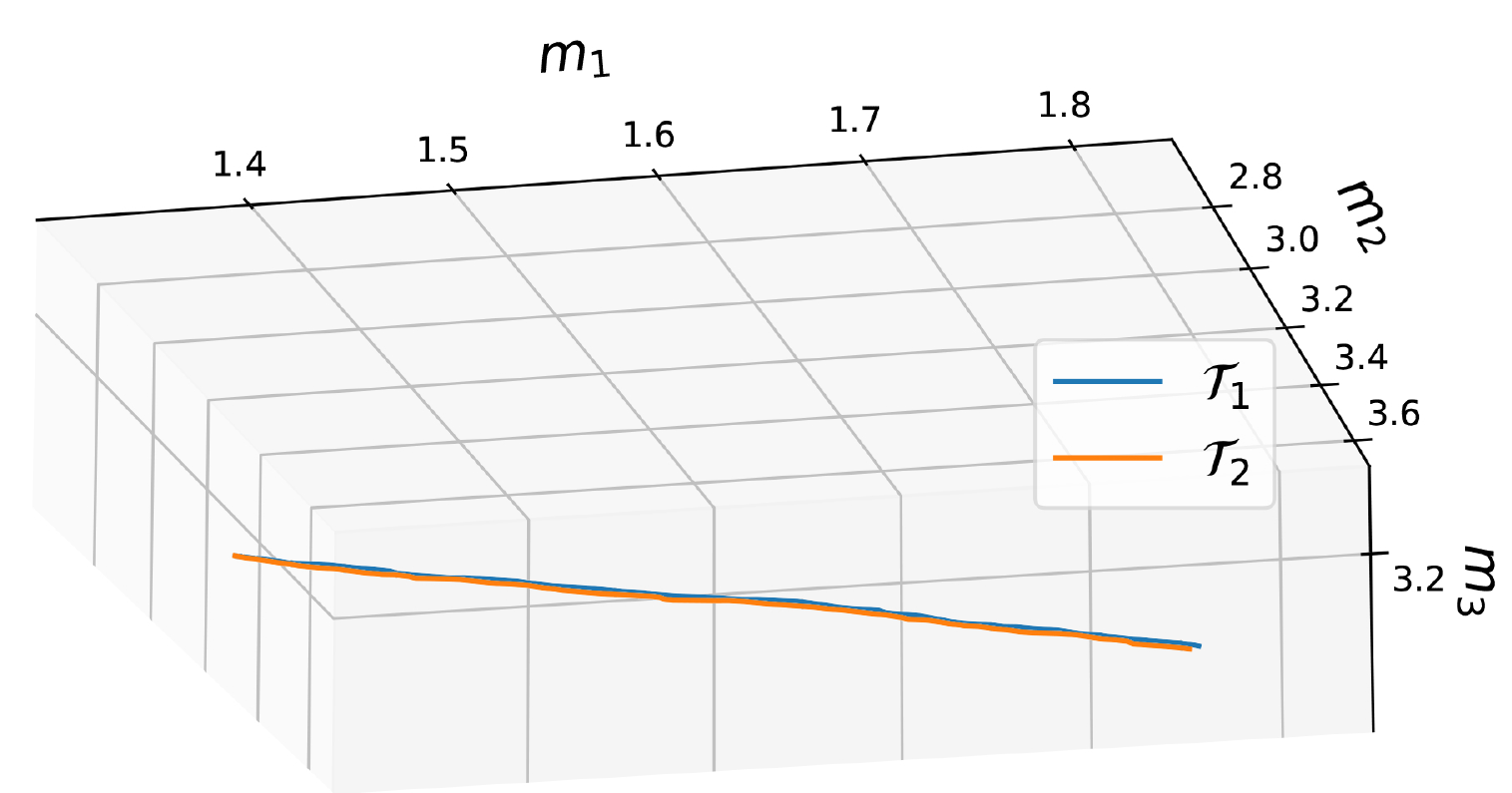}\caption{Temporally aligned trajectories}    
    \end{subfigure}
    \caption{The effect of temporal noise in delaying one trajectory versus the other one and its correction. The trajectories are produced by the linear open-loop controller similar to those used in~\Cref{sec:linear_openloop_controller} (Zooming is recommended).}\label{fig:time_shift_effect}
\end{figure}

\subsection{Solution to spatial noise}
\label{sec:spatial_noise}
The spatial noise can be a stochastic function of the actuator, environmental change, and electronic drivers. In a perfect model of the transition dynamics $\xb_{t+1}=f(\xb_t, \ub_t)$, applying the same control sequence $\{\ub_0, \ub_1, \ldots, \ub_{T-1}\}$ always results in the same sequence of states $\{\xb_1, \xb_2, \ldots, \xb_T\}$ when it starts from the same initial state $\xb_0$. This assumption is often violated in physical systems as different runs of the same policy may result in different trajectories, as can be seen in~\Cref{fig:same_controller_multiple_runs_setting1} in the Appendix. The noise in the dynamics can be any function of states, input, and time. Therefore, it is difficult to model this noise since it requires a prohibitively large number of random experiments. The good news is that if the physical system is built properly, the effect of this noise is expectedly low. Based on our observations from the finger platform, we can assume the following.

\begin{figure}[t!]
    \centering
    \begin{subfigure}[t]{0.24\textwidth}
        \centering
        \captionsetup{justification=centering}
        \includegraphics[width=\textwidth]{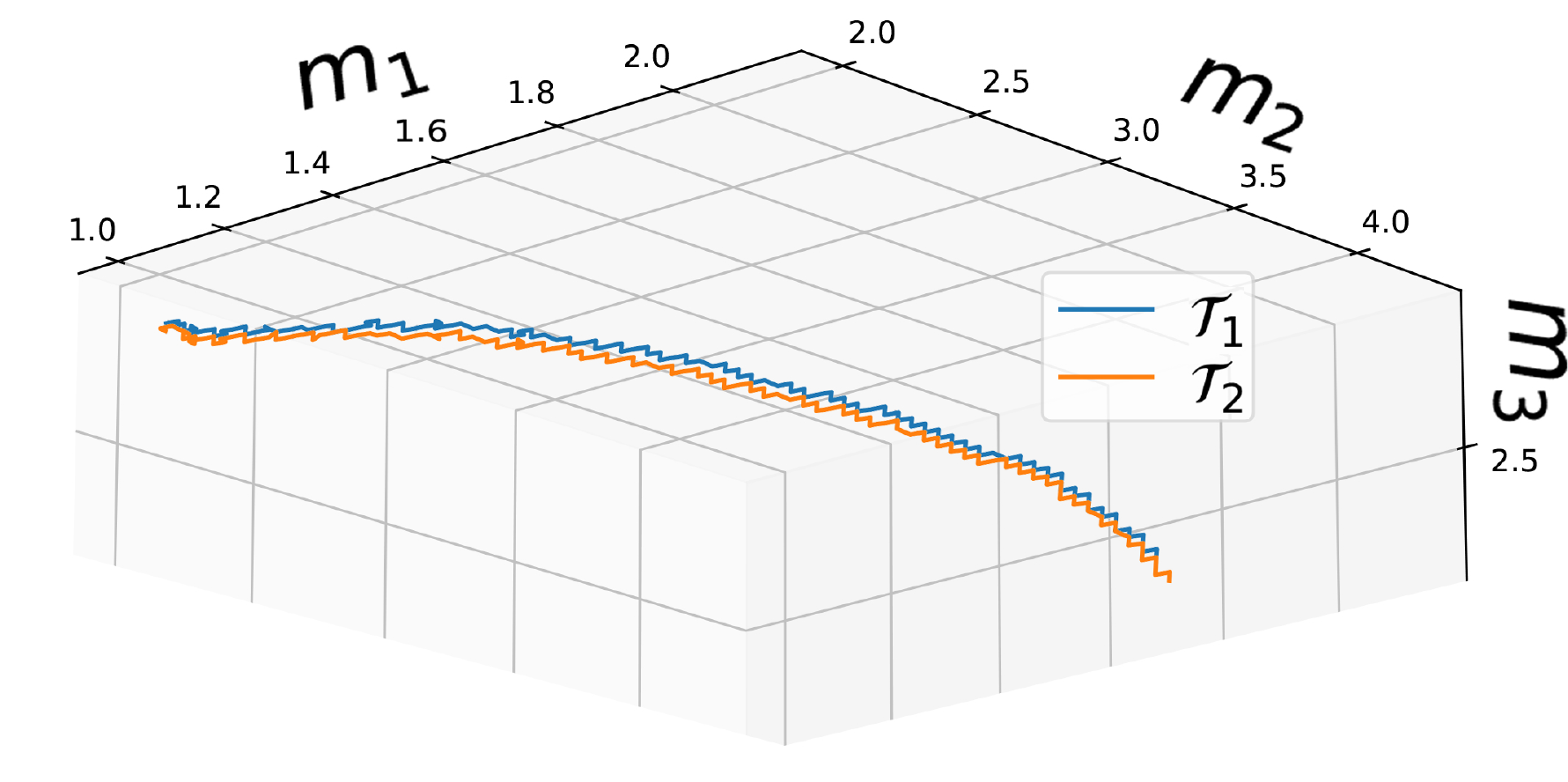}
        \caption{Small voxels \\ ($\gamma=0.01$)}        
    \end{subfigure}
   \begin{subfigure}[t]{0.24\textwidth}
        \centering
        \captionsetup{justification=centering}
        \includegraphics[width=\textwidth]{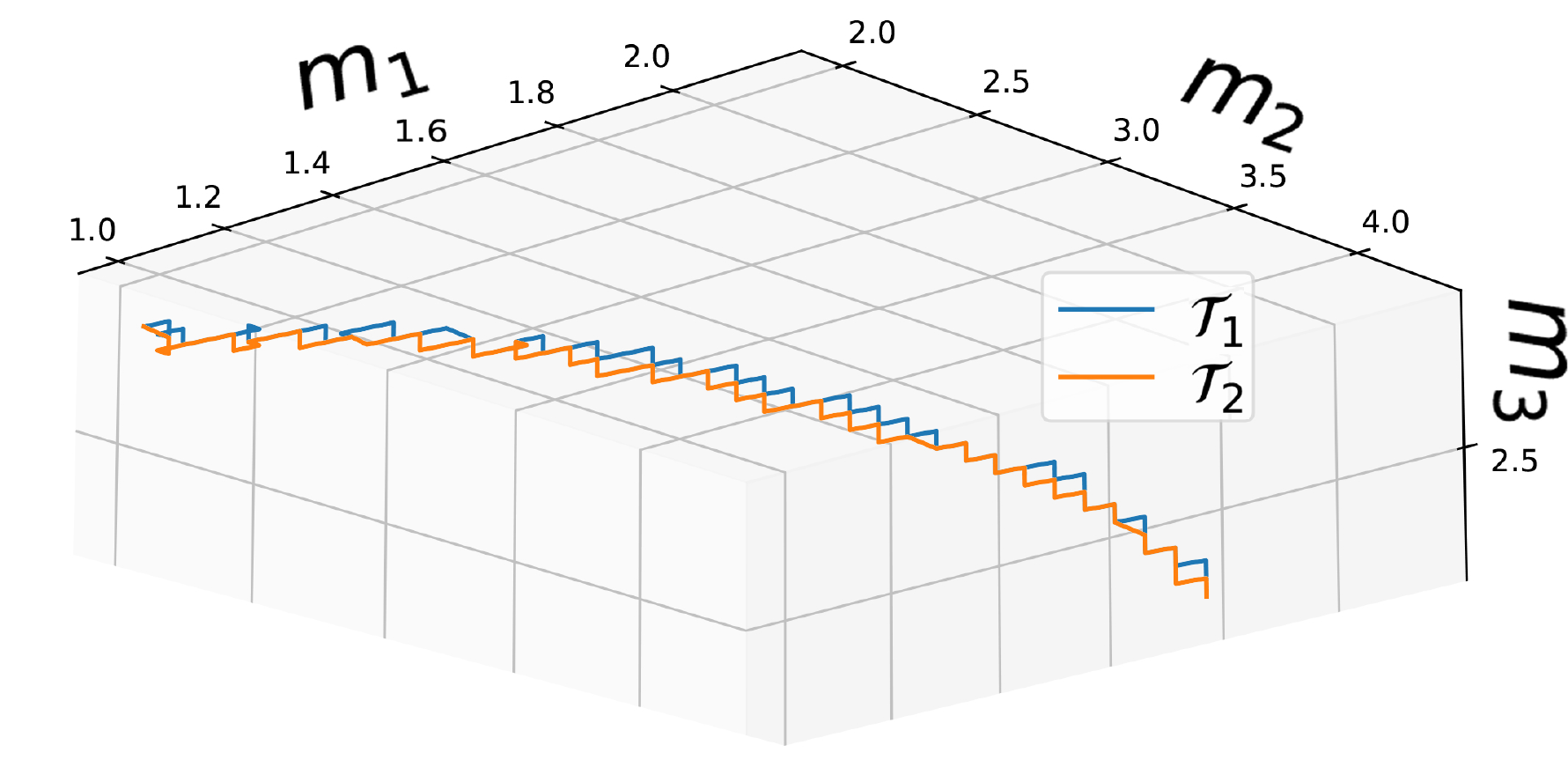}\caption{Medium voxels \\ ($\gamma=0.04$)}    
    \end{subfigure}
    \begin{subfigure}[t]{0.24\textwidth}
        \centering
        \captionsetup{justification=centering}
        \includegraphics[width=\textwidth]{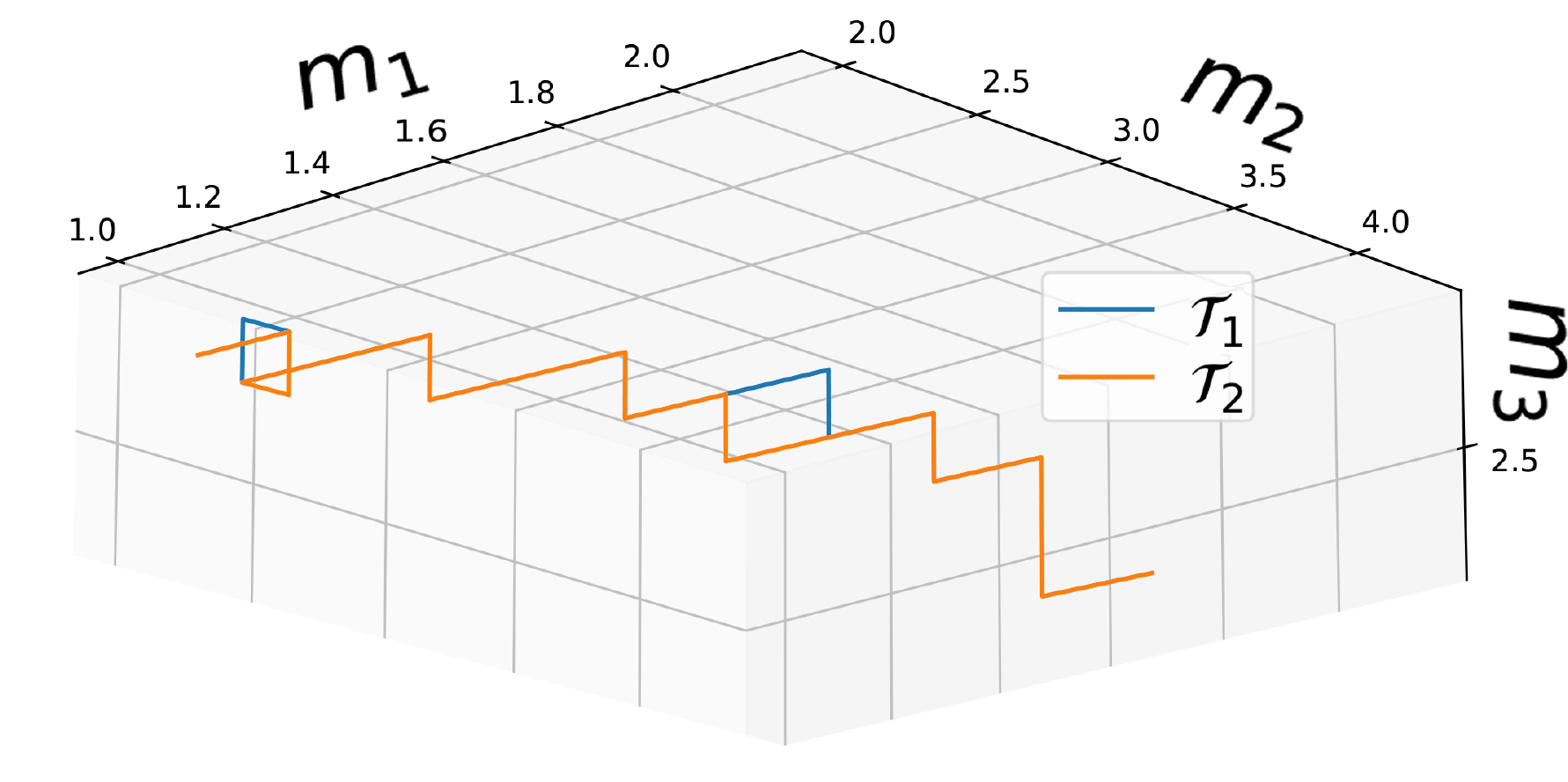}\caption{Large voxels \\($\gamma=0.16$)}    
    \end{subfigure}
    \begin{subfigure}[t]{0.24\textwidth}
        \centering
        \captionsetup{justification=centering}
        \includegraphics[width=\textwidth]{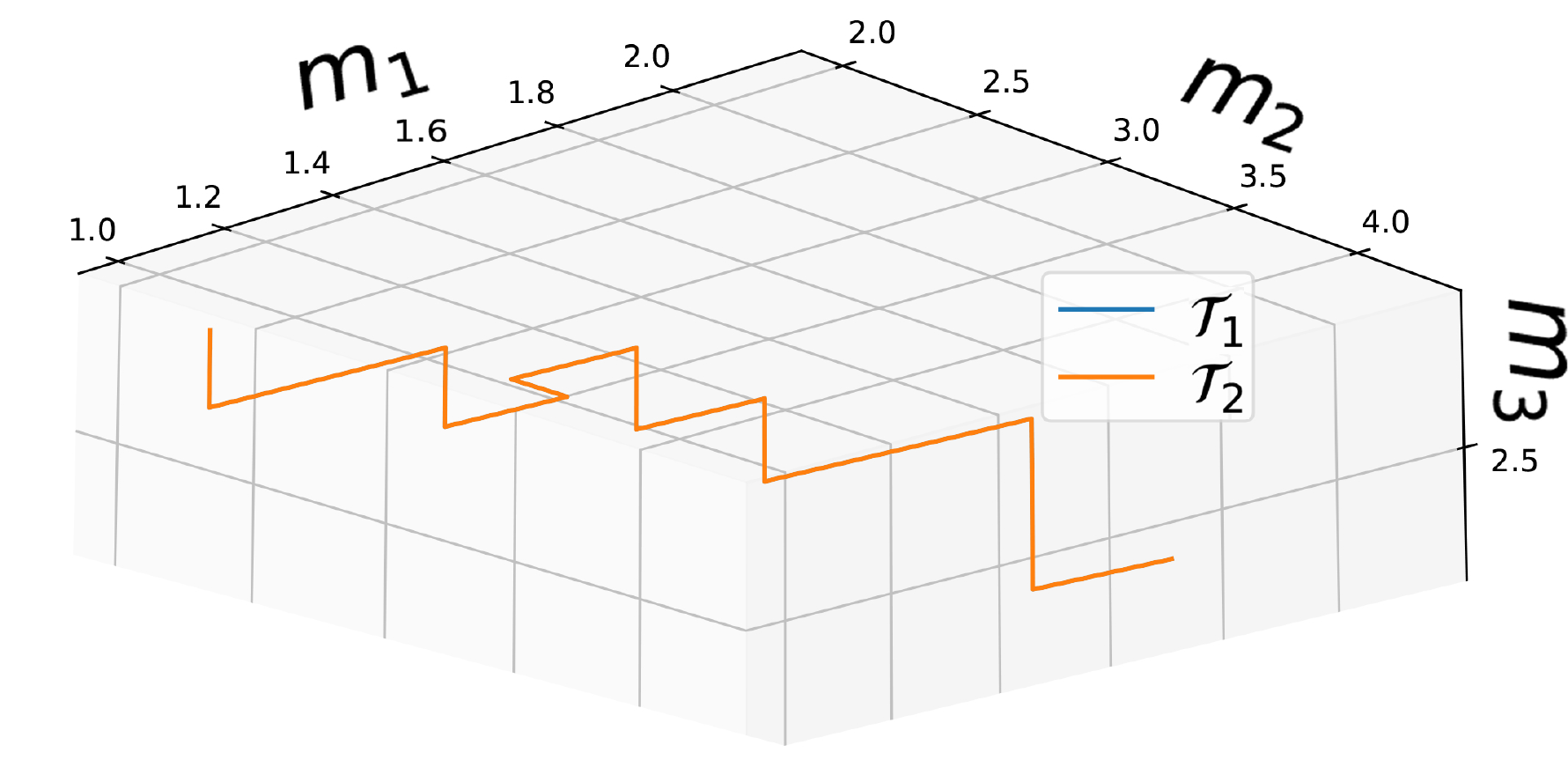}\caption{Large voxels \\ ($\gamma=0.2$)}    
    \end{subfigure}
    \caption{The effect of voxels on supressing spatial noise of the physical system. The trajectories are produced by linear open-loop controllers as those in~\Cref{sec:linear_openloop_controller} for the purpose of illustrating the effect of voxelization.}\label{fig:voxel_effect}
\end{figure}

\begin{assumption}\label{assum:limited_physical_noise}
    Limit on the physical noise: Let's the control sequence $\Ub=\{\ub_0, \ub_1, \ldots, \ub_{T-1}\}$ be applied to the system $M$ times resulting in multiple sequence of states $\Tcal^{(1)}, \Tcal^{(2)}, \ldots, \Tcal^{(M)}$. There exists a relatively small $\zeta$ such that
  \begin{equation}
    \lVert \Tcal^{(i)} - \Tcal^{(j)}\rVert_\infty \leq \zeta\;\; \mathrm{for\; every}\;\; i, j \in \{1, 2, \ldots, m\}.
  \end{equation}
\end{assumption}
The word \emph{relatively} here means that the change of the trajectory due to the inherent physical noise of the system must be small compared to the change of the trajectories when the parameters of the policy are perturbed.

To reduce the sensitivity of the estimated gradient to this unwanted spatial noise, we divide the state space of the physical system into regularly located adjacent cells called \emph{voxels}. Each voxel $vox(\cbb)$ is represented by its center $\cbb$ and is defined as
\begin{equation}
  vox(\cbb) = \{\xb\in\Xcal\; |\; \lVert \xb -\cbb \rVert_\infty\leq \gamma\}
\end{equation}
where $\gamma$ is the parameter of the voxelization.
The concept of the voxel is roughly used as a \emph{superstate}. Every state that ends up within $vox(\cbb)$ gives rise to the same superstate. After recording the trajectories from the robot, every state is mapped to the center of the voxel it belongs to as
\begin{equation}
  \cbb \leftarrow \xb\; \mathrm{for}\; \xb\in vox(\cbb)
\end{equation}

For simplicity, we denote the center $\cbb$ of the voxel that $\xb$ belongs to with $\cbb_{\gamma}(\xb)$. After voxelization, we work with $\cbb_{\gamma}(\xb)$ instead of $\xb$. For example, all the gradients of~\eqref{eq:gradients_in_all_directions} are computed as $\nabla_\thetab \cbb$ rather than $\nabla_\thetab \xb$. To illustrate the positive effect of voxelization of the state space, it can be seen in~\Cref{fig:voxel_effect} that increasing the voxel size improves the overlapping between two trajectories that deviate from each other due to the inherent spatial noise of the system not because of perturbing the parameters of the policy, but because of the inherent imperfection of the mechanical and electrical components of the system. This benefit comes with a cost which is the error introduced by voxelization. Fortunately, this error is bounded due to the following lemma.

\begin{lemma}
\label{lem:voxelization_boudnedn_error}
  The additional error caused by voxelization $\cbb_{\gamma}(.)$ is bounded proportional to the size of each voxel $\gamma$.
\end{lemma}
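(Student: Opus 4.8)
The plan is to read the ``additional error'' as the gap between the finite-difference directional derivative computed from the raw recorded states and the same quantity computed after every state has been snapped to its voxel center $\cbb_\gamma(\cdot)$, and then to bound this gap directly from the defining metric property of a voxel. The only fact about voxelization I would use is that every state lies within $\ell_\infty$-distance $\gamma$ of its assigned center, namely $\lVert \xb - \cbb_\gamma(\xb)\rVert_\infty \leq \gamma$ for all $\xb\in\Xcal$, which is immediate from the definition $vox(\cbb)=\{\xb : \lVert\xb-\cbb\rVert_\infty\leq\gamma\}$ together with the snapping rule $\cbb\leftarrow\xb$.

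First I would fix a time step $t$, a perturbation direction $\delta\thetab^{(j)}$, and the finite-difference step $h$, and write the raw estimator of \eqref{eq:finite_difference_directional_derivative} next to its voxelized counterpart, in which $\xb(\thetab+h\delta\thetab^{(j)})$ and $\xb(\thetab)$ are replaced by their centers. Subtracting the two and applying the triangle inequality to the numerator isolates exactly two voxelization displacements, each of $\ell_\infty$-norm at most $\gamma$, giving
\begin{equation*}
\left\lVert \frac{\cbb_\gamma(\xb(\thetab+h\delta\thetab^{(j)})) - \cbb_\gamma(\xb(\thetab))}{h} - \frac{\xb(\thetab+h\delta\thetab^{(j)}) - \xb(\thetab)}{h}\right\rVert_\infty \leq \frac{2\gamma}{h}.
\end{equation*}
Because the $\ell_\infty$-norm bounds each coordinate, this covers the componentwise estimate on $x_{it}$ as well. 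Since the inequality holds for every $t$ and every column $\delta\thetab^{(j)}$, the same linear-in-$\gamma$ bound transfers (up to a dimension-dependent constant from the norm conversion) to the assembled matrix $\Delta_\thetab\xb$ of \eqref{eq:gradients_in_all_directions}, and hence to any directional derivative reconstructed from it, which is exactly the ``additional error'' the lemma refers to.

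The hard part is conceptual rather than computational: because $\cbb_\gamma(\cdot)$ is piecewise constant it is nondifferentiable, so one cannot send $h\to 0$, and the statement has to be read as a bound on the finite-difference estimator at a \emph{fixed} step $h>0$---the constant hidden in ``proportional to $\gamma$'' is $2/h$. I would state this dependence explicitly and also address the degenerate regime in which both evaluations land in the same voxel and the voxelized derivative collapses to zero; the absolute bound $2\gamma/h$ still holds there, and by \Cref{assum:limited_physical_noise} the voxel size is chosen small relative to the genuine inter-trajectory displacements, so this case does not arise for the perturbations actually applied. The remaining step, converting the $\ell_\infty$ estimate into the norm used by the downstream GP regression and the aggregate metrics, only rescales the constant and is routine.
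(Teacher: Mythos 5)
Your argument is correct and is essentially the paper's own proof: both reduce the claim to the single fact $\lVert \xb - \cbb_{\gamma}(\xb)\rVert_\infty \leq \gamma$ and a triangle inequality that isolates exactly two snapping displacements, yielding the additive $2\gamma$ bound (the paper's remark that the proportionality constant is upper-bounded by $2$). The only difference is one of framing: the paper states the bound in expectation over the spatial noise for the undivided state differences $(\yb^{(2)}_t - \yb^{(1)}_t)$ versus the noiseless $(\xb^{(2)}_t - \xb^{(1)}_t)$, whereas you compare the voxelized and raw finite-difference estimators deterministically at fixed step $h$, making the hidden constant $2/h$ explicit --- a harmless and arguably sharper reading of the same argument.
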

\begin{proof}
    Assume that we have two noisy (spatial noise) trajectories $\yb^{(1)}_t=\xb^{(1)}_t + \epsilon^{(1)}_t, \yb^{(2)}_t = \xb^{(2)}_t + \epsilon^{(2)}_t, \textrm{ for } t \in \{1, 2, \dots, T\}$ where $\xb^{(1)}_t, \xb^{(2)}_t$ are primary noiseless trajectories having significant difference that we want to calculate derivatives for and spatial errors $\epsilon^{(1)}_t, \epsilon^{(2)}_t$ comes from a distribution. The error induced in the standard case is $\mathbb{E}_{\epsilon^{(1)}_t, \epsilon^{(2)}_t} [|| (\yb^{(2)}_t -  \yb^{(1)}_t) - (\xb^{(2)}_t - \xb^{(1)}_t) ||_\infty] = \mathbb{E}_{\epsilon^{(1)}_t, \epsilon^{(2)}_t} [|| (\yb^{(2)}_t - \xb^{(2)}_t) - (\yb^{(1)}_t - \xb^{(1)}_t) ||_\infty] = \mathbb{E}_{\epsilon^{(1)}_t, \epsilon^{(2)}_t} [|| \epsilon^{(2)}_t - \epsilon^{(1)}_t ||_\infty]$, while the error in the voxelized world is $\mathbb{E}_{\epsilon^{(1)}_t, \epsilon^{(2)}_t} [|| (\cbb_{\gamma}(\yb^{(2)}_t) -  \cbb_{\gamma}(\yb^{(1)}_t)) - (\xb^{(2)}_t - \xb^{(1)}_t) ||_\infty] = \mathbb{E}_{\epsilon^{(1)}_t, \epsilon^{(2)}_t} [|| (\cbb_{\gamma}(\yb^{(2)}_t) - \yb^{(2)}_t) - (\cbb_{\gamma}(\yb^{(1)}_t) - \yb^{(1)}_t) + (\yb^{(2)}_t -  \yb^{(1)}_t) - (\xb^{(2)}_t - \xb^{(1)}_t) ||_\infty]$. Using triangle inequality and knowing that $||\cbb_{\gamma}(\yb^{(i)}_t) - \yb^{(i)}_t||_\infty \leq \gamma$, we have
    \begin{equation*}
        \mathbb{E}_{\epsilon^{(1)}_t, \epsilon^{(2)}_t} [|| (\cbb_{\gamma}(\yb^{(2)}_t) -  \cbb_{\gamma}(\yb^{(1)}_t)) - (\xb^{(2)}_t - \xb^{(1)}_t) ||_\infty] \leq 2 \gamma + \mathbb{E}_{\epsilon^{(1)}_t, \epsilon^{(2)}_t} [|| \epsilon^{(2)}_t - \epsilon^{(1)}_t ||_\infty]
    \end{equation*}
    Hence, the lemma is proved.
\end{proof}

The voxels become boxes in 3D as in~\Cref{fig:voxel_boxes}. The gradient is estimated as the distance between two points in 3D coordinates. Hence the source of voxelization error is approximating the distance between two points in 3D with the distance between the centers of the corresponding boxes to which those points belong. This error is written next to the boxes in~\Cref{fig:voxel_boxes}. The maximum additional error due to voxelization is proportional to the voxel sizes $\gamma$ and its proportion is upper-bounded by $2$.

\begin{figure}[!ht]
    \centering
    \includegraphics[width=\linewidth/2]{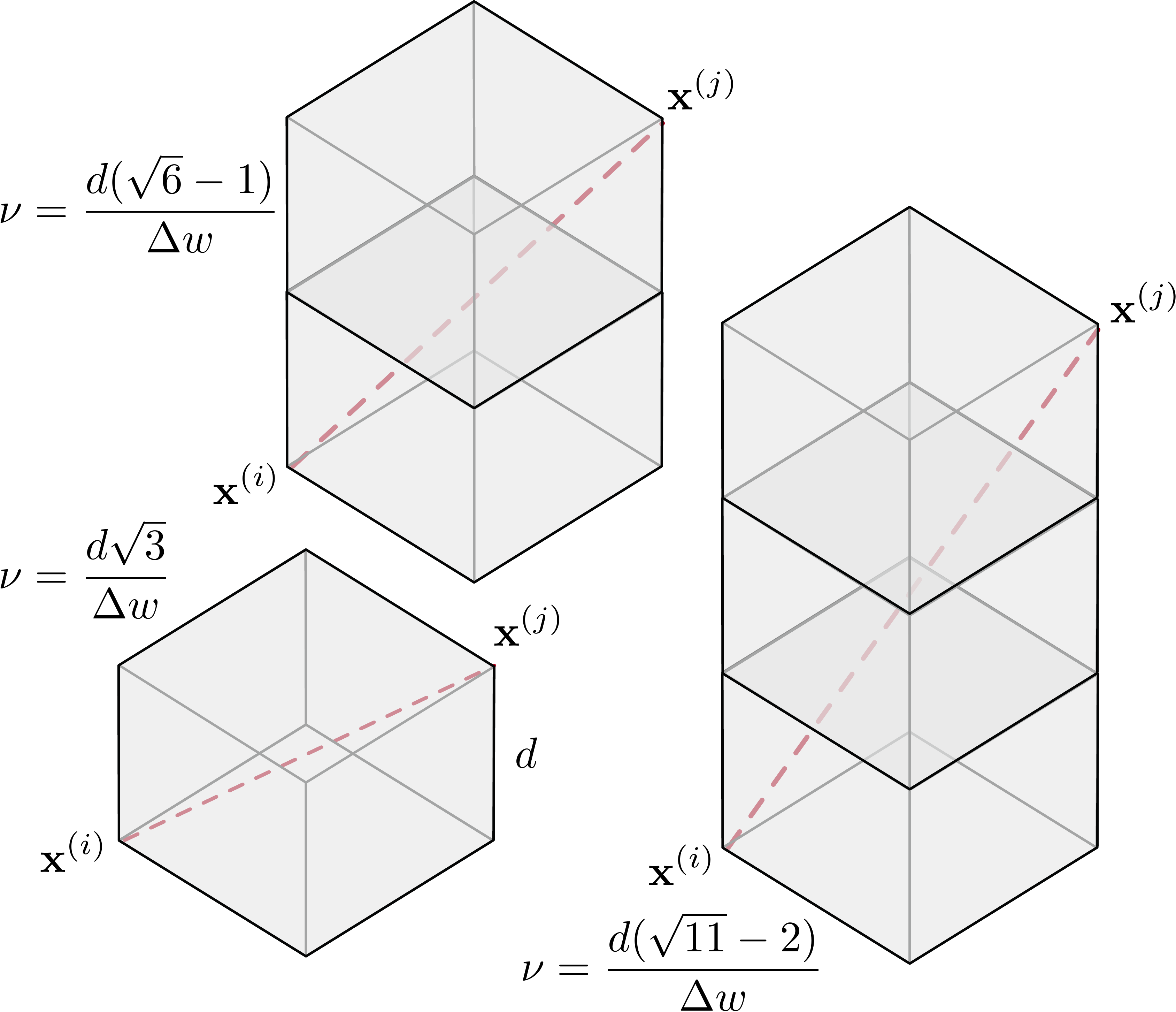}
    \caption{The maximum potential error in the estimated gradients when the space is voxelized. As can be seen, the error vanishes when the corresponding voxels to $\xb^{(i)}$ and $\xb^{(j)}$ are far from each other.}
    \label{fig:voxel_boxes}
\end{figure}

\section{Extended set of solutions to the real world challenges}
\subsection{Detecting zero crossing}
\label{sec:detecting_zero_crossing}
In this method, we take advantage of special landmarks in the trajectories. The landmarks are typically caused by the physical constraints of the system. For example, when a robot's leg touches the ground, the velocity of the leg becomes zero. Likewise, when a joint reaches its physical limit, the velocity of the connected arm to the joint becomes zero or changes signs. In both cases, a zero-crossing occurs that can be used as a landmark to synchronize lagged trajectories with a reference trajectory. Even though this method will eliminate the temporal noise, it requires the presence of such landmarks along the trajectories. Notice that from a mathematical point of view, there is nothing special about \emph{zero}. We can pick any value of states along a reference trajectory and synchronize all other trajectories with respect to it. However, in practice, physical landmarks are easier to detect and have less ambiguity that consequently giving a more accurate synchronization.

\section{Perturbation methods}
\label{sec:shaking_detailed}
{\it Gaussian Perturbation---} Likely values of $\thetab$ create nominal policies encoded by $\{\thetab^{(1)}, \thetab^{(2)}, \ldots, \thetab^{(m)}\}$. We put Gaussian distributions centered at each of the nominal values resulting in a mixture of Gaussians. To reduce the hyper-parameters, we assume the variances of the Gaussians are themselves sampled from an exponential distribution making sure they all take positive values (See~\Cref{fig:shaking} left). Here, we manually choose a reasonable value for the rate parameter of the exponential distribution. Making inference on the hyper-parameters of the sampling distributions can be a topic for future research, especially in active learning for a more clever less costly sampling strategy. \\
{\it Uniform Perturbation---} In this setting, the state space of the changeable parameters of the policy is discretized and a uniform distribution is assumed around each value of this grid with some overlapping with the neighboring cells (See~\Cref{fig:shaking} right).

\section{Experimental details}
Starting position in all the experiments is $(\frac{\pi}{2}, \frac{\pi}{2}, \pi ) $. Task's overall details are as following:
\begin{table}[htb!]
	\centering
	\begin{tabular}{lccc}
	\toprule
	    Task & number of trajectories & timesteps\\
	    \hline
        Linear (N) & 640 & 1500 \\
		PD controller(N) &   640  &  1500    \\
		PD controller(U) &  1000  &   1500   \\
		Sine 1 joint(N) & 640 & 5000 \\
		Sine 1 joint(U) & 1000 & 5000 \\
		Sine 2 joints(U) &  640   &   5000    \\
		Sine 2 joints(N) &  1000   &   5000    \\
	\bottomrule
	\end{tabular}
\vspace{0.3cm}
\label{tab:overall}
%\vspace{-0.8cm}
\end{table}
\\
In normal sampling cases, we ran 10 simulations for each set of $\lambda$ parameters which indicates noise level.
\subsection{Linear}
\begin{equation}
	u_{it} = w_i t + b_i\;\;\;\mathrm{for}\;\;\; i=1,2,3
\end{equation}
\subsubsection{Gaussian Sampling}

\begin{center}
    $ w_i = W_i + \epsilon_{w, i}\;\;\;\mathrm{for}\;\;\;
	i=1,2,3 $
	$$ \epsilon_{w,i} \sim N(0, e_{w} \times \lVert W_i \rVert_2) $$
	$$ e_{w} \sim exp(\lambda_{w}) \;\;\;\mathrm{for}\;\;\; \lambda_{w}=1,5,10,50,100,500,1000, 5000 $$
\end{center}
\vspace{0.1pt}
\begin{center}
    $ b_i = B_i + \epsilon_{b, i}\;\;\;\mathrm{for}\;\;\;
	i=1,2,3 $
	$$ \epsilon_{b,i} \sim N(0, e_{b} \times \lVert B_i \rVert_2) $$
	$$ e_{b} \sim exp(\lambda_{b}) \;\;\;\mathrm{for}\;\;\; \lambda_{b}=1,5,10,50,100,500,1000, 5000 $$
\end{center}
\vspace{0.2pt}
\begin{center}
    $ W = [0.00001, 0.0001, -0.00001], B = [-0.28, -0.15, -0.08]$
\end{center}

\subsection{PD Controller}
Final destination is $(\frac{\pi}{10}, 3\frac{\pi}{4}, 7\frac{\pi}{12})$
\subsubsection{Gaussian Sampling}
\begin{center}
    $ kp = KP + \epsilon $
	$$ \epsilon_{kp} \sim N(0, e_{kp} \times \lVert KP \rVert) $$
	$$ e_{kp} \sim exp(\lambda_{kp}) \;\;\;\mathrm{for}\;\;\; \lambda_{kp}=1,5,10,50,100,500,1000, 5000 $$
\end{center}
\begin{center}
    $ kd = KD + \epsilon $
	$$ \epsilon_{kd} \sim N(0, e_{kd} \times \lVert KD \rVert) $$
	$$ e_{kd} \sim exp(\lambda_{kd}) \;\;\;\mathrm{for}\;\;\; \lambda_{kd}=1,5,10,50,100,500,1000, 5000 $$
\end{center}

\subsubsection{Uniform Sampling}
\begin{center}
    $ kp \sim U(-0.5, 1.5), KP = 1$
\end{center}
\begin{center}
    $ kd = KD = 0.01$
\end{center}
\subsection{Sine 1 joint}
\subsubsection{Gaussian Sampling}
\begin{center}
    $ w = W + \epsilon $
	$$ \epsilon_{w} \sim N(0, e_{w} \times \lVert W \rVert) $$
	$$ e_{w} \sim exp(\lambda_{w}) \;\;\;\mathrm{for}\;\;\; \lambda_{w}=1,5,10,50,100,500,1000, 5000 $$
\end{center}
\begin{center}
    $ a = A + \epsilon $
	$$ \epsilon_{a} \sim N(0, e_{a} \times \lVert A \rVert) $$
	$$ e_{a} \sim exp(\lambda_{a}) \;\;\;\mathrm{for}\;\;\; \lambda_{a}=1,5,10,50,100,500,1000, 5000 $$
\end{center}
\begin{center}
    $ W = 0.01, B = 0.5$
\end{center}

\subsubsection{Uniform Sampling}
\begin{center}
    $ w \sim U(0.005, 0.015), a = A = 0.5$
\end{center}

\subsection{Sine 2 joints}
\subsubsection{Gaussian Sampling}
\begin{center}
    $ w_i = W_i + \epsilon \;\;\;\mathrm{for}\;\;\; i=1,2$
	$$ \epsilon_{w,i} \sim N(0, e_{w} \times \lVert W \rVert_2) $$
	$$ e_{w} \sim exp(\lambda_{w}) \;\;\;\mathrm{for}\;\;\; \lambda_{w}=1,5,10,50,100,500,1000, 5000 $$
\end{center}
\begin{center}
    $ a_i = A_i + \epsilon \;\;\;\mathrm{for}\;\;\; i=1,2$
	$$ \epsilon_{a, i} \sim N(0, e_{a} \times \lVert A \rVert_2) $$
	$$ e_{a} \sim exp(\lambda_{a}) \;\;\;\mathrm{for}\;\;\; \lambda_{a}=1,5,10,50,100,500,1000, 5000 $$
\end{center}
\begin{center}
    $ W = [0.01, 0.01], A = [-0.4, 0.5]$
\end{center}

\subsubsection{Uniform Sampling}
\begin{center}
    $ w_i \sim U(0.005, 0.015)\;\;\;\mathrm{for}\;\;\; i=1,2, a = A = 0.5$
\end{center}

\subsection{GP Score:}
\label{sec:gp_score}

Definition of the GP score:
The score is defined as $(1 - u/v)$, where u is the residual sum of squares $\Sigma (y_\mathrm{true} - y_\mathrm{pred})^2$ and $v$ is the total sum of squares $\Sigma (y_\mathrm{true} - \mathrm{mean}(y_\mathrm{true}))^2$. The best possible score is 1.0.

\subsection{Zero-shot planning task:}
\label{sec:zero-shot-planning_exp_details}

For the task of~\Cref{sec:zeroshot_planning}:
Number of training trajectories: $100$ each with $1500$ time steps

Kd = $0.01$

Kp = Uniformly sampled from $[0.2, 0.6]$

Initial point: $X_\circ = [\pi/2, \pi/2, \pi])$

desired position = $[\pi / 10, 3 * \pi / 4, 7 * \pi / 12]$

\newpage

\section{More results}
\label{sec:more_results}
In this section, the results of the extra experiments that were eliminated from the main text due to the space limit are presented. 

The following figures show GP models trained by a set of directional derivatives collected during the perturbation phase. The results are provided for the experiments of~\Cref{sec:nonlinear_openloop_controller,sec:feedback_controller}.

\begin{figure}[h!]
    \centering
    \includegraphics[width=0.81\textwidth]{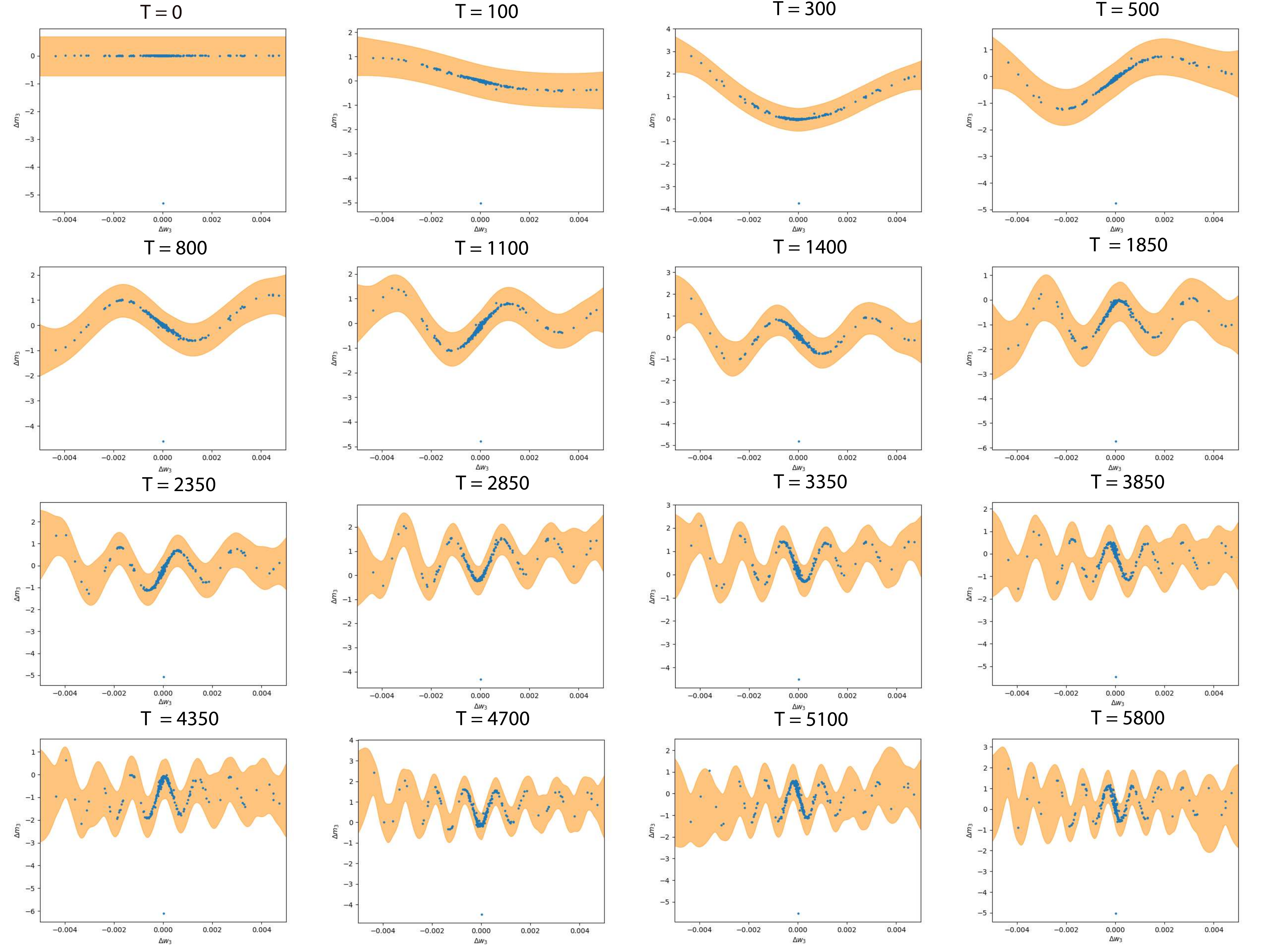}
    \caption{The time evolution of the learned GP models from directional derivatives for $\partial x_3 / \partial k_p$ by Gaussian sampling (Sine 1 joint).}
    \label{fig:sine_joint3_normal_dim3_voxel=0}
\end{figure}

\begin{figure}[t!]
    \centering
    \includegraphics[width=0.81\textwidth]{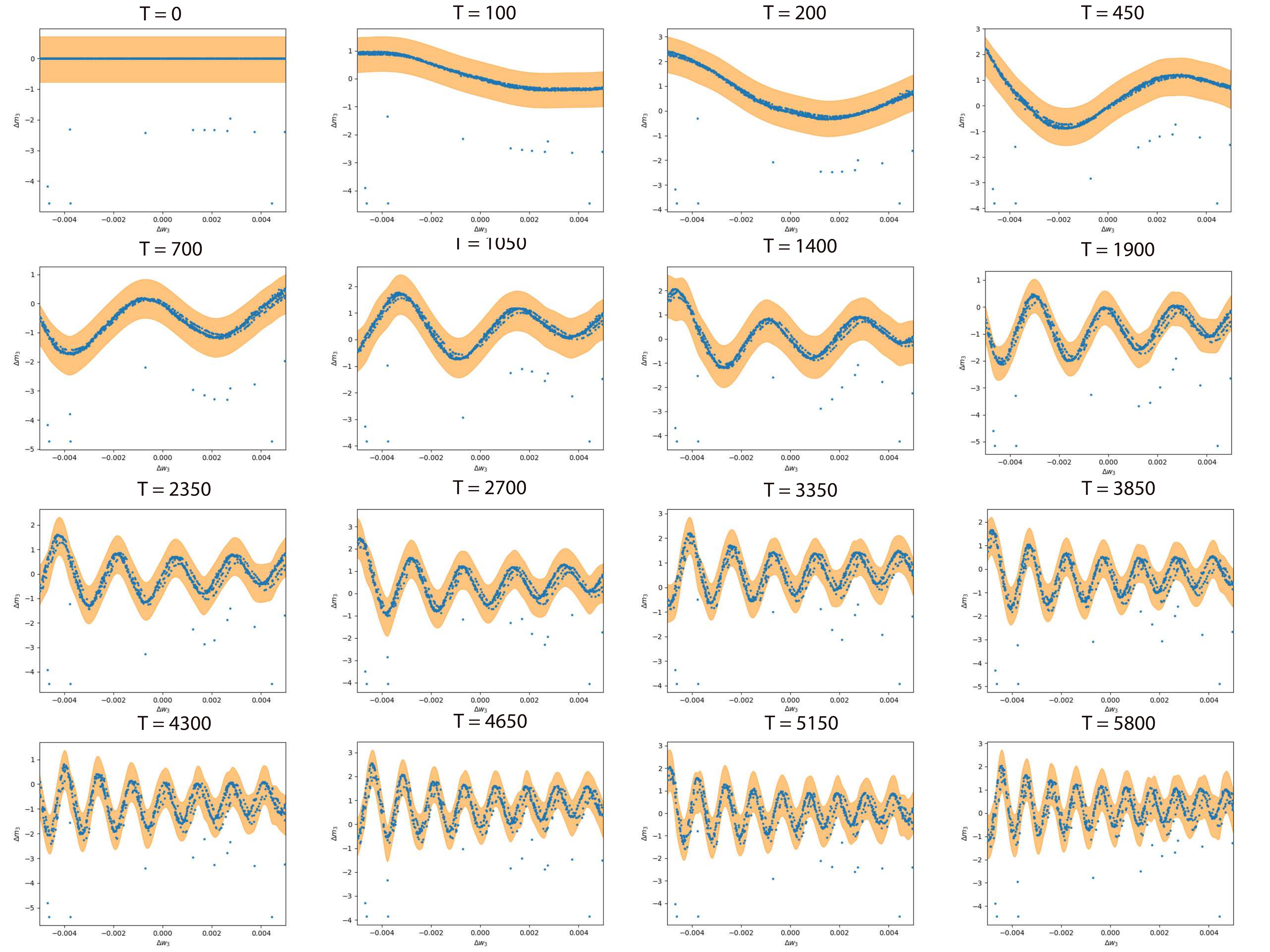}
    \caption{The time evolution of the learned GP models from directional derivatives for $\partial x_3 / \partial k_p$ by uniform sampling (Sine 1 joint).}
    \label{fig:sine_joint3_uniform_dim3_voxel=0}
\end{figure}

\begin{figure}[t!]
    \centering
    \includegraphics[width=0.81\textwidth]{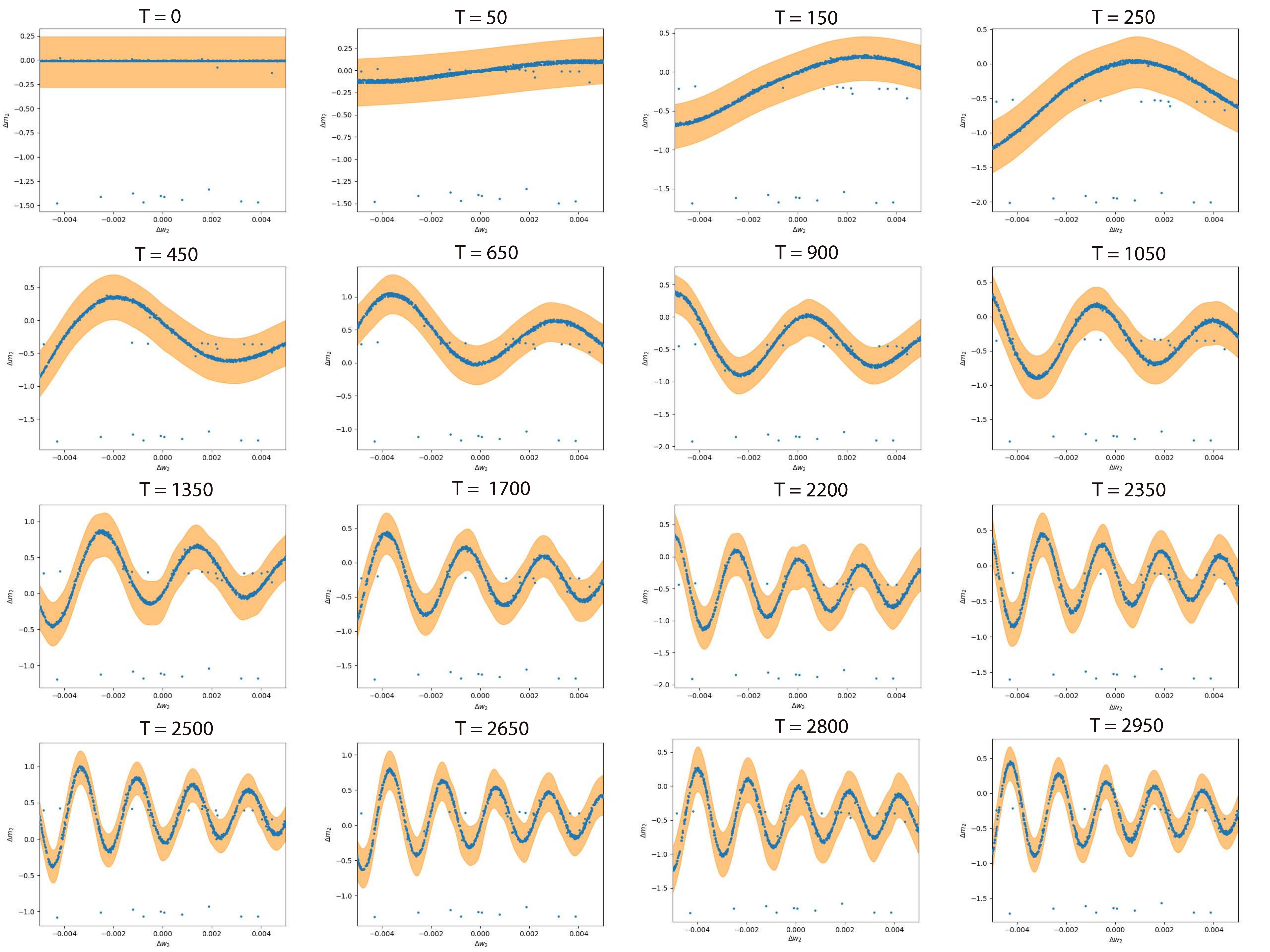}
    \caption{The time evolution of the learned GP models from directional derivatives for $\partial x_2 / \partial k_p$ by uniform sampling (Sine 2 joints).}
    \label{fig:sine_joint23_uniform_dim2_voxel=0}
\end{figure}

\begin{figure}[t!]
    \centering
    \includegraphics[width=0.81\textwidth]{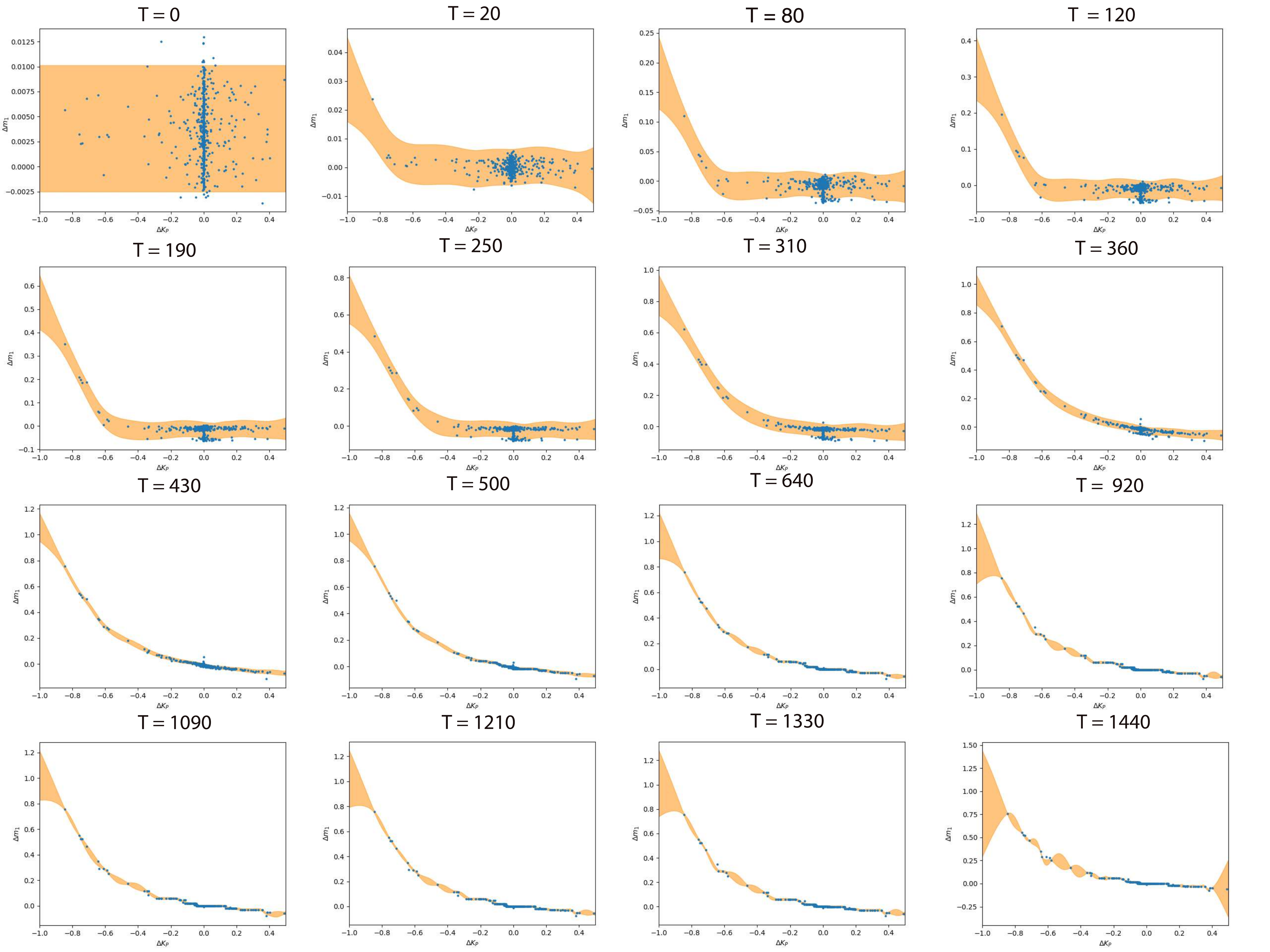}
    \caption{The time evolution of the learned GP models from directional derivatives for $\partial x_1 / \partial k_p$ by Gaussian sampling (PD controller).}
    \label{fig:pd_normal_dim1_voxel=0}
\end{figure}

\begin{figure}[t!]
    \centering
    \includegraphics[width=0.81\textwidth]{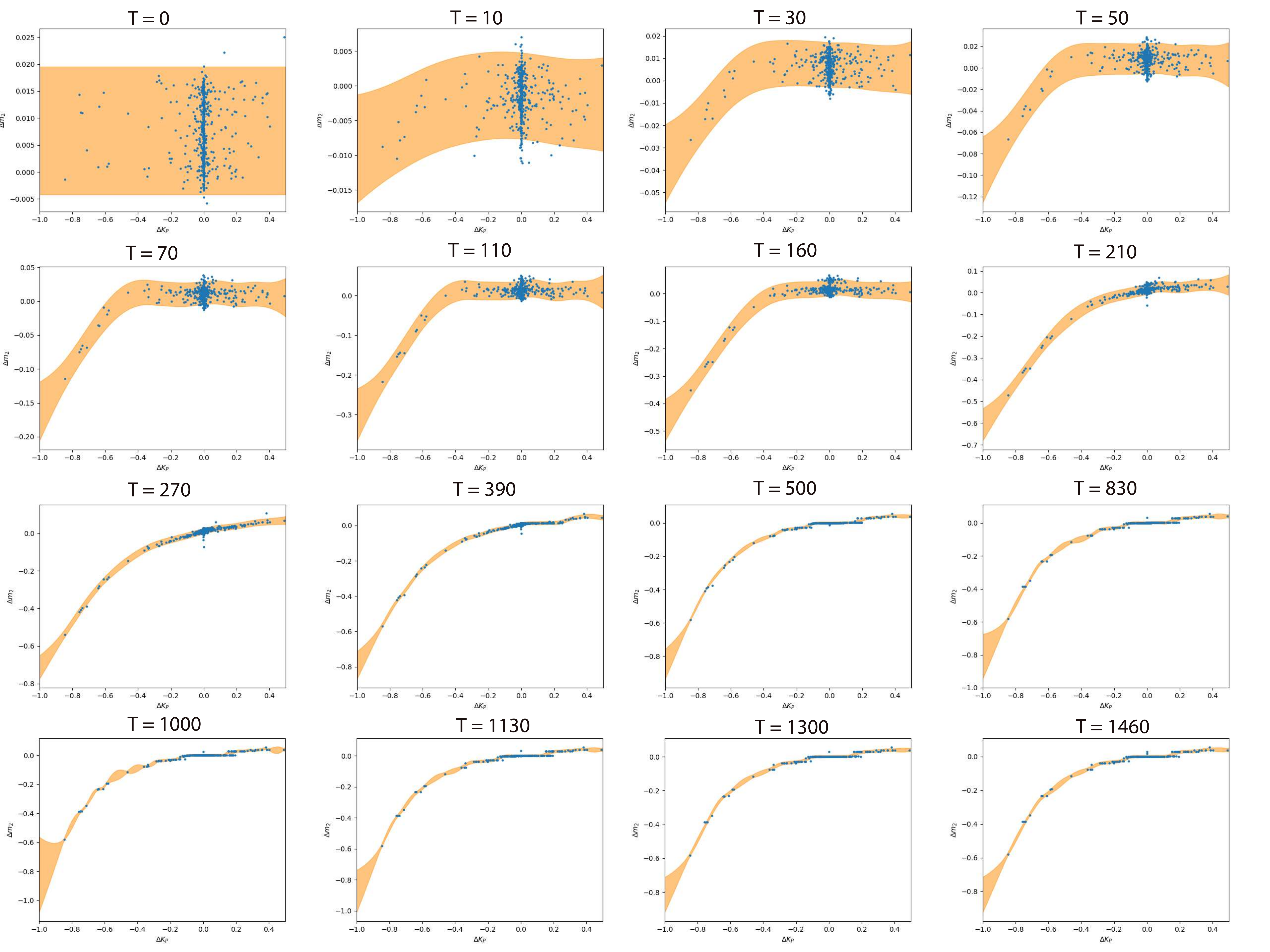}
    \caption{The time evoltuion of the learned GP models from directional derivatives for $\partial x_2 / \partial k_p$ by Gaussian sampling (PD controller).}
    \label{fig:pd_normal_dim2_voxel=0}
\end{figure}

\begin{figure}[t!]
    \centering
    \includegraphics[width=0.81\textwidth]{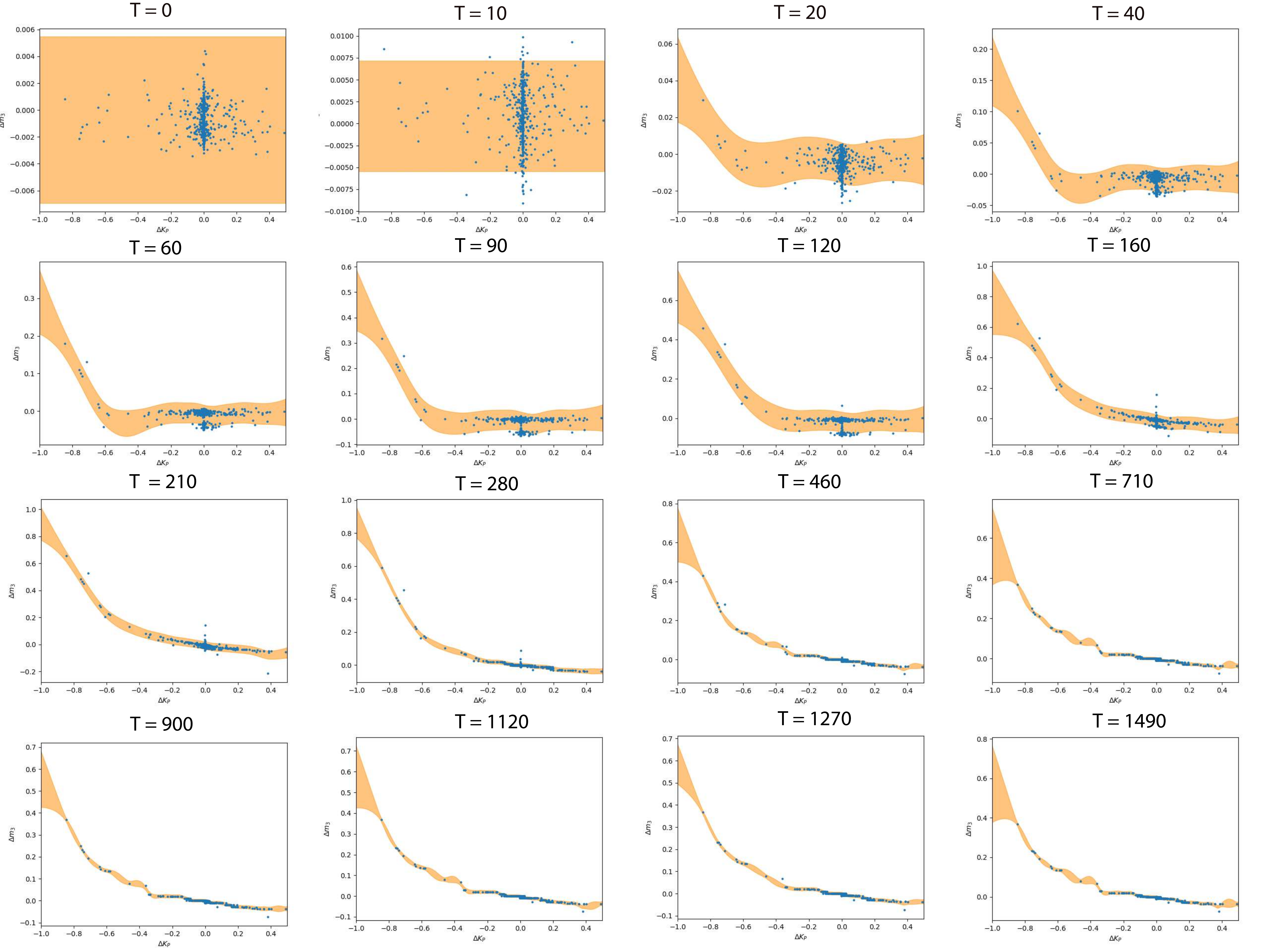} 
    \caption{The time evolution of the learned GP models from directional derivatives for $\partial x_3 / \partial k_p$ by Gaussian sampling (PD controller).}
    \label{fig:pd_normal_dim3_voxel=0}
\end{figure}

\begin{figure}[t!]
    \centering
    \includegraphics[width=0.81\textwidth]{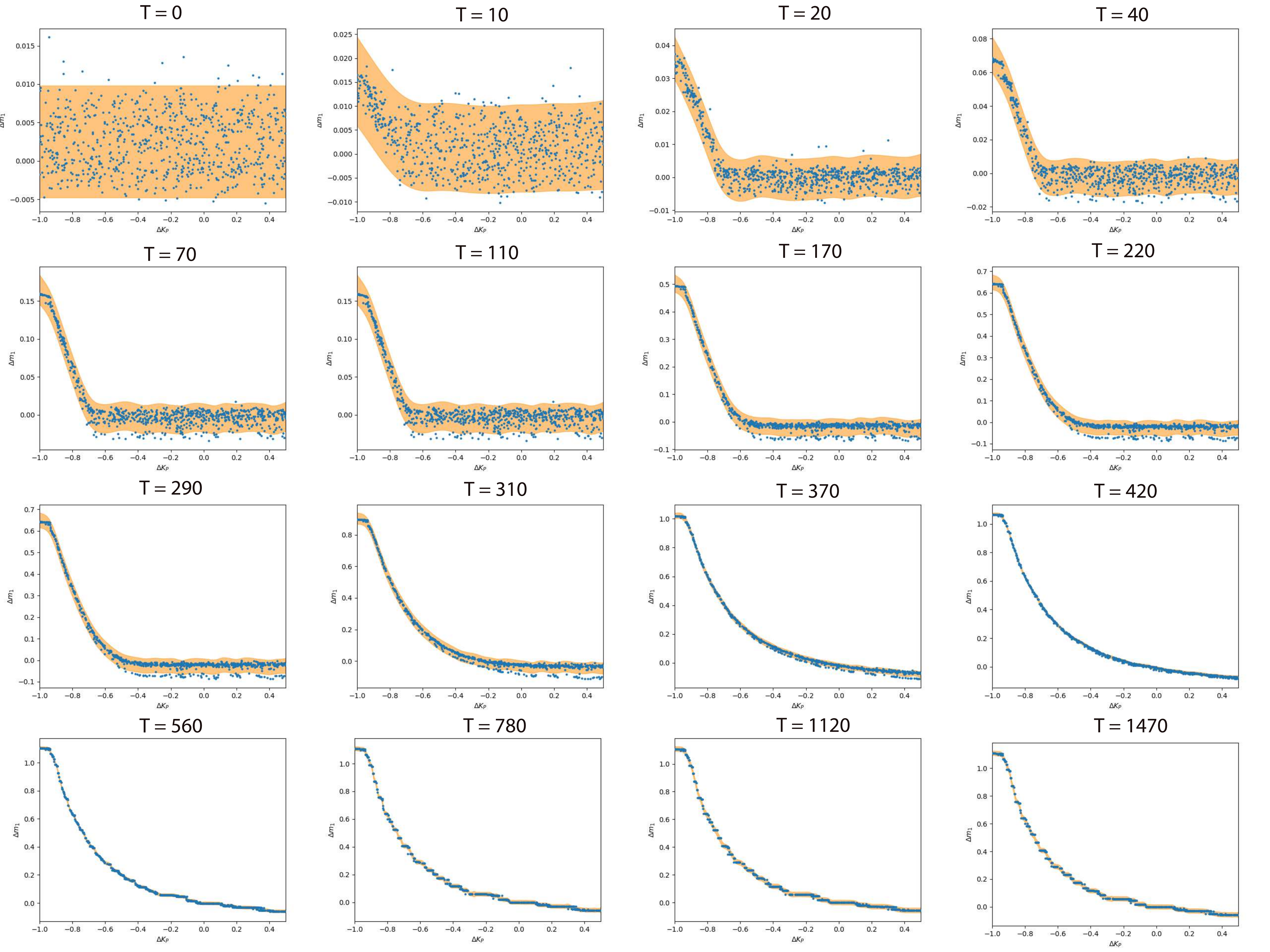}
    \caption{The time evolution of the learned GP models from directional derivatives for $\partial x_1 / \partial k_p$ by uniform sampling (PD controller).}
    \label{fig:pd_uniform_dim1_voxel=0}
\end{figure}

\begin{figure}[t!]
    \centering
    \includegraphics[width=0.81\textwidth]{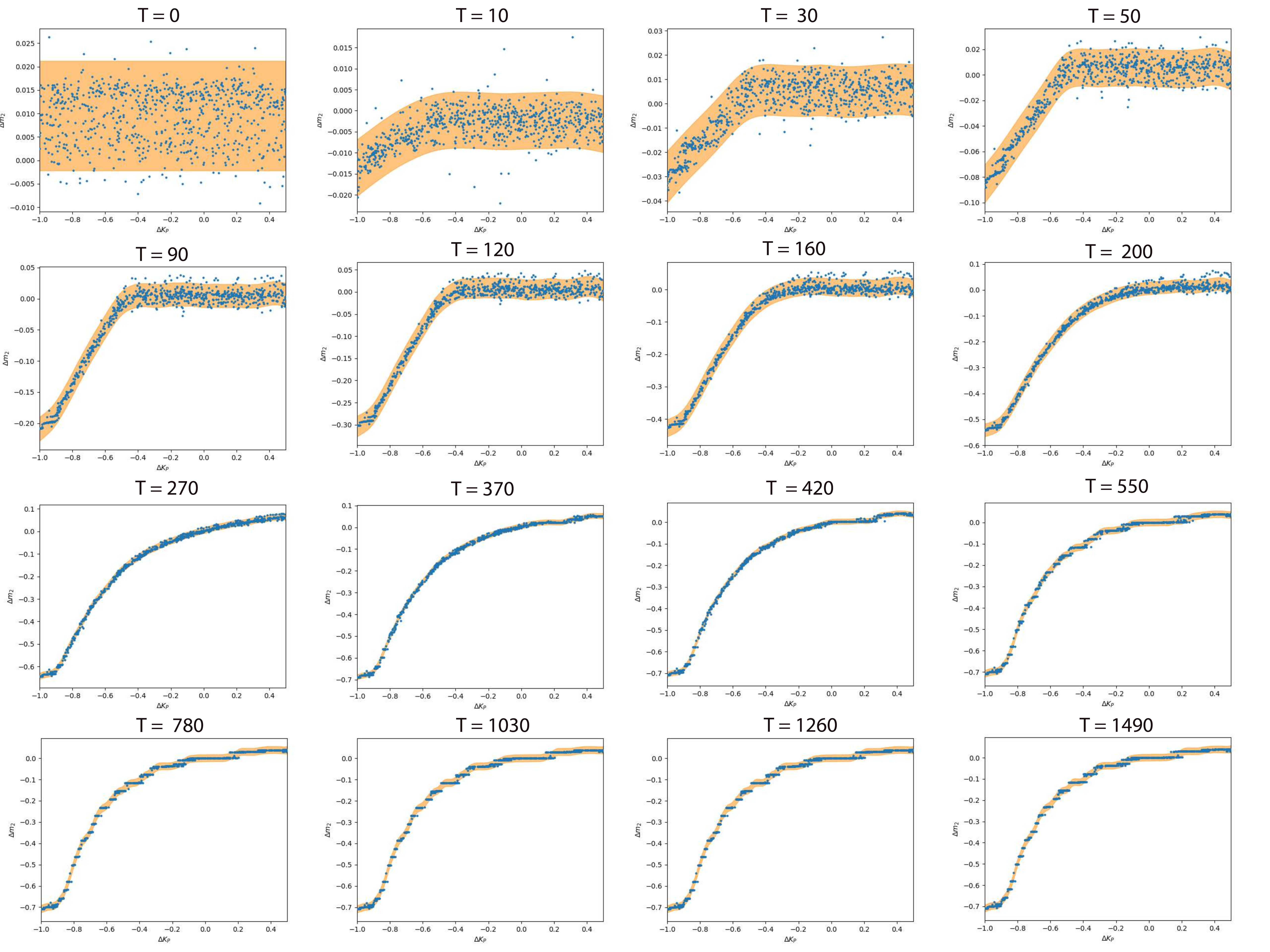}
    \caption{The time evolution of the learned GP models from directional derivatives for $\partial x_2 / \partial k_p$ by uniform sampling (PD controller).}
    \label{fig:pd_uniform_dim2_voxel=0}
\end{figure}

\begin{figure}[t!]
    \centering
    \includegraphics[width=0.81\textwidth]{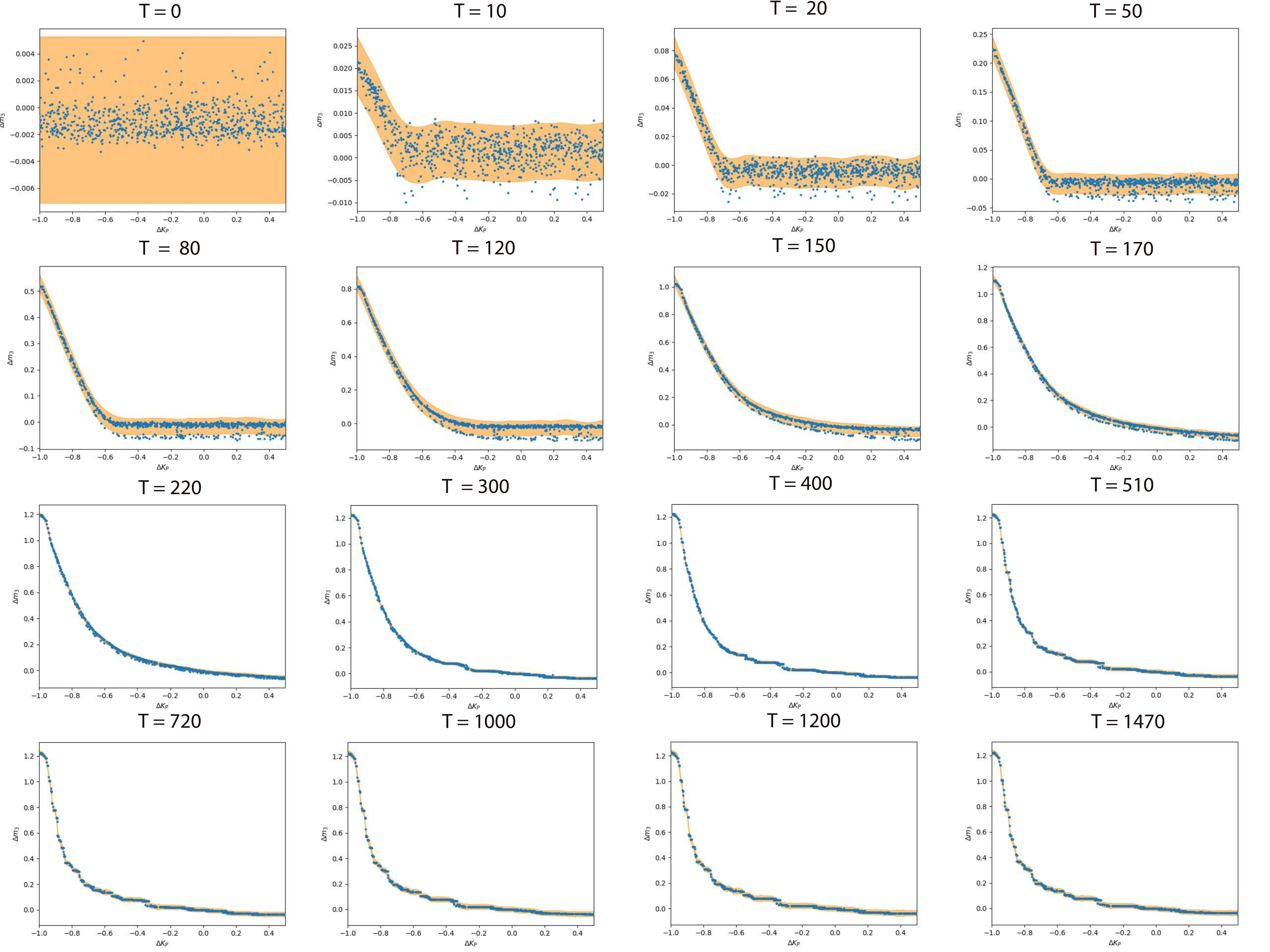}
    \caption{The time evolution of the learned GP models from directional derivatives for $\partial x_3 / \partial k_p$ by uniform sampling (PD controller).}
    \label{fig:pd_uniform_dim3_voxel=0}
\end{figure}

\begin{figure}[t!]
    \centering
    \includegraphics[width=0.77\textwidth]{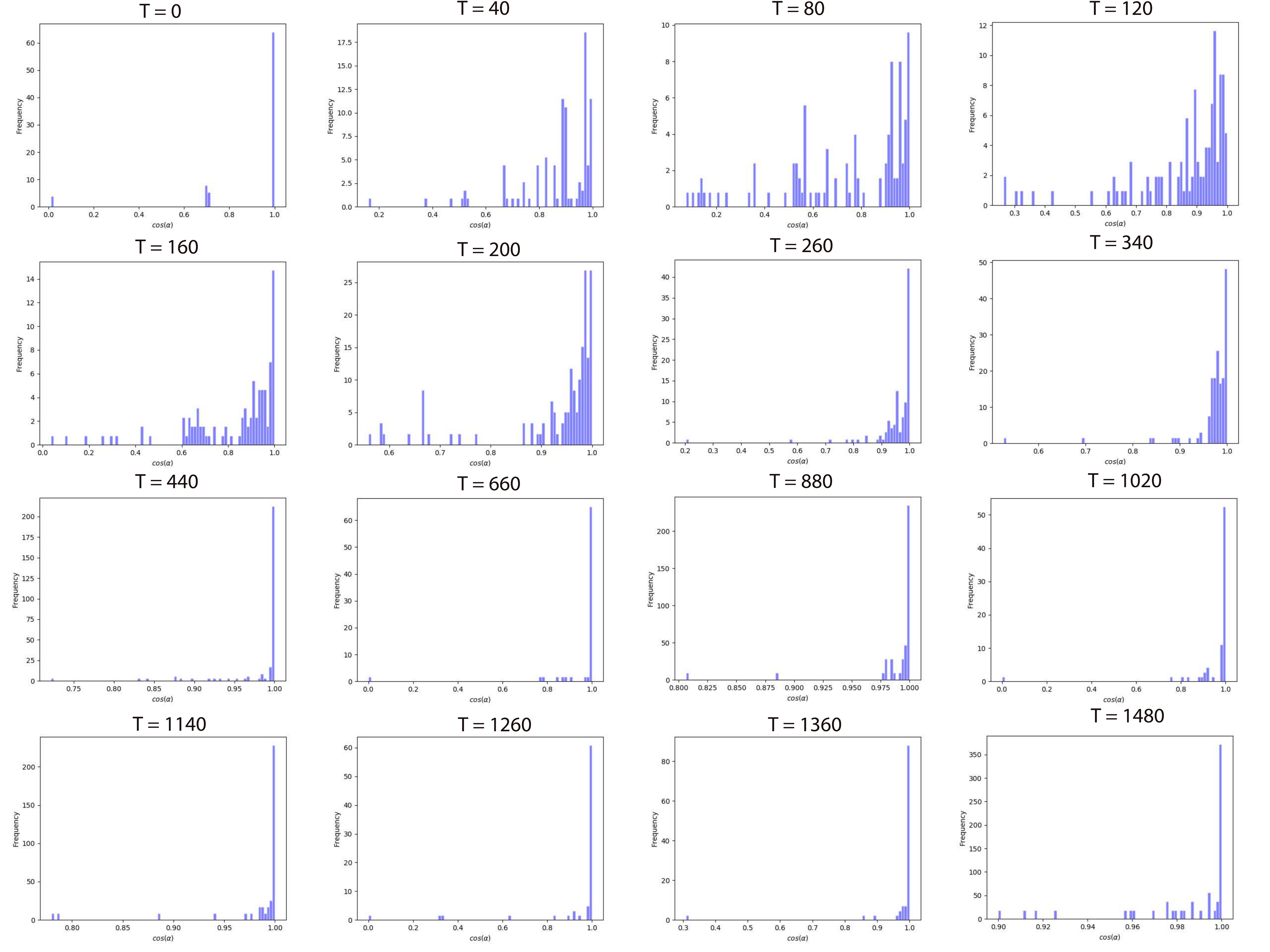}
    \caption{The time evolution of the histogram of $\cos(\alpha)$ where $\alpha$ is the angle between the true and predicted directional derivative. The perturbations in the training phase are generated by Gaussian sampling. As shown, angles of the derivatives are predicted mostly correctly (PD controller).}
    \label{fig:hist_pd_normal_voxel=0.01}
\end{figure}

\begin{figure}[t!]
    \centering
    \includegraphics[width=0.77\textwidth]{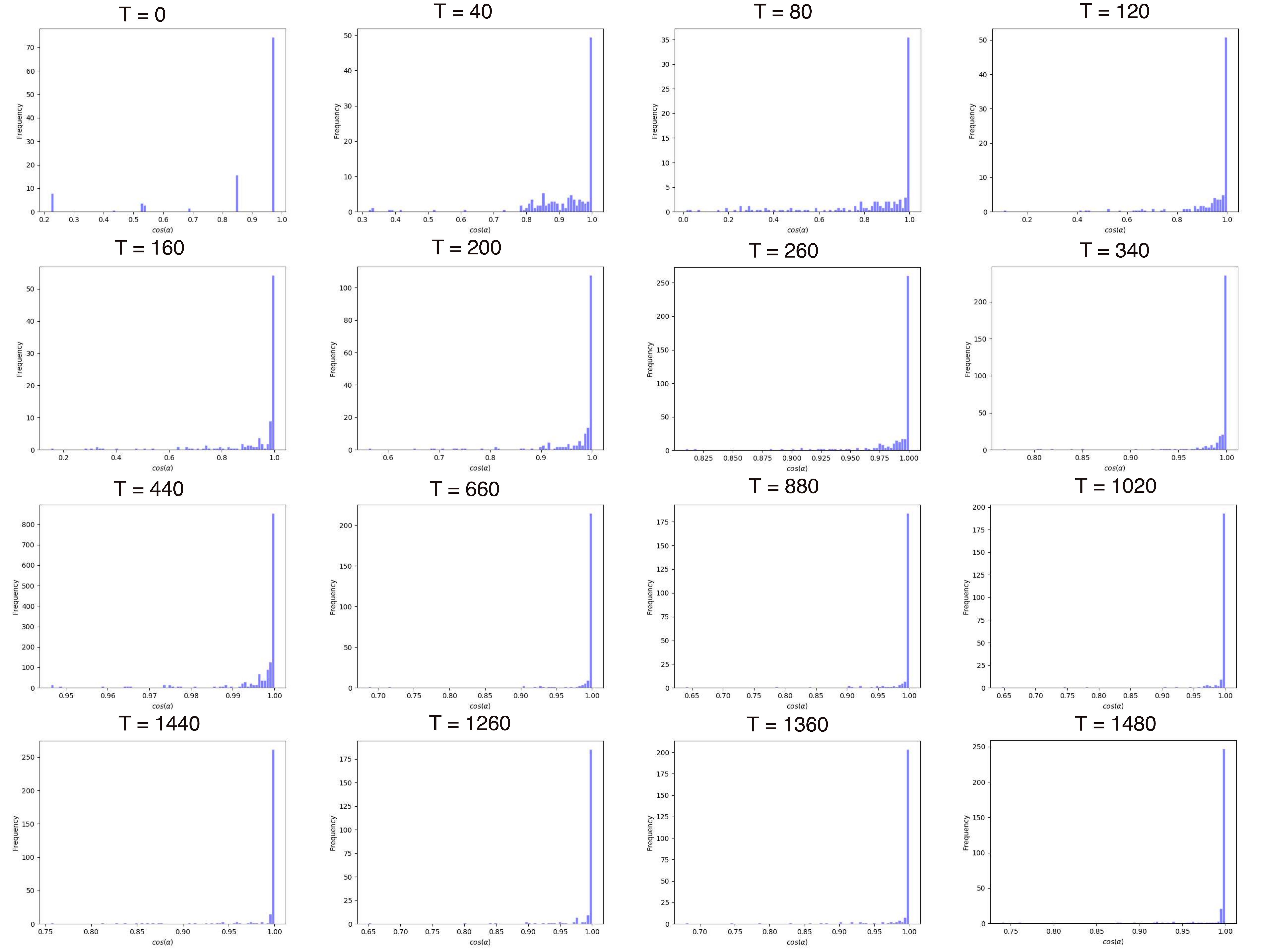}
    \caption{The time evolution of the histogram of $\cos(\alpha)$ where $\alpha$ is the angle between the true and predicted directional derivative. The perturbations in the training phase are generated by uniform sampling. As shown, angles of the derivatives are predicted mostly correctly (PD controller).}
    \label{fig:hist_pd_uniform_voxel=0.01}
\end{figure}

\begin{figure}[t!]
    \centering
    \includegraphics[width=0.77\textwidth]{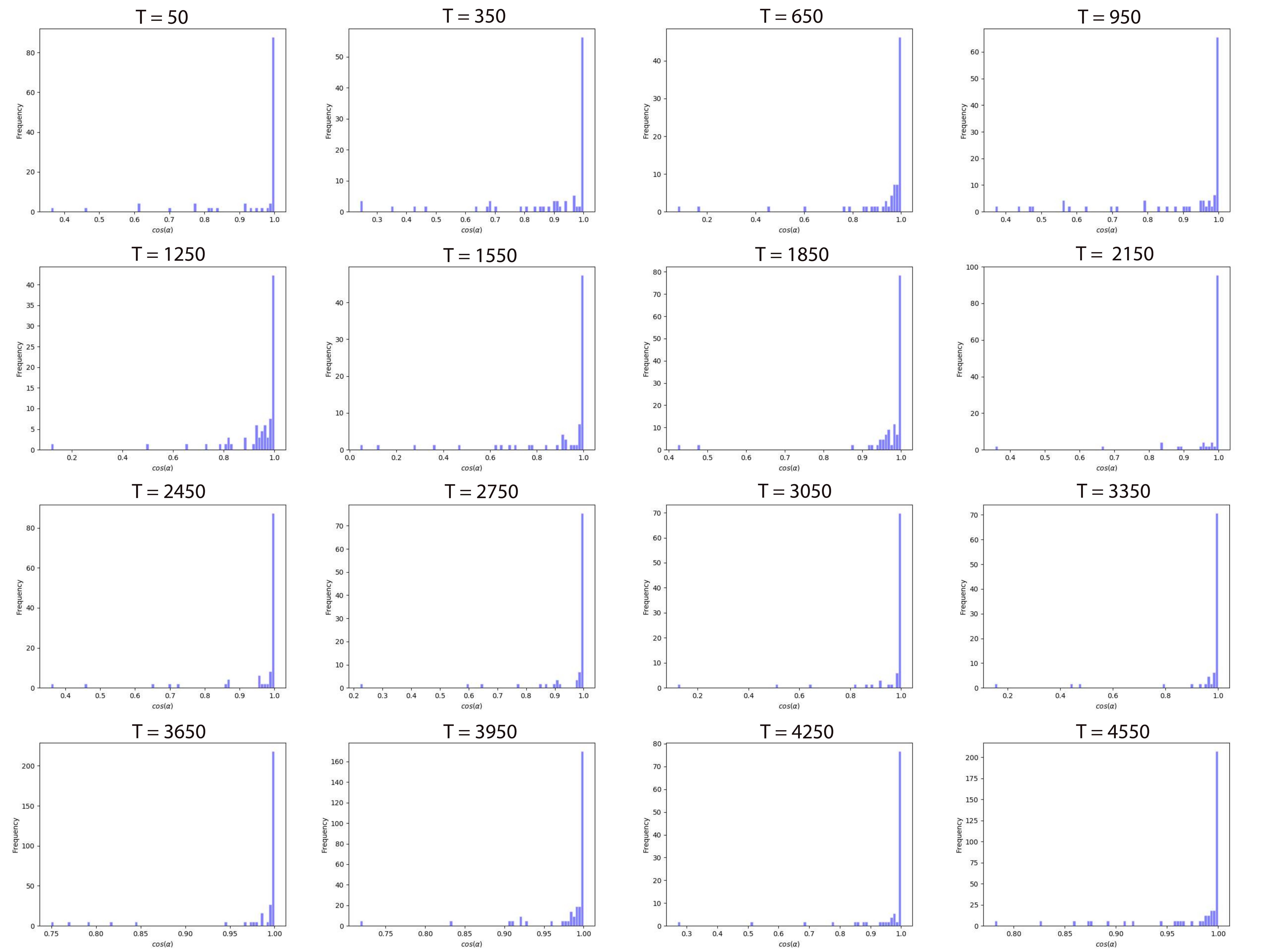}
    \caption{The time evolution of the histogram of $\cos(\alpha)$ where $\alpha$ is the angle between the true and predicted directional derivative. The perturbations in the training phase are generated by Gaussian sampling. As shown, angles of the derivatives are predicted mostly correctly (Sine 2 joints).}
    \label{fig:hist_sine_joint23_normal_voxel=0.01}
\end{figure}

\begin{figure}[t!]
    \centering
    \includegraphics[width=0.77\textwidth]{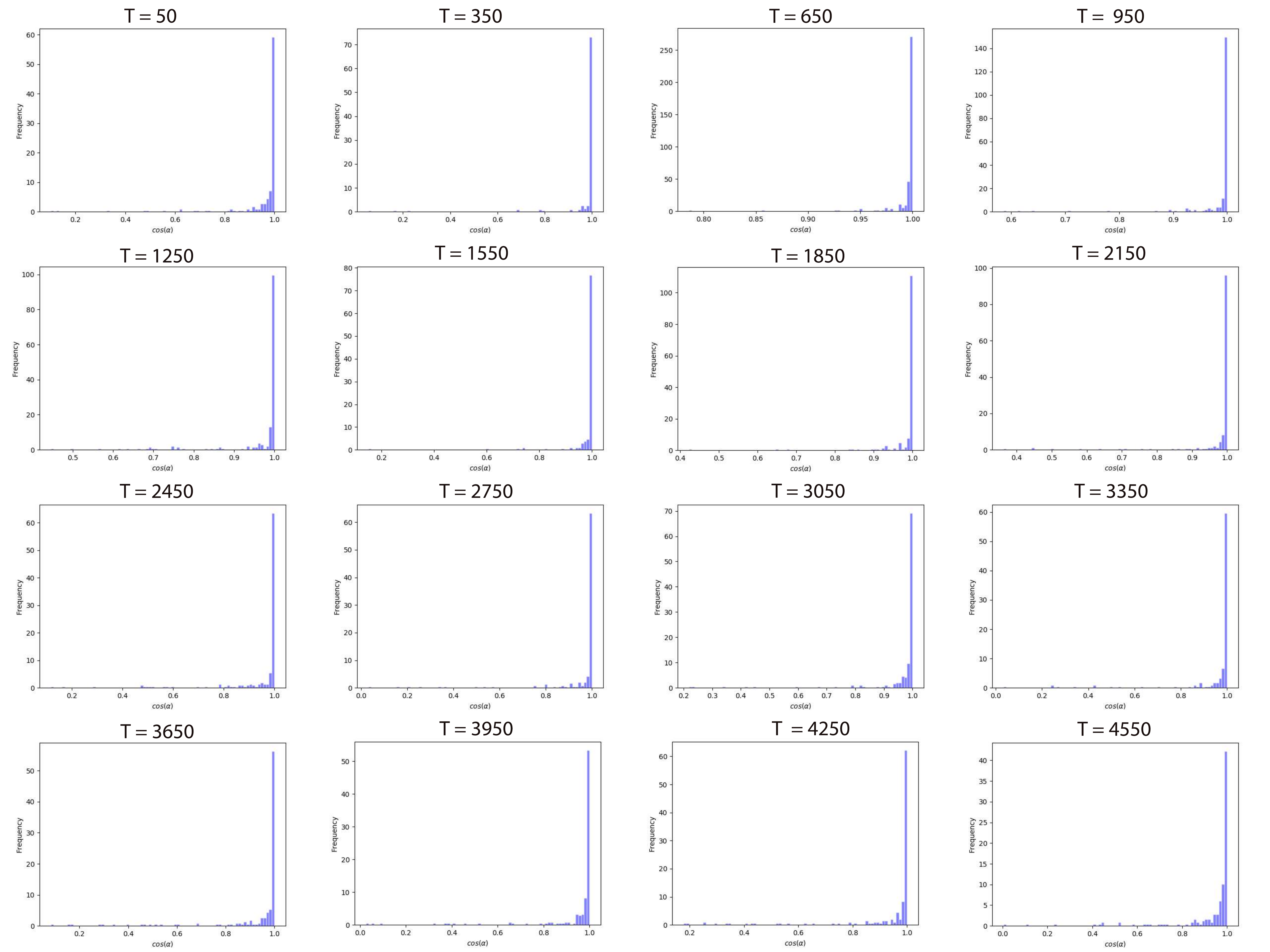}
    \caption{The time evolution of the histogram of $\cos(\alpha)$ where $\alpha$ is the angle between the true and predicted directional derivative. The perturbations in the training phase are generated by uniform sampling. As shown, angles of the derivatives are predicted mostly correctly (Sine 2 joints).}
    \label{fig:hist_sine_joint23_uniform_voxel=0.01}
\end{figure}

\begin{figure}[h!]
    \centering
    \begin{subfigure}[t]{1.35in}
        \centering
        \includegraphics[width=1.445in]{figs/experiments/quiver/w1w1_0,9.png}
        \caption{}        
    \end{subfigure}
    \begin{subfigure}[t]{1.35in}
        \centering
        \includegraphics[width=1.445in]{figs/experiments/quiver/w1w1_1,2.png}
        \caption{}    
   \end{subfigure}
   \begin{subfigure}[t]{1.35in}
        \centering
        \includegraphics[width=1.445in]{figs/experiments/quiver/w1w1_5,6.png}
        \caption{}    
   \end{subfigure}
   \begin{subfigure}[t]{1.35in}
        \centering
        \includegraphics[width=1.445in]{figs/experiments/quiver/w10w10_5,6.png}
        \caption{}    
    \end{subfigure}

    \begin{subfigure}[t]{1.35in}
        \centering
        \includegraphics[width=1.445in]{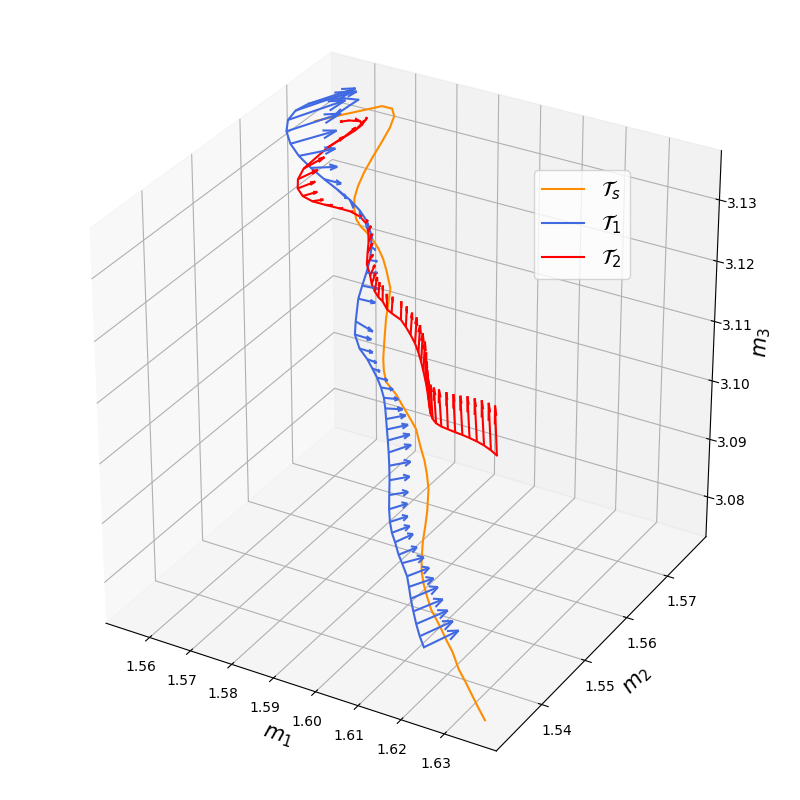}
        \caption{}        
    \end{subfigure}
    \begin{subfigure}[t]{1.35in}
        \centering
        \includegraphics[width=1.445in]{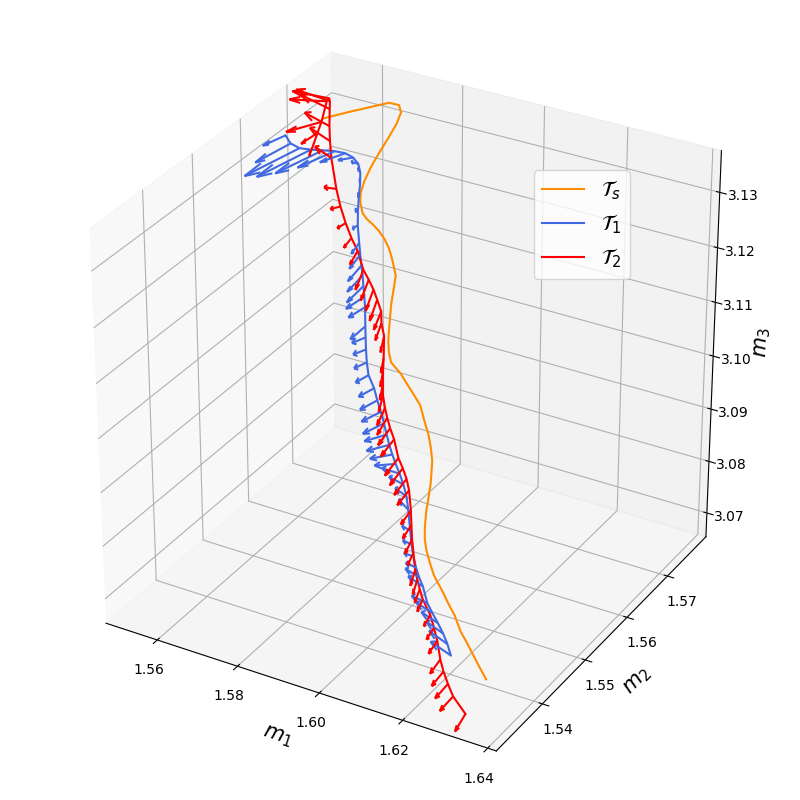}
        \caption{}    
   \end{subfigure}
   \begin{subfigure}[t]{1.35in}
        \centering
        \includegraphics[width=1.445in]{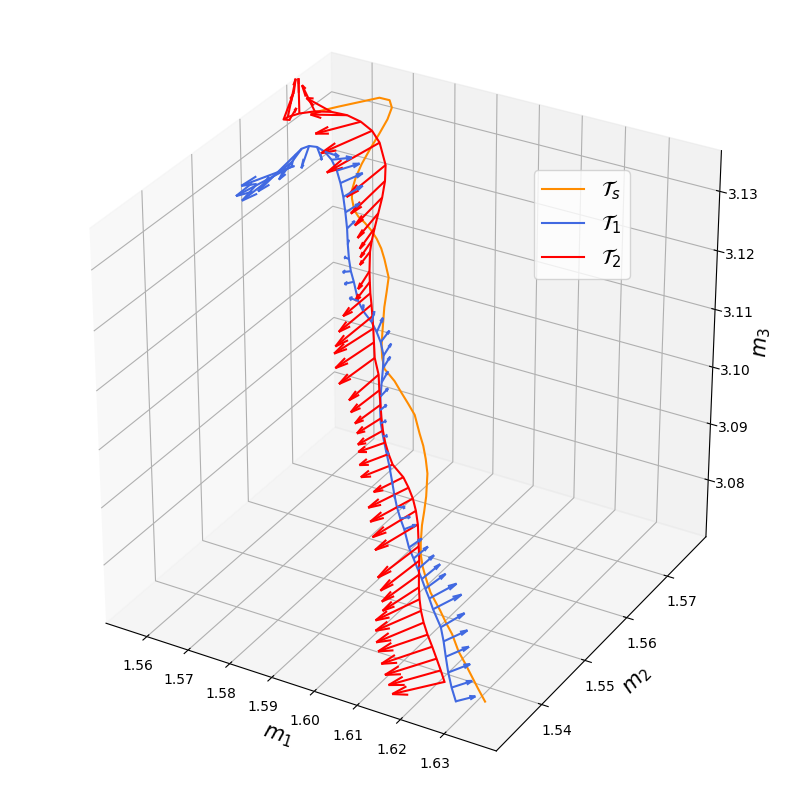}
        \caption{}    
   \end{subfigure}
   \begin{subfigure}[t]{1.35in}
        \centering
        \includegraphics[width=1.445in]{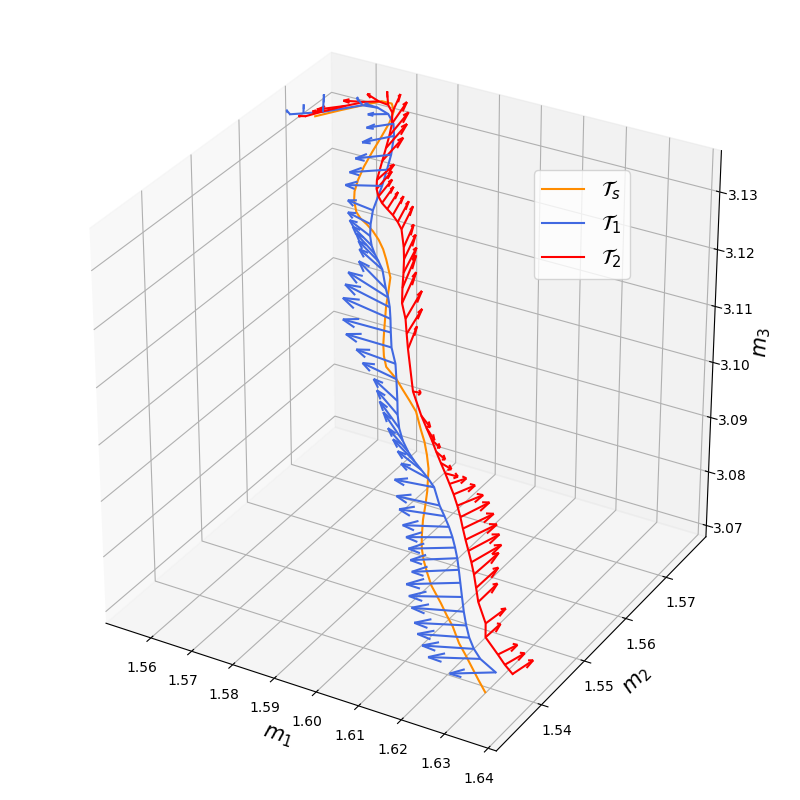}
        \caption{}    
    \end{subfigure}

    \begin{subfigure}[t]{1.35in}
        \centering
        \includegraphics[width=1.445in]{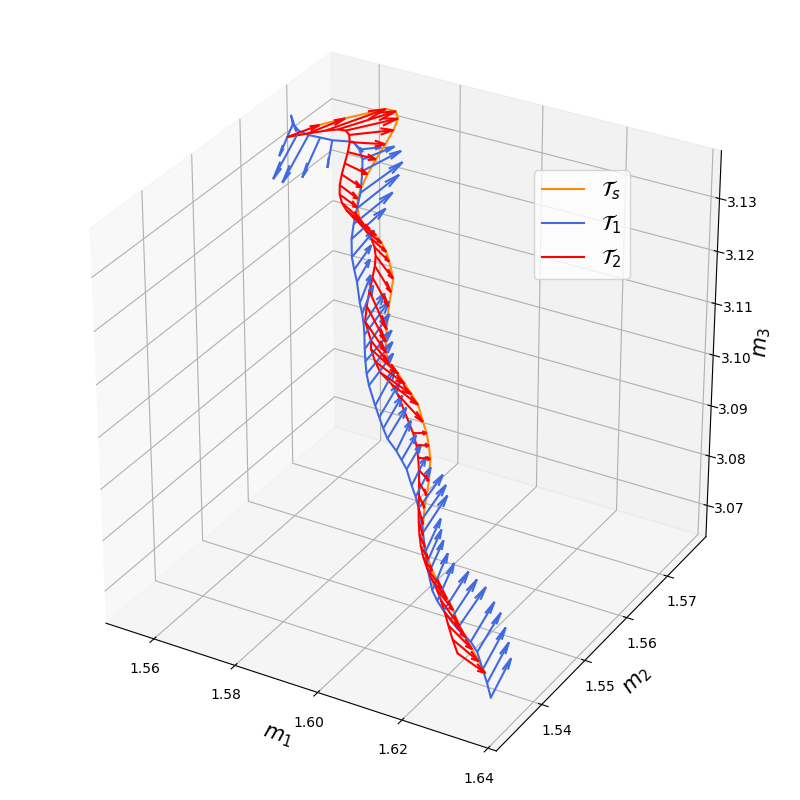}
        \caption{}        
    \end{subfigure}
    \begin{subfigure}[t]{1.35in}
        \centering
        \includegraphics[width=1.445in]{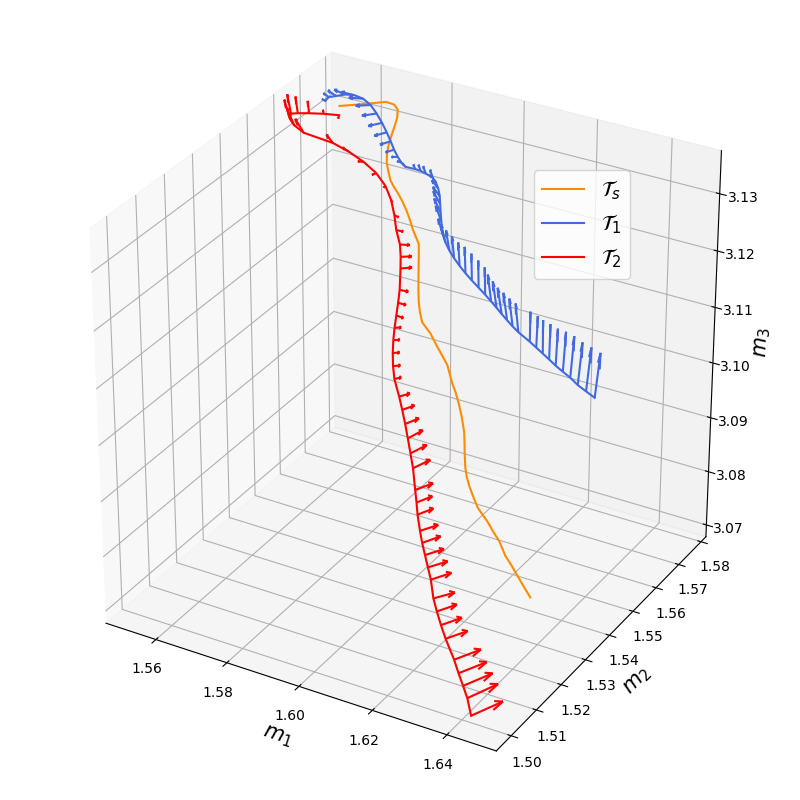}
        \caption{}    
   \end{subfigure}
   \begin{subfigure}[t]{1.35in}
        \centering
        \includegraphics[width=1.445in]{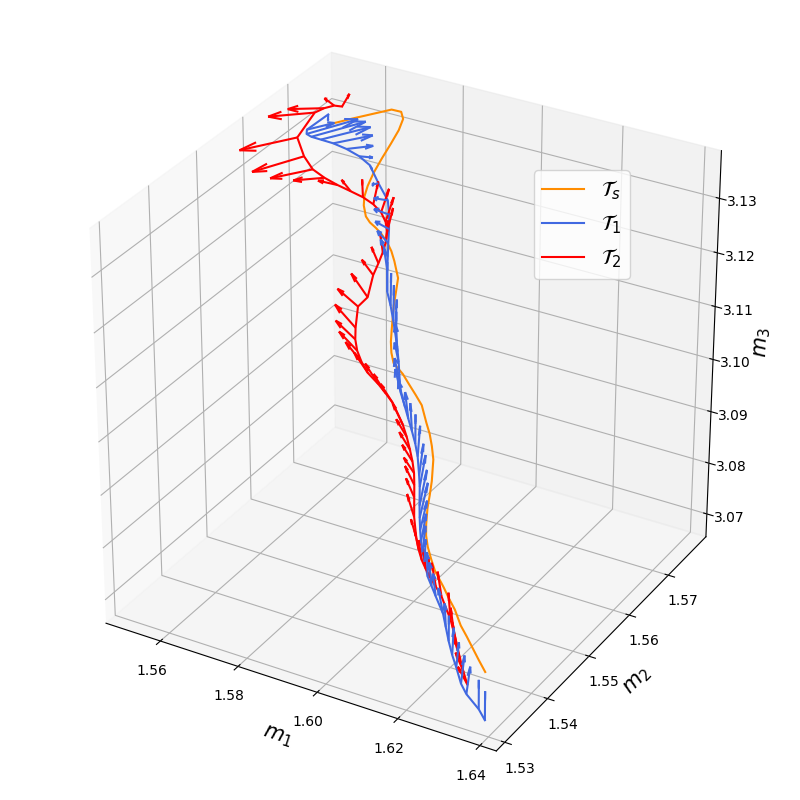}
        \caption{}    
   \end{subfigure}
   \begin{subfigure}[t]{1.35in}
        \centering
        \includegraphics[width=1.445in]{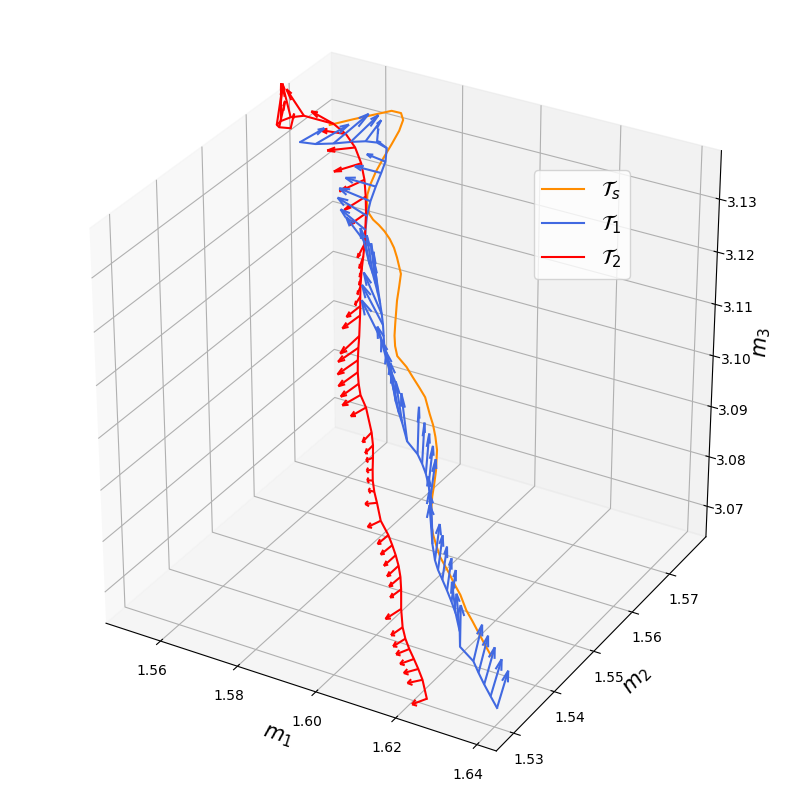}
        \caption{}    
    \end{subfigure}
    
    \caption{Examples of trajectories produced by the perturbed controller and the computed derivatives along the trajectory. The arrows are plotted as they originate from the perturbed trajectories only for easier distinction. Each arrow corresponds to the change of the states at a certain time step on the source trajectory as a result of perturbing the policy. Each figure corresponds to a pair of nominal values of $\{w, b\}$ for the linear open-loop controller of~\Cref{sec:linear_openloop_controller} and the perturbed trajectories are produced by Gaussian sampling (The orange trajectory is the source trajectory $\Tcal_s$ and others are the perturbed ones $\Tcal_1, \Tcal_2$. The quivers on the perturbed trajectories represent calculated physical derivatives).}\label{fig:quivers_more}
\end{figure}

\end{document}